\newtheorem{lemma}{Lemma}
\newtheorem{corollary}{Corollary}
\newtheorem{theorem}{Theorem}
\newtheorem{definition}{Definition}
\newtheorem{proposition}{Proposition}
\newtheorem{fact}{Fact}
\newcommand\bx{\boldsymbol{x}}
\newcommand\by{\boldsymbol{y}}
\newcommand\bz{\boldsymbol{z}}
\newcommand\btheta{\boldsymbol{\theta}}
\newcommand\bu{\boldsymbol{u}}
\newcommand\bv{\boldsymbol{v}}
\newcommand\E{\mathop{\mathbb{E}}}
\author{
Zihan Zhang\\
Tsinghua University \\
\texttt{zihan-zh17@mails.tsinghua.edu.cn} \\
\and
Xiangyang Ji \\
Tsinghua University \\
\texttt{xyji@tsinghua.edu.cn}
\and
Yuan Zhou\footnote{Corresponding Author.} \\ 
%Paul G. Allen School of Computer Science \& Engineering % \\
Tsinghua University \\
%Seattle, WA 98195 \\
\texttt{yuan-zhou@tsinghua.edu.cn}
}
\title{Almost Optimal Batch-Regret Tradeoff for Batch Linear Contextual Bandits}
\begin{document}
\maketitle

\begin{abstract}
We study the optimal batch-regret tradeoff for batch linear contextual bandits. For both \emph{context-blind} and \emph{context-aware} settings, we design batch learning algorithms and prove that they achieve the optimal regret bounds (up to logarithmic factors) for
any batch number $M$, number of actions $K$, time horizon $T$, and dimension $d$. Therefore, we establish the \emph{full-parameter-range} (almost) optimal batch-regret tradeoff for the batch linear contextual bandit problem. 

Compared to the recent work \citep{ruan2020linear} which showed that $M = O(\log \log T)$ batches (in the context-blind setting) suffice to achieve the asymptotically minimax-optimal regret without the batch constraints, our algorithm is simpler and easier for practical implementation. Furthermore, our algorithm achieves the optimal regret for all $T \geq d$, while \citep{ruan2020linear} requires that $T$ greater than an unrealistically large polynomial of $d$.

Along our analysis, we also prove a new matrix concentration inequality with dependence on their dynamic upper bounds, which, to the best of our knowledge, is the first of its kind in literature and maybe of independent interest.

\end{abstract}

\section{Introduction}

Online learning and decision-making is an important aspect of machine learning. In contrast to the traditional batch machine learning where the learner only passively observes the data, an online learner may interact with the data collection process by deciding on which data point to query about. On one hand, sequentially making active queries may fully utilize the power of adaptivity based on the observed data and help to achieve better data efficiency. On the other hand, in many practical scenarios, it is also desirable to limit these queries to a small number of rounds of interaction, which helps to increase the parallelism of the learning process, and reduce the management cost and the total time span. In light of this, the \emph{batch online learning} model, which is a combination of the two major aspects of machine learning, has recently attracted much research attention. It has been shown that for many popular online learning tasks, a very small number of batches may achieve nearly minimax-optimal learning performance, and therefore it is possible to enjoy the benefits of both adaptivity and parallelism. 

To understand the impact of the batch constraint to online learning and decision-making problems, in this paper, we study the optimal batch algorithms for the \emph{linear contextual bandit} problem, where the latter is a central problem in online learning literature. In a linear contextual problem, the learning algorithm observes a \emph{context} (also referred to as a \emph{context set} as we usually have one context vector for each candidate action) at the beginning of each time period, and the expected reward of each candidate action is determined by a hidden linear function of the context. The learning algorithm has to learn the linear function and maximize its total reward overall all time periods. The readers may refer to Section~\ref{sec:pre} for the detailed problem definition. The linear contextual bandit problem is widely studied due to its simplicity and abstraction (via the context) for the personalized treatment in decision-making, enabling plenty of real-world applications such as advertisement selection, recommendation systems, and clinical trials.

\paragraph{Context-blind Batch Learning.} One practical reason that calls for the batch online learning model is due to the \emph{expensive policy deployment and communication cost}. Large-scale online advertisement or recommendation systems \citep{li2010contextual} may have to take a long time or pay an expensive overhead cost to update their policies. In many distributed or offline scenarios (such as designing and updating the policy for autonomous vehicles or robotic arms), such an overhead cost becomes even more significant. During the execution of the policy, we usually may not be able to monitor the process (e.g., the data collected from the environment) in real time, due to the high communication cost and latency in the distributed (or large-scale) system. In such cases, the learning algorithm seeks to minimize the number of policy updates. In the batch online learning model, we refer to the time of each policy update as the beginning of a \emph{batch}. The learning algorithm may decide the policy only based on the data collected from the previous batches, and the data collected from the current batch may only be available at the end of the batch. Due to the non-real-time natural, the learning algorithm also has to decide the length of each batch when the batch begins, without any information from the current batch. 

We formalize the above intuition and define the  context-blind batch complexity of the linear contextual bandit problem as follows. Note that the \emph{context-blind setting} means that the policy of each batch may not depend on the contexts observed in the same batch, due to the communication cost and latency.

\begin{definition}[Batch complexity in the context-blind setting]\label{def_batch_RL} 
We first formally define that a \emph{policy} $\pi$ is a mapping from the context space to the set of the distributions over all candidate actions. To \emph{execute} a policy $\pi$ at time step $t$ means to randomly sample and commit to an action from the distribution $\pi(X_t)$ where $X_t$ is the context observed at time $t$.

For a linear contextual bandit problem with time horizon $T$, we say that the \emph{batch complexity} of a learning algorithm is (at most) $M$ in the \emph{context-blind setting}, if the learner decides $T_1$ and $\pi_1$ before the learning process starts, and executes $\pi_1$ in the first $T_1$ time steps (which corresponds the first batch). 
Based on the data (context sets, played actions and the rewards) obtained from the first $T_1$ steps, the learner then decides $T_2$ and $\pi_2$, and executes $\pi_2$ for $T_2$ time steps (the second batch). The learner repeats the process for $M$ times/batches. In general, at the beginning of the $k$-th batch, the learner decides $T_k$ (the size of the batch) and $\pi_k$ based on the data collected from the first $(k-1)$ batches. The batch sizes should satisfy that $\sum_{k=1}^M T_k = T$. 
\end{definition}

\paragraph{Context-aware Batch Learning.} In many other practical scenarios, the batch online learning model is needed only because of the \emph{delayed reward observation}. A typical example of batch online learning is clinical trials \citep{lei2012smart,almirall2012designing,almirall2014introduction}, as all clinical trials run in phases (which exactly correspond to the batches) in order to parallelize the time-consuming treatment experiments. %People only have the freedom to design the trial before each phase, and the entire phase has to be carried out following the fixed trial scheme. 
Similar example arises in crowdsourcing where it takes significant time to interact with the crowd and the queries may be aggregated into a small number of batches for the crowd to answer so as to save the total time cost. In these examples, at the beginning of each batch, the learner may additionally observe the contexts of the current batch (in the clinical trial example, these contexts correspond to the information of the patients recruited in the current phase). 
In light of this, we define the context-aware batch complexity where the learner may choose the policy based on this additional context information.

\begin{definition}[Batch complexity in the context-aware setting]\label{def_batch_RL-context-aware} 
For a linear contextual bandit problem with time horizon $T$, we say that the \emph{batch complexity} of a learning algorithm is (at most) $M$ in the \emph{context-aware setting}, if at the beginning each batch $k \in \{1, 2, \dots, M\}$, the learner decides the batch size $T_k$, and then observes the context sets of the future $T_k$ time steps. After that, the learner decides the policy $\pi_k$ based on the data obtained from the previous $(k-1)$ batches and the context sets of the future $T_k$ time steps, and executes $\pi_k$ at all these $T_k$ time steps. The batch sizes should satisfy that $\sum_{k=1}^M T_k = T$. 
\end{definition}

Naturally, there is a tradeoff between the batch complexity and the regret performance in batch online learning. There have been quite a few recent works studying such relationship for multi-armed bandits \citep{perchet2016batched,gao2019batched,Esfandiari_Karbasi_Mehrabian_Mirrokni_2021}. In multi-armed bandits, because there is no context information, the context-blind and context-aware settings defined above are identical. \cite{gao2019batched} proved optimal regret bound for every number of batches and show that $M = O(\log \log T)$ batches suffice to achieve the minimax-optimal regret without the batch constraint.

\cite{han2020sequential,ruan2020linear} recently studied the batch algorithms for linear contextual bandits. While \cite{han2020sequential} studied a special case of the problem where the contexts follow Gaussian-type distributions, \cite{ruan2020linear} provided an algorithm for all context distributions. In particual, \cite{ruan2020linear} showed that in the context-blind setting, $M = O(\log \log T)$ batches suffice to achieve the minimax-optimal regret without the batch constraint, which implies the same asymptotic bound for the context-aware setting.\footnote{Both the $O(\log \log T)$ bound in \citep{ruan2020linear} and our work (as well as \citep{han2020sequential}) focus on the \emph{stochastic-context case} of linear contextual bandits, which is the most technically interesting and practically useful setting of batch linear contextual bandits. Please refer to Section~\ref{sec:related-linear-bandits} for more discussion.}

While the exact batch-regret tradeoff for multi-armed bandits is relatively better understood (and arguably easier to study), the optimal tradeoff curve for linear contextual bandits is more challenging and remains open.

In this work, we address this question on the exact batch-regret curve and prove the optimal regret (up to logarithmic factors) for batch linear contextual bandits for the \emph{full range} of the problem parameters in both context-blind and context-aware settings. Compared to previous work \citep{ruan2020linear}, our algorithm is simpler to describe and therefore easier to be practically implemented. Our algorithm is optimal for all $M \geq 1$ and  works for a much wider range of $T$, while in comparison, \citep{ruan2020linear} requires that $T$ greater than an impractically large polynomial of the dimension parameter $d$. Below, we provide a more concrete summary of our contributions and comparison with the related works.

\subsection{Our Contributions and Technical Ingredients}

\subsubsection{Optimal Context-blind Batch-Regret Tradeoff} 

We first study the optimal batch algorithm for stochastic linear contextual bandits in the context-blind setting.

Recall that \cite{gao2019batched} showed that for multi-armed bandits, the optimal regret using at most $M$ batches is at the order of $T^{\frac{1}{2-2^{-M+1}}}$ (ignoring the polynomial dependence on other problem parameters and the poly-logarithmic dependence on $T$). For batch linear contextual bandits in the context-blind, we establish a similar (but slightly trickier) tradeoff. More specifically, we design Algorithm~\ref{alg:main} (in Section~\ref{sec:alg}), and prove the following regret upper bound.
\begin{theorem}\label{thm:main} Let  $d$ be the dimension of the feature space and $K$ be the number of candidate arms.
For any $T\geq d$ and $M\geq 1$,  Algorithm~\ref{alg:main} may use at most $M$ batches in the context-blind setting and its regret $R_{T}$ is bounded by\footnote{Throughout the paper, the $\tilde{O}(\cdot)$ and $\tilde{\Omega}(\cdot)$ notations hide the logarithmic factors of $T$, $d$, and $K$.} \begin{align}
  R_{T}\leq  \tilde{O}\left(\min\left\{ T^{\frac{1}{2-2^{-M+2}}}d^{\frac{1-2^{-M+2}}{2-2^{-M+2}}},T^{\frac{1}{2-2^{-M+1}}}d^{\frac{1-2^{-M+1}}{2-2^{-M+1}}}\min\{K,d\}^{\frac{2^{-M+1}}{2-2^{-M+1}}}\right\}  \right).\nonumber
\end{align}
\end{theorem}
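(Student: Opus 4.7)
The plan is to build Algorithm~\ref{alg:main} around the classical arm-elimination template for linear bandits, refined so that (i) within each of the $M$ batches the algorithm plays a (near-)G-optimal design supported on the currently surviving context/arm pairs, and (ii) the batch sizes $T_1, \dots, T_M$ are chosen to balance the per-batch regret contributions. The $\min$ in the statement will come from running two different schedules and returning the better bound; since both candidate bounds depend only on $T, d, M, h$, the algorithm can commit a priori to whichever is smaller.

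Concretely, I would maintain an elimination set $\mathcal{A}_m$ of surviving arm features after batch $m$, and during batch $m+1$ sample contexts/arms so that the resulting covariance $V_{m+1}$ is a near-G-optimal design on $\mathcal{A}_m$. Since $\mathcal{A}_m$ lives in an effective $\min(d, |\mathcal{A}_m|) \le h$-dimensional subspace, the Kiefer--Wolfowitz equivalence theorem yields $\max_{\phi \in \mathcal{A}_m} \|\phi\|_{V_{m+1}^{-1}}^2 \lesssim h/T_{m+1}$. Combined with the self-normalized confidence bound $\|\widehat{\theta}_{m+1} - \theta^\star\|_{V_{m+1}} \lesssim \sqrt{d \log T}$, every surviving arm has prediction error at most $\varepsilon_{m+1} \asymp \sqrt{dh \log T / T_{m+1}}$, so $\max_{a \in \mathcal{A}_{m+1}} \Delta_a \le 2\varepsilon_{m+1}$ and the regret from batch $m+1$ is bounded by $\varepsilon_m T_{m+1}$.

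Writing $R_T \lesssim T_1 + \sum_{m \ge 2} \varepsilon_{m-1} T_m$ and balancing terms (as in the doubling-schedule analysis of \citep{gao2019batched}) forces a geometric-like recursion on $\{T_m\}$ that, after summing, yields the second term in the $\min$. The first term is obtained from the same recursion but with the weaker $\sqrt{d^2/T_m}$ replacement of $\varepsilon_m$ -- equivalently, running $M-1$ active elimination rounds and bounding $h \le d$ -- which saves one round of the recursion at the cost of replacing $h$ by $d$, producing the alternative exponent $1/(2-2^{-M+2})$ that is preferable when $T$ is small relative to $d$. The overall bound is then the $\min$ of the two.

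The main obstacle is making the key per-batch estimate $\max_{\phi \in \mathcal{A}_m}\|\phi\|_{V_{m+1}^{-1}}^2 \lesssim h/T_{m+1}$ hold \emph{for all} $T_{m+1} \ge d$, rather than only once $T_{m+1}$ exceeds a large polynomial in $d$ as in \citep{ruan2020linear}. This bound requires the empirical design covariance $V_{m+1}$ to be close to its expectation, which the standard matrix Bernstein inequality only delivers once $T_{m+1} \gg d \log(d/\delta)/\lambda_{\min}$ under a worst-case a priori norm bound on $\|\phi_i\|^2$. This is precisely where the new matrix concentration inequality with \emph{dynamic} (data-dependent) upper bounds announced in the abstract becomes indispensable: it controls $\|V_{m+1} - \E V_{m+1}\|$ in terms of the realized per-sample leverages, making the concentration argument self-bootstrapping and sharp down to $T_{m+1} = \Theta(d)$. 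Once that inequality is in hand, the remainder of the proof is a careful bookkeeping of the elimination/schedule recursion above, together with the $\min$-of-two-schedules argument to conclude the stated bound.
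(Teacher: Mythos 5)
There is a genuine gap, and it is quantitative: your per-batch error $\varepsilon_m \asymp \sqrt{dh\log T/T_m}$ for \emph{every} batch cannot produce the theorem's second term. If each batch contributes $T_{m+1}\sqrt{dh/T_m}$, balancing the recursion yields $T^{\frac{1}{2-2^{-M+1}}}(dh)^{\frac{1-2^{-M+1}}{2-2^{-M+1}}}$, i.e.\ $h$ appears with the \emph{same} exponent as $d$. In the stated bound, $h$ carries only the exponent $\frac{2^{-M+1}}{2-2^{-M+1}}$, which is exponentially smaller. The paper achieves this because the factor $h=\min\{d,K\}$ is paid \emph{only in the second batch}: batch~1 uses the per-context Kiefer--Wolfowitz policy $\pi^{\mathtt{G}}$, whose guarantee converts to a max-over-context bound at a cost of $\min\{K,d\}$ (Lemma~\ref{lemma:addd1}), but every batch $k\geq 3$ uses a \emph{distributional} G-optimal exploration policy learned by $\mathtt{ExplorationPolicy}$ from the previous batch's contexts, with guarantee $\E_{X\sim D}\bigl[\max_{\bx\in X}\sqrt{\bx^\top\Lambda^{-1}\bx}\bigr]\lesssim \sqrt{d/n}+d/m$ (Lemma~\ref{lemma:design}) --- no $h$ factor at all. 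Your plan has no mechanism for removing $h$ after the first elimination round, so it proves a strictly weaker statement.

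A second, structural problem: in the stochastic-context setting you cannot ``sample contexts/arms so that the resulting covariance is a near-G-optimal design on $\mathcal{A}_m$,'' because the context set is drawn afresh from $\mathcal{D}$ each round and the surviving set is context-dependent; the learner only chooses a \emph{policy} mapping contexts to arms. This is exactly why the paper must learn the exploration policy from $m$ held-out context samples of the previous batch and then argue that the covariance it induces on $n$ fresh samples concentrates --- which introduces the extra $d/m$ term (equivalently $T_k d/T_{k-2}$ in the schedule recursion) that your bookkeeping omits and that the schedule must be verified to dominate. Your intuition about where the dynamic-upper-bound matrix concentration enters is roughly right in spirit (it is used inside the $\mathtt{ExplorationPolicy}$ analysis, where the bounding matrix $W_{\eta}$ grows with the data), but without the distributional-design step and the two-sample-set structure, the concentration inequality alone does not rescue the recursion. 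Your description of the first term of the $\min$ (collapsing to an $(M-1)$-level recursion when $T$ is small) is consistent with the paper's Case~I schedule.
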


Note that the regret upper bound in Theorem~\ref{thm:main} takes the minimum between two terms. When $T$ is comparably large (e.g., $T \geq \tilde{\Omega}(d\min\{K,d\}^{2-2^{-M+2}})$, the second term in our upper bound is smaller, and its asymptotic dependence on $T$ matches the optimal bound for batch multi-armed bandits \citep{gao2019batched}.\footnote{However, this does not mean that the batch linear contextual bandit problem is easier than batch multi-armed bandits, as the dependence on $K$ and $d$ is worse.} On the other hand, when $T$ is relatively small (e.g., $T\leq \tilde{O}(d\min\{K,d\}^{2-2^{-M+2}})$), a simpler analysis would kick in to give a better regret bound, which results as the first term.

When there are no constraints on the number of batches, it is well known that the minimax-optimal regret is $\sqrt{dT \log K} \times \mathrm{poly}\log T$ (see, e.g., \citep{dani2008stochastic,chu2011contextual,li2019nearly}). \cite{ruan2020linear} showed that with only $M = \lceil \log \log T\rceil + 1$ batches (the logarithms are of base 2), their batch algorithm may match the regret performance (up to logarithmic factors) as the no-batch-constraint setting. The following simple corollary of our Theorem~\ref{thm:main} recovers the main result of \citep{ruan2020linear}. Moreover, the batch algorithm in \citep{ruan2020linear} only works for $T \geq \tilde{\Omega}(d^{32})$, while in contrast, our Corollary~\ref{corom} works for every $T \geq d$ (and note that $T < d$ is the trivial scenario).

\begin{corollary}\label{corom}
For $T \geq d$ and $M =\left\lceil\log \log (T)\right\rceil+1$, the expected regret of Algorithm~\ref{alg:main} is 
$R_{T} \leq \tilde{O}(\sqrt{Td})$.
\end{corollary}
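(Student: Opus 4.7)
The plan is to plug $M = \lceil \log \log T\rceil + 1$ directly into the upper bound of Theorem~\ref{thm:main} and simplify. I will work with the second branch inside the $\min$, since once $M$ is in the doubly-logarithmic regime all three exponents appearing there (on $T$, on $d$, and on $h$) should collapse toward their limiting values $1/2$, $1/2$, and $0$, producing the desired $\sqrt{Td}$ rate. Because the regret guarantee is a minimum of two expressions, ignoring the first branch is harmless: if it were smaller, the bound would only be better.

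The heart of the argument is bookkeeping on the three exponents. Writing $\alpha := 2^{-M+1}$, the choice of $M$ gives $2^{M-1} \geq \log T$, hence $\alpha \leq 1/\log T$. I would then bound each of the three factors separately. For the $T$-factor, the algebraic identity $\frac{1}{2-\alpha} = \frac{1}{2} + \frac{\alpha}{2(2-\alpha)}$ yields $T^{1/(2-\alpha)} \leq \sqrt{T}\cdot T^{\alpha/2} \leq \sqrt{T}\cdot e^{1/2}$. For the $d$-factor, the identity $\frac{1-\alpha}{2-\alpha} = \frac{1}{2} - \frac{\alpha}{2(2-\alpha)} \leq \frac{1}{2}$ gives $d^{(1-\alpha)/(2-\alpha)} \leq \sqrt{d}$. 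For the $h$-factor, since $\frac{\alpha}{2-\alpha}\leq \alpha$ and $h = \min\{d,K\}\leq d\leq T$ (the last inequality uses the hypothesis $T\geq d$), one gets $h^{\alpha/(2-\alpha)} \leq T^\alpha \leq e$. Multiplying these three bounds produces $\tilde O(\sqrt{Td})$ after absorbing the $O(1)$ prefactors into $\tilde O(\cdot)$.

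There is essentially no obstacle beyond this computation; the real substance has already been absorbed into Theorem~\ref{thm:main}. The only subtlety worth flagging is that the $h \leq T$ step relies crucially on the hypothesis $T \geq d$ inherited from the main theorem, which is precisely why the corollary holds in this mild regime rather than requiring a large polynomial in $d$.
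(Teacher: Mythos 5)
Your computation is correct and is exactly the intended argument: the paper states this as a direct consequence of Theorem~\ref{thm:main} without a separate proof, and substituting $M=\lceil\log\log T\rceil+1$ so that $\alpha=2^{-M+1}\leq 1/\log T$ makes each of the three factors in the second branch collapse to $\sqrt{T}$, $\sqrt{d}$, and $O(1)$ respectively, just as you show. Your exponent bookkeeping (including the use of $h\leq d\leq T$ to kill the $h$-factor and the observation that bounding one branch of the $\min$ suffices) is all sound.
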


While the two-phase regret curve in Theorem~\ref{thm:main} may seem completely due to technicality, it surprisingly turns out to be exactly optimal. In Section~\ref{sec:low}, we complement Theorem~\ref{thm:main} with the following lower bound.
\begin{theorem}\label{thm:lb}
Fix any $K\geq 2$, $T\geq d$, and any batch number $M \geq 1$. For any learning algorithm with batch complexity $M$ in the context-blind setting, there exists a linear contextual bandit problem instance with dimension $d$ and $K$ arms, such that the expected regret $R_{T}$ is at least
\[R_{T}\geq \tilde{\Omega}\left( \min\left\{T^{\frac{1}{2-2^{-M+2}}} d^{\frac{1-2^{-M+2}}{2-2^{-M+2}}}, T^{\frac{1}{2-2^{-M+1}}}d^{\frac{1-2^{-M+1}}{2-2^{-M+1}}} \min\{K,d\}^{\frac{2^{-M+1}}{2-2^{-M+1}}}   \right\} \right).\]
\end{theorem}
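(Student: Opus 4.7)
The plan is to prove the lower bound by a grid-based reduction combined with an information-theoretic hypothesis-testing argument, following the blueprint of \cite{gao2019batched} for batched MAB but adapted to linear contexts. Any $M$-batch algorithm commits (possibly adaptively) to cumulative stopping times $0 = t_0 < t_1 < \cdots < t_M = T$. I would construct a family of candidate hard instances $\{\mathcal{I}_j\}_{j=1}^{M}$, where $\mathcal{I}_j$ is tailored so that the algorithm can gain nontrivial information about the hidden ``best direction'' only after seeing many more than $a_{j-1}$ samples, and suffers $\Omega(\Delta_j \cdot (t_j - t_{j-1}))$ regret in batch $j$. Here $a_0 < a_1 < \cdots < a_M$ is a reference grid (with $a_M = T$) and $\Delta_j$ is a tuned sub-optimality gap. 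By comparing the algorithm's $t_j$'s to the grid $a_j$'s, a pigeonhole argument forces at least one batch to fail, and the resulting regret lower bound is divided by $M$ (accounting for the $M$-way choice of $j$).

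\textbf{Two hard constructions.} The $\min$ of two terms in the statement suggests running two parallel constructions and reporting whichever is tighter. For the second (large-$T$) term, I would use a contextual analogue of the batched-MAB lower-bound instance: at each step a context appears that activates $h$ candidate directions (a subset of the standard basis of $\mathbb{R}^d$, where $h = \min\{d,K\}$), the unknown $\btheta^*$ has one slightly elevated coordinate, and rewards are Gaussian/Bernoulli with variance $\Theta(1)$ and mean gap $\Delta$. Distinguishing the $h$ hypotheses about which coordinate is elevated requires $\Omega(h/\Delta^2)$ pulls of each direction, which gives the $h$-dependent factor. For the first (small-$T$) term, I would use a purely linear construction where contexts reveal information about one of $d$ coordinates of $\btheta^*$ and no arm is informative about more than one coordinate; this turns the problem into a parallel of $d$ two-point tests, for which the batched lower bound involves $d$ rather than $h$ but no extra $h$ factor. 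Both constructions naturally embed as linear contextual bandits with parameter norm and context norm both $O(1)$.

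\textbf{Grid recurrence and information-theoretic step.} For each construction, I would choose the reference grid $\{a_k\}$ and gaps $\{\Delta_k\}$ to satisfy the recursion $\Delta_k \approx \sqrt{c/a_{k-1}}$ (so that after $a_{k-1}$ samples a hypothesis test cannot distinguish $\mathcal{I}_k$ from a null instance with constant probability) and $\Delta_k \cdot a_k \approx R^*/M$, where $R^*$ is the target regret. Solving this system gives $a_k \propto T^{1 - 2^{-k+\mathrm{const}}}$ and thus the exponents $1/(2 - 2^{-M+1})$ and $1/(2 - 2^{-M+2})$ as in the theorem statement, with the $d$ and $h$ factors entering through the per-direction sample requirements in the two constructions. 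The main technical work is the information-theoretic step within a single batch: the algorithm's action choices within batch $k$ are determined by its state at time $t_{k-1}$, but because contexts are stochastic I need a KL-divergence bound that properly integrates over the randomness of the contexts and the (possibly randomized) mapping from context to arm. A standard chain-rule argument, combined with the fact that on the null instance the expected information gain per step about the hidden coordinate is $O(\Delta_k^2)$, should give KL $\leq O(\Delta_k^2 \cdot t_{k-1})$, which by Pinsker/Fano keeps the posterior over hypotheses nearly uniform and justifies the $\Omega(\Delta_k \cdot (t_k - t_{k-1}))$ regret in batch $k$.

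\textbf{Main obstacle.} The hardest step will be the careful handling of the pigeonhole/grid argument in the regime where the two constructions cross over. If the algorithm's $t_j$'s fall neatly below one grid, the argument is clean; but the algorithm could try to ``straddle'' the two regimes, being optimal against one construction while inviting the other. I expect to handle this by showing that the envelope $\min\{\cdot,\cdot\}$ is itself the max over the two grid arguments, so that whichever construction dominates in the parameter regime of interest supplies the corresponding branch of the lower bound; verifying that the two recurrences match up at the crossover point $T \asymp d \cdot h^{2 - 2^{-M+2}}$ is a book-keeping exercise that I expect to be tedious but routine.
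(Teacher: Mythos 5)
Your high-level scaffolding (per-batch hypothesis testing, KL/Pinsker bounds of order $\Delta_k^2 t_{k-1}$, the recursion $\Delta_k \asymp \sqrt{c/a_{k-1}}$ yielding the doubly-exponential exponents, and an optimization over schedules) matches the paper's Lemmas~\ref{lemma:lb}--\ref{lemma:lb2} in spirit; note the paper's model fixes the batch sizes before learning starts, so no pigeonhole against a reference grid is even needed. The genuine gap is in where the factor $h=\min\{d,K\}$ comes from. In the paper, $h$ enters \emph{exactly once}, at the first-to-second batch transition (the term $T_2\min\{\sqrt{\tilde{d}h/T_1},1\}$ in Lemma~\ref{lemma:lb}); every later transition pays only $\sqrt{d}$. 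The mechanism is not ``distinguishing $h$ hypotheses costs $\Omega(h/\Delta^2)$ pulls'': it is that the \emph{first-batch policy is chosen before the context distribution is known}. The paper builds $d/2$ context sets, each consisting of $h/2$ common decoy arms plus one context-specific informative arm, and randomizes the identity of the informative arms via a permutation $\sigma$ of the arms; an averaging argument over $\sigma$ (Lemma~\ref{lemma:m1}) shows some $\sigma$ forces the fixed policy $\pi_1$ to hit each informative arm only $O(T_1/(dh))$ times, whence a gap $\epsilon\asymp\sqrt{dh/T_1}$ is undetectable after batch 1. Your construction (``a context appears that activates $h$ candidate directions \dots distinguishing the $h$ hypotheses \dots gives the $h$-dependent factor'') does not exploit the unknown context distribution; with a known context set the algorithm could play a G-optimal design in batch 1 and defeat it. Worse, if your mechanism injected a factor of $h$ into \emph{every} transition, the resulting bound would scale like $(dh)^{\frac{1-2^{-M+1}}{2-2^{-M+1}}}$, which for $M=\lceil\log\log T\rceil+1$ exceeds the $\tilde{O}(\sqrt{Td})$ upper bound of Corollary~\ref{corom}, so no such instance family can exist.

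Relatedly, you misattribute the two branches of the $\min$. They do not come from two parallel instance families; both come from the \emph{same} family (first-batch term $T_1$, second-batch term $T_2\sqrt{\tilde{d}h/T_1}$, and $T_i\sqrt{d/T_{i-1}}$ for $i\geq 3$), and the $\min$ emerges from optimizing the schedule subject to $\sum_i T_i\geq T$ (Lemma~\ref{lemma:lb2}): when $T\leq\tilde{d}h^{2-2^{-M+2}}$ the optimal schedule takes $T_1\leq\tilde{d}h$, the $h$-dependent constraint is inactive, the first batch is effectively wasted, and the $\sqrt{d}$ recursion over the remaining $M-1$ transitions produces the exponent $2^{-M+2}$; otherwise all $M$ transitions are active and the single $\sqrt{h}$ dilution yields $h^{\frac{2^{-M+1}}{2-2^{-M+1}}}$. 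A ``parallel of $d$ two-point tests with no extra $h$ factor'' run over all $M$ batches would give exponent $2^{-M+1}$ on the $d$-only term, not $2^{-M+2}$, so it does not produce the first branch either. To repair the proposal you need (i) the permuted-context-distribution construction for batch 2, and (ii) the coupled schedule optimization that turns the max of the $M$ per-batch bounds into the stated $\min$.
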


We note that the above upper and lower bounds match (up to factors logarithmic in $T$, $d$, and $K$) for all $M \leq \lceil \log \log T\rceil$ and all non-trivial parameter settings for $T$, $d$, and $K$. When $M > \lceil \log \log T\rceil$, by Corollary~\ref{corom}, our Algorithm~\ref{alg:main} already achieves the unconstrained minimax-optimal regret (up to logarithmic factors). Therefore, we achieve near-optimal regret bounds for (context-blind) batch linear contextual bandits in the context-blind setting under all non-trivial parameter settings.

We also note that the regret lower bound established by \cite{gao2019batched} for $K$-arm $M$-batch multi-armed bandits is $\Omega(\sqrt{K} T^{\frac{1}{2-2^{-M+1}}})$ (which also matches their algorithm up to logarithmic factors). If we treat multi-armed bandits as a special case of linear contextual bandits with $K = d$ arms with orthogonal features, the lower bound for batch linear contextual bandits implied by \citep{gao2019batched} is weaker than our Theorem~\ref{thm:lb}. This gap demonstrates that intrinsic additional difficulty of the context-blind batch linear contextual bandit problem  when compared to its multi-armed bandit counterpart, marking a separation between the two problems.

\subsubsection{Single-Batch Learning for the Exploration Policy}

At the core of our main algorithm is a new single-batch (i.e., offline) procedure, named {\tt ExplorationPolicy}, to learn an exploration policy to achieve the \emph{distributional G-optimality} in experimental designs. Let $D$ be an unknown distribution over the context sets and given a set of $m$ independent samples drawn from $D$, the goal of our {\tt ExplorationPolicy} is to compute a policy $\pi$, so that if one uses $\pi$ to collect $n$ more data points\footnote{Here, given a context $X \sim D$, a data point is collected by playing the action $\pi(X)$ and observing the reward. Please refer to Section~\ref{sec:pre} for the detailed formulation of linear contextual bandits.} and estimate the underlying linear model $\btheta$, the expected size of the largest confidence interval among the actions in a random context set will be small. Note that if the context set is deterministic, this objective corresponds to (the square root of) the G-optimality criterion in classical experimental designs (see, e.g., \citep{pukelsheim2006optimal,atkinson2007optimum}). For stochastic context sets, the objective was recently found closely related to linear contextual bandits and studied by \citep{ruan2020linear,zanette2021design}.

Our {\tt ExplorationPolicy} procedure is employed during each batch of the main algorithm to decide the policy used in the next batch. This procedure is very similar to Algorithm 2 in \citep{ruan2020linear}. However, the difference is that \cite{ruan2020linear} only used their Algorithm 2 to prove the existence of a good exploration policy and designed several more complicated procedures (such as {\sc CoreLearning} and {\sc CoreIdentification}) for the policy learning. In contrast, thanks to a few new algorithmic techniques (e.g., the \emph{scaled-and-clipped update rule} to be explained in Section~\ref{sec:tec-ub-overview}) and a new matrix concentration inequality (to be introduced in Section~\ref{intro:concentration}), our algorithm can directly learn the desired exploration policy with better performance, and is simpler to describe and implement.

We also note that in the concurrent work \citep{zanette2021design}, the authors studied a similar task to our {\tt ExplorationPolicy}. In Section~\ref{sec:tec-ub-overview}, we compare our performance guarantee and the results in \citep{zanette2021design}, and demonstrate the superiority of our procedure.

\subsubsection{A New Matrix Concentration Inequality with Dynamic Upper Bounds} \label{intro:concentration}

Existing matrix concentration inequalities (see, e.g., \citep{tropp2012user}) play an important role in recent works on batch linear contextual bandits \citep{ruan2020linear,zanette2021design}. For example, the proof techniques of Theorem 5.1 in \citep{tropp2012user} may yield the following concentration bound in Proposition~\ref{prop:tropp} (and the upper bound on $\sum_{k} X_k$ may be similarly derived). Special cases of Proposition~\ref{prop:tropp} (taking $W = \mathbf{I}$ and $\epsilon = \mathrm{const.}$) includes Lemma 21 in \citep{ruan2020linear} and Lemmas 11 \& 12 in \citep{zanette2021design}.

\begin{proposition}\label{prop:tropp} Consider a sequence of independent PSD matrices $X_1,X_2,\ldots,X_n\in \mathbb{R}^{d\times d}$ such that $X_{k}\preccurlyeq W$ for a fixed PSD matrix $W$ and all $1\leq k \leq n$.
There exists a universal constant $c > 0$ such that for every $\delta>0$ and $\epsilon\in (0,1)$, it holds that
\begin{align}
\Pr\left[\sum_{k=1}^n X_k \preccurlyeq (1+\epsilon)\sum_{k=1}^{n} \mathbb{E}[X_k] + \frac{c \ln(d/\delta)}{\epsilon} W\right] \geq 1 - \delta;  \\
\Pr\left[\sum_{k=1}^n X_k \succcurlyeq (1-\epsilon)\sum_{k=1}^{n} \mathbb{E}[X_k] - \frac{c \ln(d/\delta)}{\epsilon} W\right] \geq 1 - \delta.
\end{align}
\end{proposition}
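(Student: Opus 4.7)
The plan is to reduce Proposition~\ref{prop:tropp} to the standard (scalar-bound) matrix Chernoff inequality via a whitening transformation by $W$, and then run Tropp's Laplace-transform/Lieb-concavity machinery on the whitened sequence.

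First I would define $Y_k := W^{-1/2} X_k W^{-1/2}$, using the Moore--Penrose pseudoinverse when $W$ is rank-deficient. A short linear-algebra check shows that $0\preccurlyeq X_k\preccurlyeq W$ forces $\mathrm{range}(X_k)\subseteq \mathrm{range}(W)$ (if $Wu=0$ then $0\leq u^\top X_k u \leq u^\top W u=0$), so the conjugation is well-defined and its inverse $X_k = W^{1/2}Y_k W^{1/2}$ is valid on this subspace. By construction $0\preccurlyeq Y_k\preccurlyeq I$ on $\mathrm{range}(W)$. Conjugating both target PSD inequalities by $W^{-1/2}$ reduces the proposition to a pair of scalar eigenvalue statements, e.g.\
\[\pr\!\left[\lambda_{\max}\!\Bigl(\textstyle\sum_{k} Y_k - (1+\epsilon)\sum_{k}\E[Y_k]\Bigr) \geq \tfrac{c\ln(d/\delta)}{\epsilon}\right]\leq \delta,\]
together with the analogous bound on $\lambda_{\max}\bigl((1-\epsilon)\sum_k\E[Y_k]-\sum_k Y_k\bigr)$ for the lower tail.

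For the upper tail I would apply Tropp's master tail bound (Markov after exponentiation plus Lieb's concavity theorem): for $A:=\sum_k\E[Y_k]$ and $\theta>0$,
\[\pr\!\left[\lambda_{\max}\!\bigl(\textstyle\sum_k Y_k - (1+\epsilon) A\bigr)\geq t\right]\leq e^{-\theta t}\,\mathrm{tr}\,\exp\!\left(\textstyle\sum_k \log\E\bigl[e^{\theta Y_k}\bigr] - \theta(1+\epsilon) A\right).\]
The scalar convex bound $e^{\theta y}\leq 1+(e^\theta-1)y$ for $y\in[0,1]$, combined with operator monotonicity of $\log$ and the transfer inequality $\log(I+M)\preccurlyeq M$ for $M\succcurlyeq 0$, gives $\log\E[e^{\theta Y_k}]\preccurlyeq (e^\theta-1)\E[Y_k]$. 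Summing over $k$ and choosing $\theta=\ln(1+\epsilon)$ makes the exponent $\bigl(\epsilon-(1+\epsilon)\ln(1+\epsilon)\bigr)A\preccurlyeq 0$, so the trace is bounded by $d$; using $\ln(1+\epsilon)\geq\epsilon/2$ on $(0,1)$ then delivers $t=O(\ln(d/\delta)/\epsilon)$.

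The lower tail is symmetric: replace the scalar MGF bound by $e^{-\theta y}\leq 1-(1-e^{-\theta})y$ for $y\in[0,1]$ and $\theta>0$, yielding $\log\E[e^{-\theta Y_k}]\preccurlyeq -(1-e^{-\theta})\E[Y_k]$. With $\theta=-\ln(1-\epsilon)$ the exponent becomes $\bigl(-(1-\epsilon)\ln(1-\epsilon)-\epsilon\bigr)A\preccurlyeq 0$, and $\ln\bigl(1/(1-\epsilon)\bigr)\geq\epsilon$ on $(0,1)$ closes the tail. The only place where the argument goes beyond a textbook application of the machinery in \citep{tropp2012user} is the whitening reduction when $W$ is singular: one must verify $\mathrm{range}(X_k)\subseteq \mathrm{range}(W)$ and that conjugation by $W^{\pm 1/2}$ preserves both PSD inequalities in the statement. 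This is precisely what promotes the classical ``$\lambda_{\max}(W)\cdot I$'' error term to the sharper ``$W$'' error term highlighted in Section~\ref{intro:concentration}.
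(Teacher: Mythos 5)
Your proposal is correct, and it is essentially the route the paper itself points to: the paper offers no standalone proof of Proposition~\ref{prop:tropp}, attributing it to the proof techniques of Theorem 5.1 in \citep{tropp2012user}, which is exactly the Laplace-transform/Lieb-concavity argument you carry out (with the whitening by $W^{\pm 1/2}$ being the natural reduction from the scalar bound $\lambda_{\max}(X_k)\leq R$ to the matrix bound $X_k \preccurlyeq W$, and your handling of singular $W$ via the range inclusion is sound). The parameter choices $\theta=\ln(1+\epsilon)$ and $\theta=-\ln(1-\epsilon)$ check out and yield $c=2$.
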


%In the analysis, we prove a new concentration inequality for stochastic PSD matrices. Previous concentration inequalities (Theorem 5.1 \citep{tropp2012user} and Lemma 20 \citep{ruan2020linear} ) often depend on the uniform upper bounds $W$ of the stochastic matrices. For example.

%\begin{lemma}\label{lemma:trop1}
%Consider a sequence $\{ X_k\}_{k=1}^n$ of PSD matrices such that $X_k\preccurlyeq \mathbf{I}$. Let $Y_k = \mathbb{E}[X_k]$. Then for all 
%$\delta>0$ and $\epsilon\in (0,1)$, 
%\begin{align}
%    \left(\frac{1}{n}\sum_{k=1}^nX_k\right) \cdot \left(\frac{1}{n}\sum_{k=1}^n Y_k \right)^{-1}\leq (1+\epsilon)\mathbf{I}+  \frac{c}{n\epsilon}\ln(d/\delta) \mathbf{I}
%\end{align}

%\begin{align}
%    & \mathrm{Pr}\left[\sum_{k=1}^n X_k \preccurlyeq (1+\epsilon)\sum_{k=1}^n Y_k + \frac{c}{\epsilon}\ln(d/\delta) \right] \geq 1-\delta\label{eq:trop1}
%\\ &  \mathrm{Pr}\left[\sum_{k=1}^n X_k \succcurlyeq (1-\epsilon)\sum_{k=1}^n Y_k - \frac{c}{\epsilon}\ln(d/\delta) \right] \geq 1-\delta.\label{eq:trop2}
%\end{align}

%\end{lemma}

However, to achieve the optimal batch-regret tradeoff in the context-blind setting, we need a stronger version of Proposition~\ref{prop:tropp} where the uniform upper bound matrix $W$ may be stochastic. In particular, we prove the following lemma as a crucial technical tool in our algorithm analysis (especially for the {\tt ExplorationPolicy} procedure). 

%However, when we want to apply this concentration inequality to bound the information matrix, we have to resolve the problem that the upper bound $W$ is also a stochastic PSD matrix which may depend on the history trajectory.  Instead, we propose a more general concentration inequality as below.
\begin{lemma}\label{lemma:dynamic-martingale-concentration}
Consider a sequence of stochastic PSD matrices $W_1, X_1, W_2, X_2, \dots, W_n, X_n \in \mathbb{R}^{d\times d}$. Let $\mathcal{F}_k = \sigma(W_1, X_1, W_2, X_2, \dots, W_{k-1}, X_{k-1})$ and $\mathcal{F}_k^+ = \sigma(W_1, X_1, W_2, X_2, \dots, W_{k-1}, X_{k-1}, W_k)$ be the natural filtration and $Y_k = \mathbb{E}[X_k |\mathcal{F}_k^+]$ for each $k \in \{1, 2, \dots, n\}$. Suppose $W_k$ is PD and increasing in $k$ (with respect to the semidefinite order) and $X_k \preccurlyeq  W_k$ for each $k$. For every $\delta > 0$ and $\epsilon\in (0,1)$, we have that
\begin{align}
& \Pr\left[\sum_{k=1}^n X_k \preccurlyeq (1+\epsilon)\sum_{k=1}^{n} Y_k + \frac{4(\epsilon^2+2\epsilon+2)}{\epsilon}\ln((n+1)d/\delta) W_n\right] \geq 1 - \delta; \label{eq:con1}
 \\ & \Pr\left[\sum_{k=1}^n X_k \succcurlyeq (1-\epsilon)\sum_{k=1}^{n} Y_k - \frac{4(\epsilon^2+2\epsilon+2)}{\epsilon}\ln((n+1)d/\delta) W_n\right] \geq 1 - \delta.\label{eq:con2}
\end{align}
\end{lemma}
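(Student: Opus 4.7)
The plan is to adapt the matrix Chernoff / Laplace-transform machinery behind Proposition~\ref{prop:tropp} to the setting where the dominator $W_n$ is itself a filtration-adapted, stochastic process. I focus on the upper tail (\ref{eq:con1}); the lower tail (\ref{eq:con2}) follows from a symmetric scalar argument. Set $D_k := X_k - Y_k$, which forms a matrix martingale-difference sequence with respect to $\{\mathcal{F}_k^+\}$. Conditional monotonicity applied to $X_k \preccurlyeq W_k$ (with $W_k$ measurable in $\mathcal{F}_k^+$) gives $Y_k \preccurlyeq W_k$, and the operator identity $X_k^2 \preccurlyeq W_k^{1/2} X_k W_k^{1/2}$ lets us control the conditional second moment by $W_k^{1/2} Y_k W_k^{1/2}$, providing the Bernstein-style ``variance proxy'' needed to separate an $\epsilon$-multiplicative variance term from an additive deviation term.

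The first step is to establish a deterministic-dominator special case: assuming a fixed PSD $W$ dominates every $W_k$ almost surely, rescale to $\tilde{X}_k := W^{-1/2} X_k W^{-1/2} \preccurlyeq I$ and $\tilde{Y}_k := W^{-1/2} Y_k W^{-1/2} \preccurlyeq I$, and apply the matrix lift of the scalar Bernstein inequality $e^{\theta x} \leq 1 + \theta x + (e^\theta - 1 - \theta)\, x^2$ (valid for $x \in [0,1]$) together with Lieb's concavity theorem through the martingale version of Tropp's master Chernoff template. Taking $\mathcal{F}_k^+$-conditional expectations folds the rescaled variance proxies into a single trace-exponential; optimizing over $\theta \in (0,1)$ and inverting via Markov's inequality then yields, with probability at least $1 - \delta$, the deterministic-$W$ analog $\sum_k X_k \preccurlyeq (1+\epsilon)\sum_k Y_k + O(\epsilon^{-1}\ln(d/\delta))\, W$ of (\ref{eq:con1}).

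The main obstacle is removing the assumption that the dominator is deterministic. I would handle this by a peeling argument via stopping times on a scalar surrogate for the size of $W_k$. Concretely, define stopping times $\tau_j := \min\{k \leq n : \mathrm{tr}(W_k) \geq 2^j\, \mathrm{tr}(W_1)\}$ (with $\tau_j = n+1$ if no such $k$ exists), covering $j = 0, 1, \dots, J$ with $J = O(\log((n+1) d))$. For each $j$, apply the martingale stopped-process version of the deterministic-dominator bound with $W$ chosen to be a constant-factor upper bound of $W_{\tau_{j+1} \wedge n}$ (which is adapted), using optional stopping to ensure the trace-exponential supermartingale structure of the master Chernoff argument is preserved across the random index. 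A union bound over $j$, combined with $W_{\tau_{j+1} \wedge n} \preccurlyeq W_n$ by monotonicity of $\{W_k\}$, collapses the peeled inequalities into the claimed single-$W_n$ bound.

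The hardest part will be making the stochastic-dominator step rigorous — in particular, extending the supermartingale underlying the matrix Chernoff bound to the random stopping times $\tau_j$, and verifying that the scalar trace surrogate $\mathrm{tr}(W_k)$ faithfully tracks the matrix ``size'' up to the constant factors that must then be propagated through the final bound. These slacks, combined with the union-bound cost over the $O(\log((n+1)d))$ peeling epochs, are precisely what enlarge the constant from $c/\epsilon$ in Proposition~\ref{prop:tropp} to $4(\epsilon^2+2\epsilon+2)/\epsilon$ and upgrade the logarithmic factor from $\ln(d/\delta)$ to $\ln((n+1)d/\delta)$ in the statement of the lemma.
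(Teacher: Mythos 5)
Your first step (the deterministic-dominator Bernstein bound via Lieb's theorem and the master Chernoff template) is fine and is essentially Proposition~\ref{prop:tropp}, which the paper also takes as the starting point. The gap is in the peeling step, and it is fatal as written. You define epochs by doubling of the scalar surrogate $\mathrm{tr}(W_k)$ and then want a single (constant-factor) dominator for all $W_k$ within an epoch. But trace comparability together with $W_{\tau_j}\preccurlyeq W_k$ does \emph{not} give $W_k\preccurlyeq C\,W_{\tau_j}$ in the Loewner order: take $W_{\tau_j}=\mathrm{diag}(1,\varepsilon)$ and $W_k=\mathrm{diag}(1,1)$ — the trace barely changes, yet no $C<1/\varepsilon$ works. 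So within an epoch there is no valid dominator to feed into the deterministic-$W$ bound, and the additive deviation term you get cannot be collapsed onto a single matrix. The natural repair is to peel on $\det(W_k)$ instead (monotonicity plus $\det(U)\le 2\det(W)$ \emph{does} imply $U\preccurlyeq 2W$, as the paper uses in the proof of Lemma~\ref{claim:emp}), but then the number of epochs is $\Theta(\log(\det(W_n)/\det(W_1)))$, i.e.\ $d$ times the log of the eigenvalue dynamic range — unbounded in general and at best a $\mathrm{poly}(d)$ loss. The paper explicitly flags this: a version of the lemma with extra $\mathrm{poly}(d)$ factors follows easily from Proposition~\ref{prop:tropp} and an $\epsilon$-net, and the whole point of Lemma~\ref{lemma:dynamic-martingale-concentration} is to avoid that loss. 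Your claim that the peeling only costs the upgrade from $\ln(d/\delta)$ to $\ln((n+1)d/\delta)$ is therefore not achievable by this route.

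For contrast, the paper avoids peeling entirely: it tracks the potential $E_k=\mathrm{Tr}\bigl(\exp\bigl(Z_k^{-1/2}(\sum_{i\le k}(X_i-(1+\epsilon)Y_i))Z_k^{-1/2}\bigr)\bigr)$ with the normalization $Z_k^{-1/2}$ \emph{changing at every step}, and the new ingredient is Lemma~\ref{lemma:accc3}: $\mathrm{Tr}(\exp(U^{\top}AU))\le\mathrm{Tr}(\exp(A))+d$ whenever $U^{\top}U\preccurlyeq\mathbf{I}$. Applied with $U=Z_{k-1}^{1/2}Z_k^{-1/2}$ (valid since $Z_{k-1}\preccurlyeq Z_k$), each renormalization costs only an additive $d$ in the trace exponential, so $\mathbb{E}[E_n]\le(n+1)d$ and Markov's inequality gives the $\ln((n+1)d/\delta)$ factor directly — with no union bound over epochs and no dependence on the conditioning or dynamic range of the $W_k$. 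If you want to salvage your outline, this per-step renormalization lemma is the missing idea you would need in place of the stopping-time peeling.
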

In Lemma~\ref{lemma:dynamic-martingale-concentration}, the stochastic matrix $W_n$ upper bounds all matrices in $\{X_1, X_2, \dots, X_n\}$. When it is fixed, the lemma reduces to Proposition~\ref{prop:tropp}.\footnote{Indeed, we lose an additional $\ln(n+1)$ term in the $\ln(d/\delta)$ terms in Proposition~\ref{prop:tropp}, and we do not know if this compromise is necessary.} There are also a few Freedman's inequalities for matrix martingales (see, e.g., \citep{tropp2011freedman}). However, in these inequalities, while the quadratic variation $\mathbb{E}[X_i^2 | \mathcal{F}_i^+]$ becomes dynamic, the uniform upper bound $W_k$ is still fixed.

We note that if an extra $\mathrm{poly}(d)$ factor were allowed in the $\pm \frac{4(\epsilon^2+2\epsilon+2)}{\epsilon}\ln((n+1)d/\delta) W_n$ terms, the lemma would easily follow from Proposition~\ref{prop:tropp} and an $\epsilon$-net argument. However, reducing these $\mathrm{poly}(d)$ factors is crucial to the \emph{full parameter range} optimality analysis of our batch algorithm.

 The full proof of Lemma~\ref{lemma:dynamic-martingale-concentration} is presented in Section~\ref{sec:pfdy}.
We believe that Lemma~\ref{lemma:dynamic-martingale-concentration} is a non-trivial addition to the vast family of matrix concentration inequalities and may be of its own interest.

\subsubsection{Optimal Context-aware Batch-Regret Tradeoff}

Note that any context-blind batch learning algorithm also meets the definition of the context-aware algorithm with the same batch complexity. However, the learning algorithm may take the advantage of the additional context information in each batch to improve the regret. Using the same techniques developed above, we prove the following regret upper bound for context-aware batch learning in Section~\ref{sec:context-aware}. 

\begin{theorem}\label{thm:main-context-aware}
%We develop the context-aware batch learning algorithm (Algorithm~\ref{alg:context-aware}).
In the context-aware batch learning setting, for any $T\geq d$ and $M\geq 1$, Algorithm~\ref{alg:context-aware} uses at most $M$ batches and its regret $R_{T}$ is bounded by \begin{align*}
R_{T}\leq  \tilde{O}\left(T^{\frac{1}{2-2^{-M+1}}} d^{\frac{1-2^{-M+1}}{2-2^{-M+1}}} \right) .
\end{align*}
\end{theorem}

The regret upper bound in Theorem~\ref{thm:main-context-aware} is achieved by slightly adjusting our optimal algorithm for the context-blind case as well as the batch sizes. Also in Section~\ref{sec:context-aware}, we prove that the batch-regret trade-off achieved in Theorem~\ref{thm:main-context-aware} is optimal.
\begin{theorem}\label{thm:lb-context-aware}
Fix any $K\geq 2$, $T\geq d$, and any batch number $M \geq 1$. For any learning algorithm with batch complexity $M$ in the context-aware setting, there exists a linear contextual bandit problem instance with dimension $d$ and $K$ arms, such that the expected regret $R_{T}$ is at least
\[R_{T}\geq \tilde\Omega\left(T^{\frac{1}{2-2^{-M+1}}} d^{\frac{1-2^{-M+1}}{2-2^{-M+1}}} \right).\]
\end{theorem}

Theorem~\ref{thm:lb-context-aware} is proved by a simple adaptation from the proof of the lower bound theorem for the context-blind setting (Theorem~\ref{thm:lb}). Comparing our bound with the optimal regret bound  $\tilde{\Theta}(\sqrt{K} T^{\frac{1}{2-2^{-M+1}}})$ for $K$-arm $M$-batch multi-armed bandits  \citep{gao2019batched}, we see that the order on $T$ are the same. The different dependence on $K$ and $d$ is due to the slightly different problem setting -- in \citep{gao2019batched}, the expected rewards of the arms may be as large as $\sqrt{K}$ and in our paper we assume that they are bounded by $[-1, 1]$.

\subsection{Related Works}

\subsubsection{Linear Contextual Bandits} \label{sec:related-linear-bandits}

The linear contextual bandit problem
\citep{abe1999associative,auer2002finite} studies the bandit problem where the actions are associated with (known) features and their mean rewards are defined by an (unknown) linear function of the associated features.  Compared with the multi-armed bandit problem, the linear structure on features could help the learner to infer the mean reward of an action given the observation on the other actions, and therefore enables the possibility to achieve regret upper bounds independent from (or weakly dependent on) the number of actions. 

There are generally two types of problem settings studied about linear contextual bandits: \emph{non-adaptive contexts} and \emph{adaptive contexts}. In the non-adaptive-context setting, the context sets are independent from all other randomnesses (including the randomnesses in rewards and used by the algorithm). One can also think of this as that the contexts are fixed (by an adversary) before the learning process starts.  In this setting, the optimal minimax regret bound is $\Theta(\sqrt{dT\min\{d,\ln(K)\}})$ up to $\mathrm{poly} \ln(T)$ factors \citep{auer2002finite,abe2003reinforcement,dani2008stochastic,chu2011contextual,abbasi2011improved,li2019nearly}. In the adaptive-context setting, the context sets are chosen by an adaptive adversarial, where the context sets at any time step may depend on the outcomes and the learner's decisions in previous time steps. In this setting, the problem becomes harder for the learner. To the best of our knowledge, the state-of-the-art regret upper bound for the adaptive-context setting is $O(
d\sqrt{T\ln(KT)})$ \citep{abbasi2011improved}.

In this work (as well as the most related works \citep{han2020sequential,ruan2020linear,zanette2021design} on batch linear contextual bandits), we focus on a particularly useful case in the non-adaptive-context setting, namely the \emph{stochastic contexts}. In this case, the context sets at each time step are independently generated from a pre-defined (but unknown) distribution $D$. In many real-world applications such as clinical trial and recommendation system, the patients or customers can often be viewed as independent samples from the population and therefore stochastic contexts are a natural abstraction of these practical scenarios. On the other hand, \cite{han2020sequential} has shown that even in the non-adaptive-context setting, in the worst case, as many as $\Omega(\sqrt{T})$ batches are needed to achieve any $\sqrt{T}$-type regret, which is less useful in practice.

\subsubsection{Bandit Learning with Limited Adaptivity}

Batch learning fits into the broader \emph{learning with limited adaptivity} framework that recently attracts much research attention due to its potentially lower computational cost and close relation to distributed and parallel learning. 

The number of batches is a natural measurement of the adaptivity needed by the learner. Besides the above mentioned works \citep{perchet2016batched,gao2019batched} (for batch multi-armed bandits) and \citep{ruan2020linear} (for batch linear contextual bandits), \cite{han2020sequential} studied batch linear contextual bandits with Gaussian-type features and \cite{esfandiari2019regret} studied batch adversarial multi-armed bandits.

%There are two popular models to impose limitation on adaptivity for bandit learning, i.e., batch learning \citep{perchet2016batched,gao2019batched,han2020sequential} and low switching cost \citep{abbasi2011improved,simchi2019phase}. 

%In batch learning, the time steps are divided into several pre-defined batches. The learner has to execute the same policy within one batch. The limitation on adaptivity is measured by  the number of batches. For the multi-armed bandit problem, \cite{gao2019batched} achieved regret upper bound of $\tilde{O}(dT^{\frac{1}{2-2^{-M+1}}})$ and lower bound of $\Omega(T^{\frac{1}{2-2^{-M+1}}})$ when the number of batches is at most $M$. For the stochastic linear contextual bandit problem,  \cite{ruan2020linear} achieved a regret bound of $\tilde{O}(\sqrt{dT}+d^{32})$ with $\log\log(T)+2$ batches and \cite{Esfandiari_Karbasi_Mehrabian_Mirrokni_2021} showed that the regret bound is $O(T^{\frac{1}{M}}\sqrt{Td\ln(K)})$ with $M$ batches. \cite{han2020sequential} assumed that the context distribution is Gaussian with the condition that the maximal eigenvalue is at most $O(1)$ times of the minimal eigenvalue, and obtained $\tilde{O}(\sqrt{dT})$ regret using $O(\log\log(T))$ batches. 

Besides batch learning, another type of adaptivity measurement studied in literature is the policy switching cost, where the learner may monitor the sequential decisions but would like to change his/her decision policy as infrequently as possible. This is a comparably more lenient constraint, as a batch algorithm has a small number of policy switches equal to the number of batches.  For the adversarial multi-armed bandit problem, \cite{kalai2005efficient} and \cite{geulen2010regret} established a minimax regret bound of $\tilde{\Theta}(\sqrt{T})$ under the full information feedback; \cite{dekel2014bandits} later showed that the minimax regret bound is $\Theta(T^{\frac{2}{3}})$  under the bandit feedback. \cite{simchi2019phase} studied the switching cost of stochastic multi-armed bandits. For linear contextual bandits, it was shown that to achieve the $\sqrt{T}$-type regret, the optimal bound for switching cost is $d \log T$, up to  $\mathrm{poly}\log(d \log T)$ factors \citep{abbasi2011improved,ruan2020linear}.

 %However, their algorithm can not be directly applied to our problem. The key difference is that, to learn an $\epsilon$-optimal policy, they need two batches, where the first is used to learn the context distribution and the second is used to collect the information matrix, and the size of the first batch is always larger than the size of the second batch. With their algorithm, it is possible to design an algorithm with $\tilde{O}(\sqrt{T})$ with $2\log_2\log_2(T)+2$ batches. However, to achieve tighter batch complexity of $\log_2\log_2(T)+2$, we need some advanced techniques. 

\subsection{Organization}
The rest of the paper is organized as follows. In Section~\ref{sec:pre} we introduce the batch linear bandit problem and some notations. 
Then we introduce our main technical contributions and proof ideas in Section~\ref{sec:tec}. 
Section~\ref{sec:alg} is devoted to describing our main algorithm for the context-blind setting and the proof of its regret analysis (Theorem~\ref{thm:main}). In Section~\ref{sec:exppolicy}, we describe the key procedure {\tt ExplorationPolicy} and present its analysis. In Section~\ref{sec:pfdy}, we prove our new matrix concentration inequality (Lemma~\ref{lemma:dynamic-martingale-concentration}). In Section~\ref{sec:low}, we present the proof of the lower bound for the context-blind setting (Theorem~\ref{thm:lb}). In Section~\ref{sec:context-aware}, we extend our results and prove the regret upper and lower bounds for the context-aware setting (Theorem~\ref{thm:main-context-aware} and Theorem~\ref{thm:lb-context-aware}). Finally, we conclude the paper in Section~\ref{sec:conclusion}.

\section{Preliminaries}\label{sec:pre}

\paragraph{Linear Contextual Bandits with Stochastic Context Sets.} 
We consider the linear contextual bandit problem with the hidden linear model described by the $d$-dimensional vector $\btheta: \|\btheta\|_{\infty}\leq 1$. There is also a distribution $D$ over the context sets hidden from the learner. Given the time horizon $T$, during each time step $t \in \{1, 2, \dots, T\}$, a stochastic context set of $K$ feature vectors, $X_t =\{\bx_{t,1},\bx_{t,2},\dots,\bx_{t,K}\}$ is drawn from $D$ and revealed to the learner. The feature vectors are in $\mathbb{R}^d$ and $D$ guarantees that $\forall i \in \{1, 2, \dots, K\}: |\bx_{t, i}^\top \theta| \leq 1$ almost surely.\footnote{Note that our formulation is more general than the usual linear contextual bandits setting where $\|\btheta\|_2 \leq 1$ and $\|\bx_{t, i}\|_2 \leq 1$. It also includes $K$-armed multi-armed bandits as a special case.}

The learner has to choose and play an action (defined by its associated feature vector) $\by_t \in X_t$ and receives the reward $r_t = \by_t^\top \btheta + \epsilon_t$, where $\epsilon_t$ is an independent sub-Gaussian noise with zero mean and variance proxy bounded by $1$. The goal of the learner is to minimize the total (expected) regret
\[
R_{T}:= \mathbb{E}\left[ \sum_{t=1}^{T}\left( \max_{i}\{\bx_{t,i}^{\top}\btheta\} -\by_t^{\top}\btheta \right) \right]. 
\]

%In each round, a stochastic context consist of $K$ vectors  $X=\{\bx_{t,1},\bx_{t,2},\dots,\bx_{t,K} \}$ in $\mathbb{R}^d$ are revealed to the learner. Then the learner chooses some $i\in [K]$, execute the $i$-th arm and receive a reward $r:=\bx_i^{\top}\btheta+\epsilon$, where $\btheta\in \mathbb{R}^d$ is the hidden parameter and $\epsilon$ is a $1$-sub-Gaussian noise with zero mean. Denote the distribution of $X$ as $D$.
%Let $X_{t}=\{\bx_{t,1},\bx_{t,2},\dots,\bx_{t,K} \}$ denote the context in the $t$-th round and $i_{t}$ be the index of chosen arm in the $t$-th round. Then our goal is to minimize the total regret

%We consider the linear bandit problem where $\|\btheta\|_{\infty}\leq 1$ and $|\bx_{t,i}^{\top}\btheta|\leq 1 $ for any proper $t$ and $i$.  In particular, we do not require $\|\bx_{t,i}\|_{2}$ are uniformly bounded, which enables the problem to model more practical scenarios. Compared to the classical setting where $\|\bx_{t,i}\|_2\leq 1, \forall t,i$ and $\|\btheta\|_2\leq 1$, this setting is more general since it takes the multi-armed bandit problem as a special case.

\paragraph{Batch Learning.} Given the batch complexity $M$, a batch learning algorithm aims at minimize the regret $R_T$ defined above, subject to the constraints in Definition~\ref{def_batch_RL} and Definition~\ref{def_batch_RL-context-aware} for the context-blind and context-aware settings respectively.

\paragraph{Notations.} %We denote $\Delta^K$ as the $K$-dimensional simplex. 
For any non-negative integer $N$, we let $[N]$ denote $\{1,2,\ldots,N\}$.  We use $\mathbb{E}_{\mathcal{P}}[\cdot]$ and $\mathrm{Pr}_{\mathcal{P}}[\cdot]$ to denote the expectation and probability over the distribution $\mathcal{P}$ respectively. We use $\mathbf{I}$ to denote the $d$-dimensional identity matrix. We use $\log$ to denote the logarithm base $2$, and use $\ln$ to denote the logarithm base $e$.  We also define $\mathcal{T}_0 = T_0 = 0$ and $\mathcal{T}_k = \sum_{i=1}^{k}T_i$ for $1\leq k \leq M$.

\section{Technical Overviews}\label{sec:tec}

\subsection{Upper Bounds} \label{sec:tec-ub-overview}
Since our algorithms for the both context-blind and context-aware settings are similar and adopt the same  techniques, we present the technical overview based on the context-blind version. Our algorithms are elimination-based, following \citep{ruan2020linear}. At each time step $t$, give the set of context vectors $X_t=\{\bx_{t,1},\bx_{t,2},\ldots,\bx_{t,K}\}$, we maintain a confidence interval $\mathcal{I}_{t,i}$ for $\bx_{t,i}^{\top}\theta$ for each $i$. A candidate action $\bx_{t,i}$ is eliminated when there exists another candidate action $\bx_{t,i'}$ such that $\mathcal{I}_{t,i'}$ entirely lies above $\mathcal{I}_{t,i}$, meaning that the action $\bx_{t,i}$ cannot be the optimal action. Then, the clever part of the policy is to decide a distribution over the remaining candidate actions and randomly choose one to commit to according to the distribution. 

For the construction of the confidence intervals, we adopt the classical elliptical confidence intervals based on the regularized ordinary least-square (OLS) estimation~\citep{chu2011contextual}. Given a group of context vectors that are played in history $\{\by_\tau\}_{\tau=1}^t$ and corresponding observed rewards $\{r_\tau\}_{\tau=1}^t$ (such that $r_\tau = \by_\tau^{\top}\btheta + \epsilon_\tau$ where $\epsilon_\tau$ is an $1$-subgaussian noise), we construct the confidence interval for any candidate action with context vector $\bx$ to be 
\begin{align}
\mathcal{I}(\bx,\Lambda)=\left[\bx^{\top}\hat{\btheta}-\alpha\sqrt{\bx^{\top}\Lambda^{-1}\bx},\bx^{\top}\hat{\btheta}+\alpha\sqrt{\bx^{\top}\Lambda^{-1}\bx}\right]\bigcap [-1,1],
\end{align}
with $\Lambda = \lambda \mathbf{I} + \sum_{\tau=1}^t \by_\tau \by_\tau^{\top}$ is the \emph{regularized information matrix}, $\hat{\btheta}=\Lambda^{-1}\sum_{\tau=1}^t r_\tau\by_\tau$ is the regularized OLS estimation of the hidden vector $\btheta$, and  $\alpha,\lambda$ are hyper-parameters satisfying that $\alpha =\Theta(\sqrt{\ln(KdT)}+\lambda \sqrt{d})$. 

Define 
\[
w(\bx,\Lambda) =\min\left\{  ( \sqrt{\ln(KdT)}+\lambda \sqrt{d})\sqrt{\bx^{\top}\Lambda^{-1}\bx},1\right \}
\]
to be the \emph{width} of the confidence interval $\mathcal{I}(\bx,\Lambda)$. To reduce the regret, we would like to design policies to cleverly perform exploration in order to reduce the width of future estimations. %More precisely, if we let $\Lambda_k$ be the regularized information matrix collected before the $k$-th batch, i.e., $\Lambda_i = \lambda \mathbf{I} + \sum_{\tau=1}^{\mathcal{T}_{i-1}}\by_{\tau}\by_{\tau}^{\top}$, and let $D_k$ be the distribution of the set of the remaining context vectors after the elimination process based on the information learned for the first $(k-1)$ batches, we aim to minimize the following expected maximum width among the context vectors:
%\[
%\E_{X\sim D_k} \max_{\bx\in X}w(\bx, \Lambda_k) .
%\]
Formally, we introduce the following problem which is the key to our optimal batch learning algorithm.

\medskip
\noindent {\bf The Problem of Single-Phase Learning for Exploration Policy.}
{\it 
Fix $m,n \geq 0$. Let $\{X_i\}_{i=1}^m, \{Y_{j}\}_{j=1}^n$ be two groups of \emph{i.i.d.}~context sets following the same unknown distribution $D$. After observing $\{X_i\}_{i=1}^m$, we are asked to design the parameter $\lambda$ and an exploration policy $\pi$ so as to minimize the following expected maximum width
\[
\text{\rm EM-width}(D,\Lambda) := \E_{X\sim D}[\max_{\bx\in X} w(\bx,\Lambda)],
\]
where $\Lambda = \lambda \mathbf{I} + \sum_{j=1}^n \by_{j}\by_j^{\top}$ and $\by_j \sim \pi(Y_j)$ for all $1\leq j \leq n$. 
}
\medskip

In our batch learning algorithm, we need to solve the above problem once during each batch. During the $k$-th batch, we let $D = D_k$ be the distribution of the set of the remaining context vectors after the elimination process based on the information learned for the first $(k-1)$ batches. $\{X_i\}_{i=1}^m$ is obtained from the $(k-1)$-th batch, and we solve the above problem for $\pi$ which serves as the exploration policy for the $k$-th batch. The minimization goal of the above problem helps to reduce the regret starting from the $(k+1)$-th batch.

To facilitate discussion, we use $\mathcal{W}(m,n)$ to denote the minimax optimum of expected maximum width achieved the best learning algorithm $\mathcal{G}$. That is, we let 
\[
\mathcal{W}(m,n) = \inf_{\mathcal{G}}\sup_{D}\E_{\{X_i, Y_{j}\} \sim D^{\otimes (m+n)}} \E_{(\pi, \lambda) \sim \mathcal{G}(\{X_i\})} \left[\text{\rm EM-width}(D,\Lambda)\right],
\]
where $\mathcal{G}$ is the single-phase learning algorithm to decide $\pi$ and $\lambda$ based on $\{X_i\}_{i=1}^m$.

%By the framework of batch learning, the information matrix $\Lambda_i$ is roughly $\sum_{j=1}^{i-1}T_j \mathbb{E}_{X\sim D,\bx\sim \pi^j(X)}[\bx\bx^{\top} ]$. To achieve the near-optimal batch complexity, we need to set $T_i >>\sum_{j=1}^{i-1}T_{j}$. Therefore, the major component in $\Lambda$ is $T_{i-1} \mathbb{E}_{X\sim D,\bx\sim \pi^{i-1}(X)}[\bx\bx^{\top} ]$. Also noting that $\pi^{i-1}$ is determined after the $i-2$-th batch, we can formalize the key problem as below.

%In our problem, we take $m = T_{i-2}$ and $n = T_{i-1}$ for $2\leq i \leq M$.

In \citep{ruan2020linear}, the authors showed the existence of a good policy $\pi$ and the choice of $\lambda$ so that the bound of the expected maximum width leads to the desired optimal regret for $M = \Theta(\log \log T)$ batches. Their constructive proof (given the distribution $D$) is based on a reward-free LinUCB algorithm (Algorithm 2 in their paper). However, to learn such a good policy based on $\{X_i\}_{i=1}^m$, the authors employed more complicated procedures (such as {\sc CoreLearning} and {\sc CoreIdentification}). 

Both \citep{zanette2021design} and our work are inspired by the reward-free LinUCB and find that one may leverage this algorithmic framework to design learning algorithms as well. The authors of \citep{zanette2021design} worked on a similar task as the single-phase learning problem defined above and their result implies that $\mathcal{W}(m,n)\leq  O\left( \mathrm{poly}\ln(mndT)\cdot \sqrt{\ln(K)} \cdot(\sqrt{d/n}+\sqrt{d/m})\right)$.\footnote{
We state this implication by making the ``large $|\mathcal{S} \times \mathcal{A}|$'' assumption in \citep{zanette2021design}. 
%In this paper, we assume no limit on the size of the context space $\mathcal{S}$. 
%Therefore, in this section we compare the performance of  \cite{zanette2021design} in an easier setting with that of our algorithm in a more difficult setting, while still finding out that our performance is better.
} 

In comparison, in Section~\ref{sec:exppolicy} we propose {\tt ExplorationPolicy} to solve the single-phase learning problem. In Lemma~\ref{lemma:design} we analyze our algorithm and show that 
\begin{align}\label{eq:overview-elimination-exploration-policy-1}
\mathcal{W}(m,n) \leq  O\left( \mathrm{poly}\ln(mndT)\cdot  \sqrt{\ln(K)}\cdot (\sqrt{d/n} +d/m)\right).
\end{align}
Clearly, the performance of our {\tt ExplorationPolicy} outperforms that the results in \citep{zanette2021design} in terms of the dependence on $m$. 
Note that in our batch learning algorithm, $m$ represents the number of samples in the previous batch, which is much smaller than the size of the current batch (represented by $n$). Therefore, the $d/m$ term in our bound usually dominates and it is crucial for us to make this $\sqrt{m}$-factor improvement to achieve the optimal regret in the batch learning model. Indeed, without this improvement, the result of \citep{zanette2021design} does not even imply the desired optimal regret for $M = \Theta(\log \log T)$ batches (the result of \citep{ruan2020linear}).

The proof of Lemma~\ref{lemma:design} is based on the analyais of reward-free LinUCB, and involves a scaled-and-clipped update rule and a dynamic concentration inequality for PSD matrices. Below we present the high-level ideas.
%via a toy example of one dimension. 
%A more detailed discussion is provided in Section~\ref{sec:techdiscussion}.

\paragraph{Learning the Exploration Policy via Reward-free LinUCB.} In \citep{ruan2020linear}, the authors showed that given $\{Y_j\}_{j=1}^n \sim D^{\otimes n}$, the reward-free LinUCB algorithm can produce $\{\by_{j}\}_{j=1}^n$ (and therefore also form a policy) such that 
\begin{align}\label{eq:overview-reward-free-linucb-0.5}
\text{\rm EM-width}(D, U)\leq O(\sqrt{d\ln(nd/\kappa)/n}),    
\end{align} 
where $U =\kappa \mathbf{I} + \sum_{j=1}^n \by_{j} \by_j^{\top}$, and $\kappa > 0$ is polynomially small (e.g., $\kappa = T^{-2}$) to make sure that $U$ is invertible while we do not lose much in \eqref{eq:overview-reward-free-linucb-0.5}. In this work, our {\tt ExplorationPolicy} algorithm cannot direct access $\{Y_j\}_{j=1}^n$ but has to learn the distribution $D$ and construct a policy $\pi$ based on a much smaller data set $\{X_i\}_{i=1}^m$. \footnote{Comparing \eqref{eq:overview-elimination-exploration-policy-1} and \eqref{eq:overview-reward-free-linucb-0.5}, we also find that the cost we pay in the expected maximum width for learning is about $\tilde{O}(d/m)$.}

To make learning possible, we first notice that the original reward-free LinUCB in \citep{ruan2020linear} produces $\{\by_{j}\}_{j=1}^n$ by the so-called \emph{argmax policy}: $\by_j = \pi_j(Y_j) := \arg\max_{\by  \in Y_j} \{\by^\top W_j \by\}$, where $W_j = \kappa \mathbf{I} + \sum_{q=1}^{j} \by_q \by_q^\top$  is the regularized information matrix obtained from the samples before $j$.\footnote{We warn the readers that this is an oversimplification of the algorithm by omitting a few important techniques such as the volume-based lazy update of the $W_j$ matrices. However, we choose to the current presentation to better motivate our technical contributions.} One may combine $\{\pi_j\}$ via carefully chosen probability weights to form a desired one-shot policy $\pi$ (the \emph{mixed argmax policy}).  

In our algorithm {\tt ExplorationPolicy}, we observe that we may approximately learn the policy $\pi$ from $\{X_i\}_{i=1}^m$ as long as we are able to approximately construct $\{W_j\}$ based on the $\{X_i\}$ data set. I.e., for any $W = W_j$, we would like to construct $\check{W}$ as long as $\Omega(1) \cdot \check{W} \preccurlyeq W \preccurlyeq \tilde{O}(1) \cdot \check{W}$, and the key here is to lower bound $W$ by $\Omega(1) \cdot \check{W}$.

To illustrate the main technical challenge and our solution, let us consider the following task: let $\bx_i \sim \pi(X_i)$ and $\check{W} =\frac{n}{m} (\kappa \mathbf{I} + \sum_{i=1}^m \bx_i \bx_i^\top)$, we would like to choose appropriate regularization parameter $\lambda > 0$ so that with high probability (over the randomness of $\{X_i\}$ and $\{\bx_i\}$) it holds that
\begin{align}\label{eq:overview-reward-free-linucb-1}
\Omega(1) \cdot \check{W} \preccurlyeq n \left(\lambda \textbf{I} + \E_{X \sim D, \bx \sim \pi(X)} \bx \bx^\top \right) .
\end{align}
We note that this task may seem a bit different from our goal: 1) $\pi$ is unknown to the learner, and cannot be used to construct $\check{W}$; 2) the upper bound is quite different from $W = \kappa \textbf{I} + \sum_{j=1}^{n} \by_j \by_j^\top$. Indeed, these issues may be (quite non-trivially) resolved by observing that 1) $\pi$ is a mixed policy and can be iteratively updated to its final form and 2) relate the Right-Hand-Side of \eqref{eq:overview-reward-free-linucb-1} to $W$ by another matrix concentration inequality. 

We now focus on the task of \eqref{eq:overview-reward-free-linucb-1}, which is equivalent to 
\begin{align}\label{eq:overview-reward-free-linucb-2}
\Omega(1) \cdot \left( \kappa \mathbf{I} + \sum_{i=1}^m \bx_i \bx_i^\top\right)  \preccurlyeq m \left(\lambda \textbf{I} + \E_{X \sim D, \bx \sim \pi(X)} \bx \bx^\top \right) .
\end{align}
Note that since $\bx_i \sim \pi(X_i)$ are \emph{i.i.d.}~random variables, standard matrix concentration inequalities would imply \eqref{eq:overview-reward-free-linucb-2} when $m \lambda \mathbf{I}$ upper bounds $\bx_i \bx_i^\top$ (up to logarithmic factors of the inverse of the failure probability) almost surely, i.e., $m \lambda \geq \tilde\Omega(1) \Leftrightarrow \lambda \geq \tilde\Omega(1/m)$. This choice of $\lambda$ would lead to a $d/\sqrt{m}$ term instead of the $d/m$ term in \eqref{eq:overview-elimination-exploration-policy-1}.

\paragraph{The Scaled-and-Clipped Update Rule.} While it is not possible to establish \eqref{eq:overview-elimination-exploration-policy-1} (with high probability) for a smaller $\lambda$ (e.g., $\lambda = o(1/m)$), we introduce the \emph{scaled-and-clipped update rule} in the reward-free LinUCB (Line~\ref{line:alg-exppolicy-6} of Algorithm~\ref{alg:exppolicy}) which eventually leads to the improvement of $\lambda$.

More concretely, instead of working with $\check{W} =\frac{n}{m} (\kappa \mathbf{I} + \sum_{i=1}^m \bx_i \bx_i^\top)$, we define the \emph{scaled-and-clipped version} of $\bx_i$ and the \emph{scaled-and-clipped information matrix} $U_i$,\footnote{The definition of $\tilde{\bx}_i$ here is slightly different from the real algorithm (up to a logarithmic factor $L$). We make this simplification only to better explain the main algorithmic ideas.}
\[
\tilde{\bx}_i :=\min\left\{ \sqrt{\frac{1}{\bx_i^{\top}U_{i-1}^{-1}\bx_i}}, 1\right\}\bx_i, \qquad  U_{i} = \kappa \mathbf{I}+\sum_{p=1}^i \tilde{\bx}_p\tilde{\bx}_p^{\top} .
\]
We will use $\{U_i\}$ to construct the mixed argmax policy instead of $W$'s. The downside of this new update rule is that we use shorter feature vectors $\{\tilde{\bx}_i\}$ instead of the original ones, which leads to the slower growth of the information matrix. However, this slowing effect is not too bad -- if we repeat each $\bx_i$ by $\ln (1/\kappa) = O(\ln T)$ times, scaled-and-clipped information matrix $U_i$ would upper bound the original information matrix. Through a more rigorous analysis, we will see that this effect would only hurt the regret by a logarithmic factor.

On the other hand, the benefit of our scaled-and-clipped update rule is that instead of establishing \eqref{eq:overview-reward-free-linucb-2}, we only need to lower bound the Right-Hand-Side of \eqref{eq:overview-reward-free-linucb-2} by the scaled-and-clipped information matrix, i.e., to prove that the following inequality holds with high probability.
\begin{align}\label{eq:overview-clipped-update-1}
\Omega(1) \cdot \left( \kappa \mathbf{I} + \sum_{i=1}^m \tilde{\bx}_i \tilde{\bx}_i^\top\right) \preccurlyeq m \left(\lambda \textbf{I} + \E_{X\sim D, \bx\sim \pi(X)} \bx\bx^\top\right),
\end{align}
where $\tilde{\bx}_i =\min\left\{ \sqrt{\frac{1}{\bx_i^{\top}U_{i-1}^{-1}\bx_i}}, 1\right\}\bx_i$ as before and we assume that $\bx_i \sim \pi(X)$ are \emph{i.i.d.} (as we did in \eqref{eq:overview-reward-free-linucb-2}).

Let $U = \kappa \mathbf{I} + \sum_{i=1}^m \tilde{\bx}_i \tilde{\bx}_i^\top$. To prove \eqref{eq:overview-clipped-update-1}, we only need to show that there exists constants $c_2 > c_1 > 0$ such that
\begin{align}
c_2 U  \preccurlyeq c_1 U +  m \lambda \textbf{I} + m \E_{X\sim D, \bx\sim \pi(X)} \tilde{\bx}\tilde{\bx}^\top , \label{eq:overview-clipped-update-2}
\end{align}
which implies that 
\begin{align*}
(c_2 - c_1) U  \preccurlyeq m\lambda \mathbf{I}  + \E_{X\sim D, \bx\sim \pi(X)} \tilde{\bx}\tilde{\bx}^\top  \preccurlyeq m\lambda \mathbf{I} + \E_{X\sim D, \bx\sim \pi(X)} {\bx}{\bx}^\top , 
\end{align*}
where the last inequality is due to the clipping operation in the update rule.

Now let us focus on the task of establishing \eqref{eq:overview-clipped-update-2}. Thanks  to the definition of the scaled-and-clipped version $\tilde{\bx}_i$, we have that $\tilde{\bx}_i \tilde{\bx}_i^\top \preccurlyeq U$ almost surely. Therefore, it is possible to establish \eqref{eq:overview-clipped-update-2} as long as we choose $m \lambda \geq \Omega(\kappa)$ to cover the $\kappa I$ term in $U$ (which only requires that $\lambda \geq \Omega(\kappa/m) = \Omega(m^{-1} T^{-2})$, leading to the better $d/m$ error term in \eqref{eq:overview-elimination-exploration-policy-1}). On the other hand, however, we note that $U$, while serving as an upper bound of the random matrices $\tilde{\bx}_i \tilde{\bx}_i^\top$, is also a random variable by itself. We do not find sharp matrix concentration inequalities in literature to fit our need, and we have to resort to the matrix concentration inequality with dynamic upper bounds proved in Lemma~\ref{lemma:dynamic-martingale-concentration}. The formal version of \eqref{eq:overview-clipped-update-2} is stated and proved in Lemma~\ref{lemma:emp2}.

%We consider a \emph{clipped update rule} in the update of reward-free LinUCB. More precisely,  we rescale $\bx_i$ to $\tilde{\bx}_i :=\min\left\{ \sqrt{\frac{1}{\bx_i^{\top}U_{i-1}^{-1}\bx_i}}, 1\right\}\bx_i$ with $U_{i} = \kappa \mathbf{I}+\sum_{j=1}^i \tilde{\bx}_j^{\top}\tilde{\bx}_j$. As a result, we have that $\tilde{\bx}_i^{\top}U_{i-1}^{-1}\tilde{\bx}_i\leq 1$.
%We benefit from the rescaling operation as below: (\romannumeral1) the width function invariant, i.e., $w(\bx_i,U_{i-1})= w(\tilde{\bx}_i ,U_{i-1})$; (\romannumeral2) $\text{\rm EM-width}(D,U_m)$ is still bounded by 
%$O(\sqrt{d\ln(md/\kappa)/m})$; (\romannumeral3)
%we reduce the uniform upper bound of $\tilde{\bx}_i\tilde{\bx}_i^{\top}$ up to $U_{m}$,  and it would allow us to choose small value for $\lambda $.

\paragraph{Matrix Concentration with Dynamic Upper Bounds.} As stated above, our Lemma~\ref{lemma:dynamic-martingale-concentration} is crucial to the analysis of the scaled-and-clipped update rule. One simple approach to prove Lemma~\ref{lemma:dynamic-martingale-concentration} is to assume the upper bound matrix $W_n$ (correspondingly $U$ in \eqref{eq:overview-clipped-update-2}) were fixed, apply the ordinary matrix concentration inequality and finally take a union bound over an $\epsilon$-net of $W_n$. However, such an approach would introduce extra $\mathrm{poly}(d)$ factors in the bound and lead to sub-optimal regret bound for our batch learning algorithm.

Our proof of Lemma~\ref{lemma:dynamic-martingale-concentration} follows the classical exponential moment method. However, we analyze the exponential moment of a specially chosen matrix, namely $W_n^{-1/2} (\sum_{i=1}^n (X_i - (1+\epsilon) Y_i)) W_n^{-1/2}$. To bound the trace of this exponential moment, we resort to some deep analysis about Lieb's theorem on convex trace functions (Theorem~\ref{thm:lieb} and Lemma~\ref{lemma:accc4}). Besides, we also make the critical observation that the function $\mathrm{Tr}\left(\exp\left(U^{\top}AU\right)\right)$ is bounded by $\mathrm{Tr}(\exp(A))$ with an additive error at most $d$ for any $U$ such that $U^{\top}U\preccurlyeq \mathbf{I}$ and any symmetric $A$ (Lemma~\ref{lemma:accc3}). 

%In high-level idea, we consider the regularized energy function. In the case $d = 1$, the energy function would be $E_n=\mathbb{E}\left[\exp\left(\frac{t \sum_{i=1}^k X_k}{W_k}\right)\right]$ where $W_k$ is the dynamic upper bound. We show that, by choosing $t$ properly, we can control $E_n$ with a linear function $O(nd)$. Given the bound of $E_n$, we can easily finish the proof with Markov's inequality.

\subsection{Lower Bounds}

\paragraph{Context-blind Lower Bound.}
To construct the lower bound instances, we first construct $M$ mutually independent sub-problems with dimension $d/M$. At each time step, a uniform random sub-problem (i.e., its context set) is selected and presented to the learner. To analyze the regret performance of any batch learning algorithm, we divide the time horizon $T$ into $M$ consecutive stages with properly chosen lengths $\{T_k\}_{k=1}^M$ for each stage. We will show that, for each $k \in \{1, 2, \dots, M\}$, during the $k$-th stage, if the learner does not start a new batch and update its policy, then a large regret would incur for the $k$-th sub-problem in this stage. 

Our construction for the sub-problems is as follows.
 For each $k \in \{1, 3, 4, 5, \dots, M\}$, we choose the sub-problem to be the hard instance for the $K$-armed linear contextual linear bandit problem with burn-in time $\tilde{\Theta}(T_{k-1})$. In other words, we construct the $k$-th sub-problem so that it is hard for the learner to find a good policy for the $k$-th problem during the first $\sum_{i\leq k-1}T_{i}$ time steps. 

The most interesting sub-problem design is for $k = 2$, which takes the advantage of the context-blind setting to force the learner incur more regret. In this sub-problem, we consider a linear contextual bandit problem $d/M$ arms (assuming that $K\geq d/M$). However, during each time step, only $(d/(2M) + 1)$ arms are presented to the learner. In particular, we set the first $d/(2M)$ arms to be the \emph{frequent} arms that always appear in the context set. For the rest $d/(2M)$ \emph{infrequent} arms, we choose one of them uniformly randomly and include the chosen one in the context set at each time step. The best exploration strategy for the learner is to play the infrequent arm with a higher probability, e.g., $\frac{1}{2}$ and play the frequent arms with probability $\frac{M}{d}$. However, due to the context-blind setting, the learner can not tell whether an arm is frequent or infrequent with insufficient information, and his best strategy is to play the arms with the same probability. As a result, the infrequent arms are insufficiently explored, which leads to an extra $\tilde{\Theta}(\sqrt{d})$ factor in the regret.

%Among the total $\frac{d}{M}$ arms, there are $\frac{d}{2M}$ \emph{frequent} arms which appears in  every round, and for each left \emph{infrequent} arm, the arm appears with probability 

%The are two kinds of arms: \emph{regular} arms and \emph{guest} arms.

\paragraph{Context-aware Lower Bound.} The construction of the lower bound instances for the context-aware setting is quite straightforward. We simply re-define the sub-problem for $k=2$ to be the hard instance for the $K$-armed linear contextual linear bandit problem (the same as $k \in \{3, 4, 5, \dots, M\}$ in the context-blind case) and re-design the stage lengths $\{T_k\}_{k=1}^M$ to achieve the optimal lower bound in this setting. 

\section{Context-blind Batch Learning: Algorithm and Regret Analysis}\label{sec:alg}

We now present our context-blind batch learning algorithm in Algorithm~\ref{alg:main}. Given $T$ and $d$, we define $\tilde{d} :=d\ln(TKd/\delta)$. In the case $d\leq T\leq d\ln(K)$, the regret bound is exactly $\Theta(T)$, and in the case $d\ln(K)<T \leq \tilde{d}$, the regret lower bound is $\Omega(d\ln(K))$ and the upper bound is at most $O(T)\leq O(d\ln(KTd/\delta))$. Therefore, in the case $d\leq T<\tilde{d}$, the trivial upper bound $O(T)$ is optimal up to logarithmic factors in $T$ and $d$. Below we assume $T\geq \tilde{d}$.

Let the time schedule $\{T_{k}\}_{k=1}^{M}$ such that $\sum_{k=1}^{M}T_{k}\geq T$ to be determined later.
We can then accordingly calculate the end of each batch by
\begin{align}
\mathcal{T}_k= \min\left\{\sum_{\ell=1}^{k} T_k, T\right\} .
\end{align}
% Let $M$ be the smallest integer such that $\mathcal{T}_M = T$, which is also the number of batches needed by our algorithm. It is easy to verify that $M \leq \lceil \log \log(T)\rceil + 2$.

\iffalse
The algorithm the following fixed batch sizes
\begin{align}
 T_1 := \lceil \sqrt{T\tilde{d}}  \rceil, \qquad T_2 := \lceil  \sqrt{T\tilde{d} } \rceil, \qquad
 T_k := \lceil T^{1-\frac{1}{2^{k-1}}} \tilde{d}^{\frac{1}{2^{k-1}}} \rceil \quad \forall k \in \{3, 4, \dots\}. \label{eq:def-batch-size}
\end{align}
\fi

\medskip

In preparation for explaining the algorithm, we first introduce a few variables and notations used in the algorithm. During the $k$-th batch, the algorithm learns an estimation, namely $\hat{\btheta}_k$, of the hidden vector $\btheta$, as well as an information matrix $\Lambda_k$ that is used to construct the confidence interval for the estimated rewards based on $\hat{\btheta}_k$. More specifically, given the pair $(\Lambda_k, \hat{\btheta}_k)$, we set the confidence interval for the expected reward of any feature vector $\bx$ to be $[\bx^\top \hat{\btheta}_k \pm \alpha \sqrt{\bx^\top \Lambda_k^{-1} \bx}]$, where $[a \pm b]$ denotes the interval $[a-b, a+b]$ and we set 
\begin{align}
\alpha :=  \sqrt{50\ln(KTd/\delta)}.
\end{align}

Given the pair $(\Lambda_k, \hat{\btheta}_k)$, for any context set $X \subseteq \mathbb{R}^d$, we define the following natural elimination procedure based on the corresponding confidence intervals
\begin{align}
\mathcal{E}(X; (\Lambda_k, \hat{\btheta}_k)) := \left\{ \bx \in X : \bx^\top \hat{\btheta}_k + \alpha \sqrt{\bx^\top \Lambda_k^{-1} \bx} \geq  \boldsymbol{y}^\top \hat{\btheta}_k - \alpha \sqrt{\boldsymbol{y}^\top \Lambda_k^{-1} \boldsymbol{y}}, \forall \boldsymbol{y} \in X \right\} .
\end{align}
In words, $\mathcal{E}(X; (\Lambda_k, \hat{\btheta}_k)) $ returns the set of the survived feature vectors, each of which remains possible to hold the highest expected reward when assuming all confidence intervals based on $(\Lambda_k, \hat{\btheta}_k)$ contains their estimation targets. 

By the end of the $k$-th batch, the algorithm would have learned $k$ pairs $\{(\Lambda_i, \hat{\btheta}_i)\}_{i=1}^k$, and we naturally extend our elimination procedure to $\{(\Lambda_i, \hat{\btheta}_i)\}_{i=1}^k$ as follows.
\begin{align}
\mathcal{E}(X; \{(\Lambda_i, \hat{\btheta}_i)\}_{i=1}^k) := \mathop{\cap}_{i=1}^{k} \mathcal{E}(X; (\Lambda_i, \hat{\btheta}_i)) .
\end{align}

When $X \sim D$ and given $\{(\Lambda_i, \hat{\btheta}_i)\}_{i=1}^k$, we denote the distribution of $\mathcal{E}(X; \{(\Lambda_i, \hat{\btheta}_i)\}_{i=1}^k)$ by $D_{k+1}$.

\begin{algorithm}
\caption{{\tt Context-Blind Batch Learning}} \label{alg:main}
\begin{algorithmic}[1]
\STATE{\textbf{Initialize:} $\lambda\leftarrow 10/T$;  $\Lambda_0 \leftarrow \lambda \mathbf{I};$ $\hat{\btheta}_0\leftarrow \boldsymbol{0}$;}
\FOR{$t=1,2,\ldots, \mathcal{T}_1$}
\STATE{Observe $X_t$;}
\STATE{Play the arm with the feature vector $\by_t\sim \pi^{\mathtt{G}}(X_t)$ and receive the reward $r_t$;\label{line:y}}
\ENDFOR
\iffalse
\STATE{$W\leftarrow \lambda \mathbb{I}$;}
\FOR{$t=1,2,\ldots,\mathcal{T}_1/2$}
\STATE{$L\leftarrow \by_t^{\top}W^{-1}\by_t$;}
\IF{$L>1$}
\STATE{$\bz_t\leftarrow \frac{1}{\sqrt{L}}\by_t$,  $r'_t\leftarrow \frac{1}{\sqrt{L}}r_t;$}
\ELSE
\STATE{$\bz_t\leftarrow \by_t$, $r'_t\leftarrow r_{t}$;}
\ENDIF
\STATE{$W\leftarrow W+\by_t\by_t^{\top}$;}
\ENDFOR
\fi
\STATE{$\Lambda_1 \leftarrow \lambda \mathbf{I}+ \sum_{t=1}^{\mathcal{T}_1/2}\by_t\by_t^{\top}$; $\hat{\btheta}_1\leftarrow \Lambda_1^{-1}\sum_{t=1}^{\mathcal{T}_1/2}r_{t}\by_t$;}
\STATE{$\pi_2\leftarrow \mathtt{ExplorationPolicy}\left(  \{\mathcal{E}(X_t, \{\Lambda_{1},\hat{\theta}_{1} \}) \}_{t=\mathcal{T}_1/2+1}^{\mathcal{T}_1} \right)$}
\FOR{$k=2,\ldots,M$}
\FOR{$t= \mathcal{T}_{k-1}+1, \mathcal{T}_{k-1}+2, \ldots , \mathcal{T}_k$}
\STATE{Observe $X_t$;}
\STATE{$X_t^{(k)}\leftarrow \mathcal{E}(X_t,\{\Lambda_{i},\hat{\btheta}_{i}\}_{i=1}^{k-1} )$; \label{line:alg-main-5}}
\STATE{Play the arm with the feature vector $\boldsymbol{y}_t \sim \pi_{k}(X^{(k)}_t)$ and receive the reward $r_t$;}
\ENDFOR
\STATE{$\Lambda_{k}\leftarrow \lambda \mathbf{I} +  \sum_{t=\mathcal{T}_{k-1}+1}^{\mathcal{T}_{k-1}+T_k/2} \boldsymbol{y}_t \boldsymbol{y}_t^{\top}$; $\hat{\btheta}_k \leftarrow \Lambda_k^{-1}  \sum_{t=\mathcal{T}_{k-1}+1}^{\mathcal{T}_{k-1}+T_k/2} r_t \boldsymbol{y}_t$; \label{line:alg-main-8}}
\STATE{$X^{(K+1)} \leftarrow \mathcal{E}(X_t,\{\Lambda_{i},\hat{\btheta}_{i}\}_{i=1}^{k} )\}_{t=\mathcal{T}_{k-1}+T_k/2+1}^{\mathcal{T}_k}$;}
\STATE{$\pi_{k+1}\leftarrow \mathtt{ExplorationPolicy}( X^{(k+1)} )$; \label{line:alg-main-9}}
\ENDFOR
\end{algorithmic}
\end{algorithm}

\medskip

We now explain the key steps of the algorithm. For the first batch, we take actions according to the local optimal design policy $\pi^{\mathtt{G}}$, which is defined by the lemma below.

\begin{lemma}[General Equivalence Theorem in \citep{kiefer1960equivalence}]\label{lemma_od}
For any bounded subset $X\subset \mathbb{R}^d$, there exists a distribution $\mathcal{K}(X)$ supported on $X$, such that for any $\epsilon >0$, it holds that
\begin{align}
\max_{\bx\in X}\bx^{\top} \left( \epsilon \mathbf{I} +\mathbb{E}_{\by\sim \mathcal{K}(X)}[\by\by^{\top}] \right)^{-1}\bx \leq d.\label{eq_od}
\end{align}	
	Furthermore, there exists a mapping $\pi^{\mathtt{G}}$, which maps a context $X$ to a distribution over $X$ such that 
	\begin{align}
    \max_{\bx\in X}\bx^{\top}(\epsilon\mathbf{I}+\mathbb{E}_{\by\sim \pi^{\mathtt{G}}(X) }[\by\by^{\top}])^{-1}\bx \leq 2d.
\end{align}
In particular, when $X$ has a finite size, $\pi^{\mathtt{G}}(X)$ could be implemented within $\mathrm{poly}(|X|)$ time.
\end{lemma}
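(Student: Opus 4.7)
The statement is the classical Kiefer--Wolfowitz equivalence theorem, relating $D$-optimal and $G$-optimal designs. My plan is (i) to prove existence of $\mathcal{K}(X)$ by variationally analyzing a $D$-optimal design, and (ii) to produce the mapping $\pi^{\mathtt{G}}$ via a Frank--Wolfe-type iterative scheme that terminates in polynomial time with a $2$-approximation guarantee.

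\textbf{Existence of $\mathcal{K}(X)$ with the bound $d$.} Let $M(\pi) := \mathbb{E}_{\by\sim\pi}[\by\by^{\top}]$ and define $F(\pi) := \log\det(\epsilon\mathbf{I} + M(\pi))$. Since $X$ is bounded, the space of Borel probability measures on $X$ is weak-$\ast$ compact; the map $\pi \mapsto M(\pi)$ is affine and $\log\det$ is concave on the positive-definite cone, so $F$ is upper semi-continuous and concave and therefore attains its maximum at some $\mathcal{K}(X)$. For any $\bx_0 \in X$, consider the perturbation $\pi_t := (1-t)\mathcal{K}(X) + t\,\delta_{\bx_0}$. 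Differentiating at $t = 0$ and using optimality gives the first-order necessary condition
\[\bx_0^{\top}(\epsilon\mathbf{I} + M(\mathcal{K}(X)))^{-1}\bx_0 \;\leq\; \mathrm{tr}\!\left((\epsilon\mathbf{I}+M(\mathcal{K}(X)))^{-1} M(\mathcal{K}(X))\right).\]
The right-hand side equals $\mathrm{tr}\!\left(\mathbf{I} - \epsilon(\epsilon\mathbf{I}+M(\mathcal{K}(X)))^{-1}\right) \leq d$, establishing \eqref{eq_od}.

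\textbf{Algorithmic $\pi^{\mathtt{G}}$ with the bound $2d$.} For finite $\mathrm{supp}(X)$ I run Frank--Wolfe on $F$. Initialize $\pi^{(0)}$ as, say, the uniform distribution over $\mathrm{supp}(X)$; the $\epsilon\mathbf{I}$ term keeps $A_t := \epsilon\mathbf{I} + M(\pi^{(t)})$ invertible throughout. At iteration $t$, compute $\bx^{(t)} := \arg\max_{\bx \in \mathrm{supp}(X)} \bx^{\top}A_t^{-1}\bx$ and $L_t := \bx^{(t)\top}A_t^{-1}\bx^{(t)}$. If $L_t \leq 2d$, output $\pi^{\mathtt{G}}(X) := \pi^{(t)}$; otherwise update $\pi^{(t+1)} := (1-\gamma_t)\pi^{(t)} + \gamma_t \delta_{\bx^{(t)}}$ with a carefully chosen stepsize $\gamma_t = \Theta(1/L_t)$. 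Each iteration costs $\mathrm{poly}(|\mathrm{supp}(X)|, d)$ (a matrix inverse plus a linear search over $\mathrm{supp}(X)$), so it suffices to bound the number of iterations by a polynomial.

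\textbf{Main obstacle.} The delicate step is the per-iteration progress bound. Setting $B := A_t^{-1/2}\bigl(\bx^{(t)}\bx^{(t)\top} - M(\pi^{(t)})\bigr)A_t^{-1/2}$, we have
\[F(\pi^{(t+1)}) - F(\pi^{(t)}) = \log\det(\mathbf{I} + \gamma_t B),\]
with $\mathrm{tr}(B) = L_t - \mathrm{tr}(A_t^{-1}M(\pi^{(t)})) \geq L_t - d$, $\|B\|_{\mathrm{op}} \leq L_t + 1$, and $\mathrm{tr}(B^2) \leq 2L_t^2 + 2d$ (using $(P-Q)^2 \preccurlyeq 2P^2 + 2Q^2$ together with $M(\pi^{(t)}) \preccurlyeq A_t$). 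Combining these with the scalar inequality $\log(1+x) \geq x - x^2$ on $|x|\leq 1/2$, the choice $\gamma_t = c/L_t$ for a sufficiently small absolute constant $c$ yields $F(\pi^{(t+1)}) - F(\pi^{(t)}) = \Omega(1)$ whenever $L_t > 2d$. Since $F(\pi)$ is uniformly bounded above by $d\log(\epsilon + \max_{\bx\in X}\|\bx\|^2)$, the total iteration count is polynomial, completing the construction of $\pi^{\mathtt{G}}$.
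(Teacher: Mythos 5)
The paper offers no proof of this lemma --- it is cited to Kiefer--Wolfowitz --- so there is no in-paper argument to compare against; your proposal supplies the standard one (variational characterization of a $D$-optimal design for the bound $d$, then a Frank--Wolfe scheme for an efficient $2$-approximation), and the individual computations are sound: the first-order condition gives $\bx_0^{\top}A^{-1}\bx_0 \leq \mathrm{tr}(A^{-1}M) = d - \epsilon\,\mathrm{tr}(A^{-1}) \leq d$, and the bounds $\mathrm{tr}(B) \geq L_t - d$, $\|B\|_{\mathrm{op}} \leq L_t + 1$, $\mathrm{tr}(B^2) \leq 2L_t^2 + 2d$ do combine with $\log(1+x) \geq x - x^2$ and $\gamma_t = c/L_t$ to give an $\Omega(1)$ gain per step while $L_t > 2d$.

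The one genuine gap is the quantifier over $\epsilon$. The lemma asserts a \emph{single} distribution $\mathcal{K}(X)$ (and a single policy $\pi^{\mathtt{G}}(X)$) satisfying the bound \emph{for every} $\epsilon > 0$, and this uniformity is what the paper actually uses: in Lemma~\ref{lemma:addd1} the inequality $\max_{\bx\in X}\bx^{\top}W^{-1}\bx \leq \min\{K,d\}\,\mathbb{E}_{\by\sim\pi^{\mathtt{G}}(X)}[\by^{\top}W^{-1}\by]$ is invoked for an \emph{arbitrary} PD matrix $W$, which follows only from the $\epsilon$-uniform statement (equivalently, from $\bx\bx^{\top} \preccurlyeq 2d\,\mathbb{E}_{\by\sim\pi^{\mathtt{G}}(X)}[\by\by^{\top}]$), not from the bound at one fixed $\epsilon$. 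Both your maximizer of $\log\det(\epsilon\mathbf{I}+M(\pi))$ and your Frank--Wolfe iterate depend on $\epsilon$, so as written you prove the weaker ``for every $\epsilon$ there exists a design'' statement; relatedly, your iteration count $O\bigl(d\log((\epsilon+\max_{\bx}\|\bx\|^2)/\epsilon)\bigr)$ blows up as $\epsilon \to 0$ and is not $\mathrm{poly}(|\mathrm{supp}(X)|)$ uniformly in $\epsilon$. The fix is standard: run the same two arguments on the unregularized objective $\log\det$ restricted to $V := \mathrm{span}(X)$, where the ($2$-approximate) $D$-optimal design $M$ is nondegenerate, obtaining $\max_{\bx\in X}\bx^{\top}M^{+}\bx \leq \dim V \leq d$ (resp.\ $2d$); then for every $\epsilon>0$ and $\bx \in V$ one has $\bx^{\top}(\epsilon\mathbf{I}+M)^{-1}\bx \leq \bx^{\top}M^{+}\bx$ because $M^{1/2}(\epsilon\mathbf{I}+M)^{-1}M^{1/2} \preccurlyeq \mathbf{I}$, so one $\epsilon$-free design serves all $\epsilon$ and the runtime no longer involves $\epsilon$. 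A minor further point: weak-$*$ compactness of the set of probability measures on $X$ requires $X$ closed (or one can simply restrict to finitely supported designs, which is the only case the paper needs).
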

Clearly, the computation cost to find $\pi^{\mathtt{G}}(X_t)$ is bounded by $\mathrm{poly}(|X_t|)=\mathrm{poly}(K)$. 

For $k\geq 2$, at any time $t$ during the $k$-th batch, the algorithm observes the context set $X_t \sim D$, eliminates some of the sub-optimal arms in Line~\ref{line:alg-main-5}, and denote the set of the survived arms by $X_t^{(k)}$. By the definition above, the $X_t^{(k)}$ follows the distribution $D_{k}$ when conditioned on the first $(k-1)$ batches. The algorithm then uses an exploration policy $\pi_{k-1}$ to stochastically select and play an arm $\boldsymbol{y}_t \in X_t^{(k)}$.

At the end of the $k$-th batch, our algorithm divides the $T_k$ data points collected in the batch into two parts of the equal sizes. In Line~\ref{line:alg-main-8}, our algorithm calculates $\Lambda_k$ and $\hat{\btheta}_k$ using the standard ridge regression and the first part of the data points. In Line~\ref{line:alg-main-9}, the new exploration policy $\pi_{k}$ is learned by the $\mathtt{ExplorationPolicy}$ procedure using the context sets from the second part of the data points. Note that the context sets fed into  $\mathtt{ExplorationPolicy}$ go through the elimination procedure based on $\{\Lambda_i, \hat{\btheta}\}_{i=1}^k$, which depends on the first part of the data points. We will introduce our key procedure $\mathtt{ExplorationPolicy}$ in Section~\ref{sec:exppolicy}. For now, we treat it as a black box and prove our main theorem as follows.

\subsection{The Regret Analysis: Proof of Theorem~\ref{thm:main}}\label{sec:proofthm1}

We first define the following desired event where all the confidence intervals contains their estimation targets,
\begin{align}
E := \left\{\bx^\top \btheta \in [\bx^\top \hat{\btheta}_k \pm \alpha \sqrt{\bx^\top \Lambda_k^{-1} \bx}], \forall k \in \{1, 2, \dots, M\}, \forall \bx \in X_t, \forall t \in \{1, 2, \dots, T\}\right\} .
\end{align}

By the analysis of the ridge regression (Lemma~\ref{lemma:bandit_ci}, stated and proved in Appendix~\ref{sec:pfci}) and the fact that $\alpha \geq \sqrt{\ln(KTd/\delta)}+\lambda \sqrt{d}$, via a union bound we have that $\Pr[E] \geq 1 - MT\delta$. When $E$ holds, we know that the optimal arm at any time step will never be eliminated by the elimination procedure $\mathcal{E}$. Let $\mathbb{I}[E]$ be the indicator variable which takes value $1$ when $E$ holds and value $0$ otherwise. We have the following upper bound for the expected regret of the algorithm.
\begin{align}
R_T& \leq \sum_{k=1}^{M} \sum_{t=\mathcal{T}_{k-1}+1}^{\mathcal{T}_k} \E\left[\mathbb{I}[E]  \left( \max_{\bx \in X_t} \{\bx^\top \btheta\} - \boldsymbol{y}_t^\top \btheta \right)\right] + 2T \Pr[\overline{E}] \nonumber \\
& \leq \sum_{k=1}^{M}\sum_{t=\mathcal{T}_{k-1}+1}^{\mathcal{T}_k} \E\left[\mathbb{I}[E]   \left( \max_{\bx \in X_t^{(k)}} \{\bx^\top \btheta\} - \min_{\bx \in X_t^{(k)}} \{\bx^\top \btheta\}  \right)\right] + 2 MT^2 \delta . \label{eq:ra-1}
\end{align}

\begin{lemma}\label{lemma:r1}
For any time step $t$ during any batch $k$ ($k \geq 2$), define $X_t^{(k-1)} := \mathcal{E}(X_t,\{\Lambda_{i},\hat{\btheta}_{i}\}_{i=1}^{k-2} )$ and we have that
\begin{align}
 \mathbb{E}\left[\mathbb{I}[E] \left(\max_{\bx\in X^{(k)}_{t} } \{\bx^{\top}\btheta\} - \min_{\bx\in X_t^{(k)}} \{\bx^{\top}\btheta \}\right) \right]\leq  \mathbb{E}\left[ \min\left\{  4\alpha\max_{\bx\in X_t^{(k)} }\sqrt{\bx^{\top}\Lambda_{k-1}^{-1}\bx}, 2\right\} \right].\label{eq_reg1}
\end{align}
\end{lemma}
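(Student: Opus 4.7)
The plan is to establish the inequality pointwise on the good event $E$ (and the trivial bound off $E$), and then to integrate. Let $\bx^{\ast}$ and $\bx_{\ast}$ be (measurable selections of) the maximizer and the minimizer of $\bx^{\top}\btheta$ over the finite set $X_t^{(k)}$. Since
\[
X_t^{(k)} \;=\; \mathcal{E}(X_t; \{(\Lambda_i, \hat{\btheta}_i)\}_{i=1}^{k-1}) \;\subseteq\; \mathcal{E}(X_t; (\Lambda_{k-1}, \hat{\btheta}_{k-1})),
\]
both $\bx^{\ast}$ and $\bx_{\ast}$ survive the step-$(k-1)$ elimination against \emph{every} competitor in the full set $X_t$. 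In particular, applying the survival condition to $\bx_{\ast}$ with competitor $\by := \bx^{\ast} \in X_t^{(k)} \subseteq X_t$ gives
\[
\bx^{\ast\top}\hat{\btheta}_{k-1} - \bx_{\ast}^{\top}\hat{\btheta}_{k-1} \;\leq\; \alpha\sqrt{\bx^{\ast\top}\Lambda_{k-1}^{-1}\bx^{\ast}} \;+\; \alpha\sqrt{\bx_{\ast}^{\top}\Lambda_{k-1}^{-1}\bx_{\ast}}.
\]

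Next, on the event $E$ the step-$(k-1)$ confidence intervals contain the true expected rewards for every $\bx \in X_t$, so $\bx^{\ast\top}\btheta \leq \bx^{\ast\top}\hat{\btheta}_{k-1} + \alpha\sqrt{\bx^{\ast\top}\Lambda_{k-1}^{-1}\bx^{\ast}}$ and $\bx_{\ast}^{\top}\btheta \geq \bx_{\ast}^{\top}\hat{\btheta}_{k-1} - \alpha\sqrt{\bx_{\ast}^{\top}\Lambda_{k-1}^{-1}\bx_{\ast}}$. Subtracting and plugging the previous display into the result yields
\[
\mathbb{I}[E]\left(\bx^{\ast\top}\btheta - \bx_{\ast}^{\top}\btheta\right) \;\leq\; 2\alpha\sqrt{\bx^{\ast\top}\Lambda_{k-1}^{-1}\bx^{\ast}} + 2\alpha\sqrt{\bx_{\ast}^{\top}\Lambda_{k-1}^{-1}\bx_{\ast}} \;\leq\; 4\alpha\max_{\bx \in X_t^{(k)}}\sqrt{\bx^{\top}\Lambda_{k-1}^{-1}\bx}.
\]
For the other branch of the $\min$, the preliminary assumption $|\bx^{\top}\btheta|\leq 1$ (almost surely, for every $\bx \in X_t$) gives the deterministic bound $\max_{\bx \in X_t^{(k)}}\bx^{\top}\btheta - \min_{\bx \in X_t^{(k)}}\bx^{\top}\btheta \leq 2$. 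Taking the pointwise minimum of the two upper bounds and then expectation delivers \eqref{eq_reg1}.

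There is no real obstacle here; the argument is a clean combination of the elimination-survival inequality with the validity of confidence intervals guaranteed by $E$. The one detail worth flagging is that the elimination condition is quantified over the \emph{full} context set $X_t$, so one must explicitly note $\bx^{\ast} \in X_t^{(k)} \subseteq X_t$ to plug it in as the competitor for $\bx_{\ast}$. Measurable selection of $\bx^{\ast}, \bx_{\ast}$ poses no issue since $X_t$ has at most $K$ elements.
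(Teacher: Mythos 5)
Your proposal is correct and follows essentially the same argument as the paper: both use the validity of the step-$(k-1)$ confidence intervals on $E$ together with the survival condition of the minimizer against the maximizer (the paper phrases this as the nonpositivity of the LCB-of-$\bu$ minus UCB-of-$\bv$ term after adding and subtracting, which is algebraically identical to your use of the elimination inequality), and both obtain the cap at $2$ from $|\bx^{\top}\btheta|\leq 1$ before taking expectations. No issues.
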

\begin{proof}
When $E$ happens, for any $\bu, \bv \in X_t^{(k)}$, we have $\bu^\top \btheta \in [\bu^\top \hat{\btheta}_{k-1} \pm \alpha \sqrt{\bu^\top \Lambda_{k-1}^{-1} \bu}]$ and $\bv^\top \btheta \in [\bv^\top \hat{\btheta}_{k-1} \pm \alpha \sqrt{\bv^\top \Lambda_{k-1}^{-1} \bv}]$. When this condition holds, we have that
\begin{align}
    & \mathbb{I}[E] \left(\bu^{\top}\btheta - \bv^{\top}\btheta\right) \nonumber  \\
    & \leq  \left(\bu^{\top}\hat{\btheta}_{k-1} + \alpha\sqrt{\bu^{\top}\Lambda_{k-1}^{-1}\bu} \right) - \left(\bv^{\top}\hat{\btheta}_{k-1} - \alpha\sqrt{\bv^{\top}\Lambda_{k-1}^{-1}\bv} \right) \nonumber
    \\ & = \left(\bu^{\top}\hat{\btheta}_{k-1} -\alpha\sqrt{\bu^{\top}\Lambda_{k-1}^{-1}\bu} \right)-\left(\bv^{\top}\hat{\btheta}_{k-1} + \alpha\sqrt{\bv^{\top}\Lambda_{k-1}^{-1}\bv} \right) + 2\alpha\sqrt{\bu^{\top}\Lambda_{k-1}^{-1}\bu} + 2\alpha\sqrt{\bv^{\top}\Lambda_{k-1}^{-1}\bv}\nonumber
    \\ & \leq 2\alpha\sqrt{\bu^{\top}\Lambda_{k-1}^{-1}\bu} + 2\alpha\sqrt{\bv^{\top}\Lambda_{k-1}^{-1}\bv} \label{eq:lem-r1-1}
    \\ & \leq 4\alpha\max_{\bx\in X_{t}^{(k)}}\sqrt{\bx^{\top} \Lambda_{k-1}^{-1}\bx}  \leq  4\alpha\max_{\bx\in X_{t}^{(k-1)}}\sqrt{\bx^{\top} \Lambda_{k-1}^{-1}\bx}, \nonumber
\end{align}
where the \eqref{eq:lem-r1-1} is because $\bv$ survived from the elimination based on $(\Lambda_{k-1}, \hat{\btheta}_{k-1})$, and the last inequality is because $X_{t}^{(k)}\subseteq X_{t}^{(k-1)}$. 

Letting $\bu = \arg\max_{\bx \in X_t^{(k)}} \{\bx^\top \btheta\}$ and $\bv = \arg\min_{\bx \in X_t^{(k)}} \{\bx^\top \btheta\}$ and noting that $\bu^\top \btheta, \bv^\top\btheta \in [-1, 1]$, we have that
\begin{align}
\mathbb{I}[E] \left(\max_{\bx\in X^{(k)}_{t} } \{\bx^{\top}\btheta\} - \min_{\bx\in X_t^{(k)}} \{\bx^{\top}\btheta \}\right) \leq \min\left\{4\alpha\max_{\bx\in X_{t}^{(k-1)}}\sqrt{\bx^{\top} \Lambda_{k-1}^{-1}\bx}, 2\right\}. \label{eq:lem-r1-2}
\end{align}
Taking the expectation over \eqref{eq:lem-r1-2}, we prove the lemma.
\end{proof}

\subsubsection{Regret in the First and Second Batches}

The regret in the first batch is bounded by $T_1$ trivially. For the second batch, 
we have the lemma below.
\begin{lemma}\label{lemma:b2}
With probability $1-\delta$, it holds that 
\[\mathbb{E}_{X\sim D}\left[\max_{\bx\in \mathcal{E}(X; (\Lambda_1,\hat{\btheta}_1)) }\min\left\{\sqrt{\bx^{\top}\Lambda_{1}^{-1}\bx} ,2\right\}\right] \leq O\left(\sqrt{\frac{\min\{K,d\}}{\mathcal{T}_1}\cdot\left(d\ln\left(\frac{\mathcal{T}_1}{\lambda}\right)+\ln(\mathcal{T}_1/\delta)\right)  }  \right).\]
\end{lemma}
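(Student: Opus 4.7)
The plan is to control $\Lambda_1$ in terms of the population-level second moment $\bar{M} := \mathbb{E}_{X\sim\mathcal{D}}[M_X]$, where $M_X := \mathbb{E}_{\by \sim \pi^{\mathtt{G}}(X)}[\by\by^\top]$, and then to bound the expected maximum confidence width on an \emph{independent} fresh context $X \sim \mathcal{D}$ via the G-optimal design property of Lemma~\ref{lemma_od}. Since $\by_t \sim \pi^{\mathtt{G}}(X_t)$ with $X_t$ i.i.d.\ from $\mathcal{D}$, conditioning on the context gives $\mathbb{E}[\by_t\by_t^\top \mid X_t] = M_{X_t}$ and marginally $\mathbb{E}[\by_t\by_t^\top] = \bar{M}$.

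First I would apply Lemma~\ref{lemma:dynamic-martingale-concentration} to the sequence of PSD matrices $\by_k\by_k^\top$ (whose conditional expectation given the context is $M_{X_k}$), with the dynamic upper bound $W_k := \lambda\mathbf{I} + \sum_{s<k}\by_s\by_s^\top + \sum_{\bx \in X_k}\bx\bx^\top$, which is $\mathcal{F}_k^+$-measurable once we include the revelation of $X_k$, PD, monotone in $k$, and trivially dominates $\by_k\by_k^\top$. A direct use of Proposition~\ref{prop:tropp} with a fixed $W$ is not viable, because $\|\by_t\|$ is not uniformly bounded; the dynamic upper bound is precisely what allows this step. Taking $\epsilon=1/2$ and rearranging inequality~\eqref{eq:con2} yields $\Lambda_1 \succcurlyeq c \sum_{t=1}^{\mathcal{T}_1/2} M_{X_t} - \tilde{O}(\lambda)\mathbf{I}$ with probability $\geq 1-\delta/2$. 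A second (careful) application of the same lemma to the bounded-by-$W_n$ matrices $M_{X_t}$ replaces $\sum_t M_{X_t}$ by $(\mathcal{T}_1/4)\bar{M} - \tilde{O}(\lambda)\mathbf{I}$; chaining gives $\Lambda_1 \succcurlyeq c'\mathcal{T}_1 \bar{M} - \tilde{O}(\lambda)\mathbf{I}$ on a good event of probability $\geq 1-\delta$.

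On this good event, $\bx^\top \Lambda_1^{-1}\bx \le (C/\mathcal{T}_1)\bx^\top \bar{M}^{-1}\bx$ for every $\bx$ (with the subleading regularization absorbed into the $\ln(\mathcal{T}_1/\lambda)$ factor). By Jensen,
\[
\mathbb{E}_{X\sim\mathcal{D}}\!\Bigl[\max_{\bx\in \mathcal{E}(X;(\Lambda_1,\hat{\btheta}_1))}\min\{\sqrt{\bx^\top\Lambda_1^{-1}\bx},2\}\Bigr] \;\le\; \sqrt{\mathbb{E}_X\!\Bigl[\max_{\bx\in X}\min\{\bx^\top\Lambda_1^{-1}\bx,4\}\Bigr]}.
\]
To bound the inner expectation by $\tilde{O}(hd/\mathcal{T}_1)$, I would combine two estimates on $\max_{\bx\in X}\bx^\top\bar{M}^{-1}\bx$. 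The first uses G-optimality \emph{on the fresh context} $X$: Lemma~\ref{lemma_od} gives $\bx\bx^\top \preccurlyeq 2d\,M_X$ for each $\bx\in X$, hence $\max_{\bx\in X}\bx^\top\bar{M}^{-1}\bx \le 2d\,\mathrm{tr}(\bar{M}^{-1}M_X)$, whose $X$-expectation equals $2d^2$ since $\mathbb{E}[M_X] = \bar{M}$. The second replaces the max over the $K$ arms by a sum and uses the clipping at $4$ together with $\sum_{\bx\in X}\bx\bx^\top \preccurlyeq 2dK\,M_X$ to extract a $K$-factor, producing a $Kd$-type scaling. Taking the minimum of the two estimates yields the claimed $hd = \min(K,d)\cdot d$ factor.

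The main obstacle will be extracting the sharp $\min(K,d)$ factor: the G-optimality route alone yields only $d^2$, and the improvement when $K<d$ requires combining the sum-over-arms estimate with the clipping at $4$ (and possibly exploiting that $\mathcal{E}(X;(\Lambda_1,\hat{\btheta}_1)) \subseteq X$ has size at most $K$) in a way that matches the desired scaling. A secondary technical point, already noted, is the lack of uniform norm control on $\by_t$, which is exactly the situation that motivated the new dynamic matrix concentration Lemma~\ref{lemma:dynamic-martingale-concentration}; without it the concentration step for $\Lambda_1$ would incur a spurious $\mathrm{poly}(d)$ loss.
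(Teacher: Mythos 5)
There are two genuine gaps in your proposal, and both stem from trying to prove a \emph{matrix-level} lower bound $\Lambda_1 \succcurlyeq c\,\mathcal{T}_1 \bar{M} - \tilde{O}(\lambda)\mathbf{I}$, which the paper deliberately avoids. First, the concentration step fails as written: your candidate $W_k = \lambda\mathbf{I} + \sum_{s<k}\by_s\by_s^\top + \sum_{\bx\in X_k}\bx\bx^\top$ is not monotone in $k$ (adding $\by_k\by_k^\top$ does not compensate for swapping $\sum_{\bx\in X_k}\bx\bx^\top$ for $\sum_{\bx\in X_{k+1}}\bx\bx^\top$), and, more fatally, the error term in \eqref{eq:con2} is $-\Theta(\ln(nd/\delta))\,W_n$, where $W_n$ contains $\sum_{s<n}\by_s\by_s^\top \approx \Lambda_1$ itself plus the outer products of a full context set; since the model only assumes $|\bx^\top\btheta|\le 1$ and places no bound on $\|\bx\|$, this error is in no sense $\tilde{O}(\lambda)\mathbf{I}$, and the multiplicative lower bound on $\Lambda_1$ in directions where $\bar{M}$ has small eigenvalues simply cannot be established from $\mathcal{T}_1$ samples. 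Second, your route to the final expectation yields only a $d^2$ factor (via $\bx\bx^\top\preccurlyeq 2d\,M_X$ and $\mathbb{E}_X[\mathrm{tr}(\bar{M}^{-1}M_X)]=d$), and your sketched fix for the $\min\{K,d\}$ case does not work: bounding the max by the sum over arms and using $\sum_{\bx\in X}\bx\bx^\top\preccurlyeq 2dK\,M_X$ gives $2Kd^2$, i.e.\ it makes things worse, and the G-optimal design may place zero mass on some arms, so no reverse comparison between $\sum_{\bx\in X}\bx\bx^\top$ and $K M_X$ is available. You correctly flag this as an obstacle, but it means the claimed bound is not reached when $K<d$.

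The paper's proof sidesteps both issues by never leaving the scalar, clipped world. It sets $W_t = \lambda\mathbf{I}+\sum_{\tau\le t}\by_\tau\by_\tau^\top$, bounds the empirical sum $\sum_t \min\{\by_t^\top W_{t-1}^{-1}\by_t,1\}\le O(d\ln(\mathcal{T}_1/\lambda))$ by the Elliptical Potential Lemma, transfers this to $\sum_t \E_{X,\by\sim\pi^{\mathtt{G}}(X)}\min\{\by^\top W_{t-1}^{-1}\by,1\}$ via the scalar Corollary~\ref{coro1} (clipping at $1$ supplies the needed uniform bound, so no norm assumption is required), uses $W_{t-1}\preccurlyeq\Lambda_1$ to replace $W_{t-1}$ by $\Lambda_1$, and then invokes a separate per-context inequality (Lemma~\ref{lemma:addd1}) stating $\max_{\bx\in X}\min\{\bx^\top W^{-1}\bx,1\}\le \min\{K,d\}\,\E_{\by\sim\pi^{\mathtt{G}}(X)}\min\{\by^\top W^{-1}\by,1\}$, which is exactly where the $h=\min\{K,d\}$ factor enters; Jensen finishes. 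If you want to salvage your approach, you would need to (i) replace the population-matrix comparison by the clipped data-dependent one, and (ii) find a substitute for Lemma~\ref{lemma:addd1} — at which point you would essentially have reconstructed the paper's argument.
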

\begin{proof}
Recall the definition of $\by_t$ and $r_t$ in line~\ref{line:y}, Algorithm~\ref{alg:main}. Let $W_{t}$ denote $\lambda \mathbf{I}+\sum_{\tau=1}^t \by_{\tau}\by_{\tau}^{\top}$ for $0\leq t\leq \mathcal{T}_1/2$. By the Elliptical Potential Lemma (Lemma~\ref{lemma:epl}, stated and proved in Appendix~\ref{app:pf-epl}), we have that
\begin{align}
    \sum_{t=1}^{\mathcal{T}_1/2} \min\{\by_{t}^{\top}W_{t-1}^{-1}\by_{t},1 \}\leq O\left(d\ln\left(\frac{\mathcal{T}_1}{\lambda}\right)\right).\nonumber
\end{align}

By Corollary~\ref{coro1} with $\epsilon=\frac{1}{2}$, we further have that with probability $1-\delta$, it holds that
\begin{align}
    \sum_{t=1}^{\mathcal{T}_1/2} \mathbb{E}_{X\sim D}\mathbb{E}_{\by\sim \pi^{\mathtt{G}}(X)} \min\{ \by^{\top}W_{t-1}^{-1}\by,1\} & \leq 2 \sum_{t=1}^{\mathcal{T}_1/2} \min\{\by_{t}^{\top}W_{t-1}^{-1}\by_{t},1 \}+ 56\ln(\mathcal{T}_1/\delta)\nonumber
 \\ &\leq O\left(d\ln\left(\frac{\mathcal{T}_1}{\lambda}\right)+\ln(\mathcal{T}_1/\delta)\right).\label{eq:r112}
\end{align}

Noting that $W_{t-1}\preccurlyeq W_{t}$ for $1\leq t\leq \mathcal{T}_1$ and $\Lambda_{1}=W_{\mathcal{T}_1/2}$, by \eqref{eq:r112} we have that
\begin{align}
   \mathcal{T}_1\mathbb{E}_{X\sim D}\mathbb{E}_{\by\sim \pi^{\mathtt{G}}(X)} \min\{ \by^{\top}\Lambda_{1}^{-1}\by,1\}\leq O\left(d\ln\left(\frac{\mathcal{T}_1}{\lambda}\right)+\ln(\mathcal{T}_1/\delta)\right).\label{eq:r113}
\end{align}

\begin{lemma}\label{lemma:addd1}
For any PSD matrix $w$ and context $X$, we have that
\begin{align}
\max_{\bx\in X} \{\bx^{\top}W^{-1}\bx \} \leq \min\{K,d \}   \mathbb{E}_{\by\sim \pi^{\mathtt{G}}(X)}  \by^{\top}W^{-1}\by.\label{eq:r1141}
\end{align}
\end{lemma}

\begin{proof}

for any $x$, the confidence region is $\|xW^{-1/2}\|_1$. Also we have that $\|x\|_{W(x),2}\leq \sqrt{d}$. $W = \mathbb{E}[X]$ and we need to bound $\mathbb{E}[ \mathrm{Tr}(\sqrt{XW^{-1})} \sqrt{d}         ]\leq d$. That is enough.

\begin{lemma}
For any $x$ such that $x^{\top}X^{-1}x\leq 1$, we have that $\|xW^{-1/2}\|^2_1 \leq \mathrm{Tr}(XW^{-1})$. 
\end{lemma}
\begin{proof}
 $\|xW^{-1/2}\|^2_1 \sim \|y X^{1/2}W^{-1/2}    \|^2_1\leq   (|y_1v_1|+ |y_2v_2|+...|y_dv_d|)^2 \leq \sqrt{ \sum_{i=1}^d \|v_i\|^2_{2}      }$. Use the fact the $X$ and $W$ are PSD matrices.. Consider to use the engi-space of $X$ instead of $W$.
\end{proof}

%$\max \|x V\|_1$ such that $\|x\|_2 \leq 1$ is $\sum_{i}x_i v^i \leq \sqrt{\sum_i v_i^2} = tr(V^2)$ 

By Lemma~\ref{lemma_od}, we have that
\begin{align}
\max_{\bx\in X} \{ \bx^{\top}W^{-1}\bx \} \leq \min\{K,d \}   \mathbb{E}_{\by\sim \pi^{\mathtt{G}}(X)}  \by^{\top}W^{-1}\by.\label{eq:r114}
\end{align}
In the case $\max_{\bx\in X}\bx^{\top}W^{-1}\bx\leq 1$, we have that $\by^{
\top}W^{-1}
\by\leq 1$ for any $y\in X$. It then holds that
\begin{align}
     &  \max_{\bx\in X}\min\{\bx^{\top}W^{-1}\bx,1\}\leq \min\{K,d\}\mathbb{E}_{\by\sim \pi^{\mathtt{G}}(X)} \min\{ \by^{\top}W^{-1}\by,1 \}.\nonumber
\end{align}
In the case $\max_{\bx\in X}\bx^{\top}W^{-1}\bx>1$, we analyze as below.
When $\min\{K,d \}\mathrm{Pr}_{\by\sim \pi^{\mathtt{G}}(X)}[\by^{\top}W^{-1}\by>1]\geq 1$, it is trivial that
\begin{align}
   \min\{K,d\}\mathbb{E}_{\by\sim \pi^{\mathtt{G}}(X)} \min\{ \by^{\top}W^{-1}\by,1 \}\geq 1.\nonumber
\end{align}
Otherwise, we have that
\begin{align}
  &  \min\{K,d\}\mathbb{E}_{\by\sim \pi^{\mathtt{G}}(X)} \min\{ \by^{\top}W^{-1}\by,1 \}\nonumber
    \\ &\geq \min\{K,d\}\mathbb{E}_{\by\sim \pi^{\mathtt{G}}(X)} \by^{\top}W^{-1}\by -\min\{K,d \}\mathrm{Pr}_{\by\sim \pi^{\mathtt{G}}(X)}[\by^{\top}W^{-1}\by>1] (\max_{\bx\in X}\bx^{\top}W^{-1}\bx-1)\nonumber
    \\ & \geq  \max_{\bx\in X}\bx^{\top}W^{-1}\bx-\min\{K,d \}\mathrm{Pr}_{\by\sim \pi^{\mathtt{G}}(X)}[\by^{\top}W^{-1}\by>1] (\max_{\bx\in X}\bx^{\top}W^{-1}\bx-1)\label{eq:addd2}
    \\ & = 1+ (1-\min\{K,d \}\mathrm{Pr}_{\by\sim \pi^{\mathtt{G}}(X)}[\by^{\top}W^{-1}\by>1])( \max_{\bx\in X}\bx^{\top}W^{-1}\bx-1)\nonumber
    \\ & \geq 1.\nonumber
\end{align}
Here \eqref{eq:addd2} holds by Lemma~\ref{lemma_od}.
The lemma is proved.
\end{proof}

Setting $W=\Lambda_1$ in Lemma~\ref{lemma:addd1}, we have that
\begin{align}
    \mathbb{E}_{X\sim D}\max_{\bx\in X}\min\{\bx^{\top}\Lambda_{1}^{-1}\bx,1\} \leq \frac{\min\{K,d \}}{\mathcal{T}_1}\cdot O\left(d\ln\left(\frac{\mathcal{T}_1}{\lambda}\right)+\ln(\mathcal{T}_1/\delta)\right).\label{eq:r117}
\end{align}

Therefore, 
\begin{align}
&\mathbb{E}_{X\sim D}\left[\max_{\bx\in \mathcal{E}(X; (\Lambda_1,\hat{\btheta}_1)) }\sqrt{\min\{\bx^{\top}\Lambda_{1}^{-1}\bx,1\}}  \right]\nonumber
\\ & \qquad \leq  \sqrt{\mathbb{E}_{X\sim D}\left[\max_{\bx\in X }\min\{ \bx^{\top}\Lambda_1^{-1}\bx ,1\} \right] } \leq O\left(\sqrt{\frac{\min\{K,d\}}{\mathcal{T}_1}\cdot\left(d\ln\left(\frac{\mathcal{T}_1}{\lambda}\right)+\ln(\mathcal{T}_1/\delta)\right)  }  \right).\nonumber
\end{align}
Lemma~\ref{lemma:b2} is proved.
\end{proof}

By Lemma~\ref{lemma:r1} and~\ref{lemma:b2}, the regret in the second  batch is bounded by 
\begin{align}
    O\left(T_{2}\cdot\sqrt{\frac{\min\{K,d\}d}{T_1}\ln\left(\frac{T}{\lambda\delta} \right)}  \right) +T\delta.
\end{align}

\subsubsection{Regret in the $k$-th Batch ($k\geq 3$)}

Let 
\begin{align}
L := \frac{1}{200 \ln (Td/\delta)} \qquad \text{and}\qquad \kappa := \frac{1}{T^2}
\end{align}
be the parameters to be used by $\mathtt{ExplorationPolicy}$.

\iffalse

\fi

In Section~\ref{sec:exppolicy}, we will introduce $\mathtt{ExplorationPolicy}$ and prove the following lemma.

\begin{lemma}\label{lemma:design}
Let  $\{Z_u\}_{u=1}^m$ be $m$ {\it i.i.d.}~stochastic context sets following a distribution $D$. Let $\pi$ be the output by running $\mathtt{ExplorationPolicy}$ with the input $\{Z_u\}_{u=1}^m$. Let $\{\tilde{Z}_{u} \}_{u=1}^n$ be another group of {\it i.i.d.}~stochastic context sets following the distribution $D$ (which is also independent from $\{ Z_u\}_{u=1}^m$). Let $\boldsymbol{y}_u$ be independently sampled from $\pi(\tilde{Z}_u)$ for each $u \in \{1, 2, \dots, n\}$, and let $\Lambda = \sum_{u=1}^n \boldsymbol{y}_u \boldsymbol{y}_u^\top$. With probability $(1-3\delta)$, we have that
\begin{align}
\E_{X\sim D}\left[ \min\left\{ \sqrt{\max_{\bx\in X}\bx^{\top}(\Lambda+\frac{n}{m}\kappa \mathbf{I})^{-1}   \bx} , \sqrt{L} \right\}         \right]  \leq \sqrt{\frac{1}{n}}\cdot O\left(\sqrt{d\ln\left(\frac{md}{\kappa}\right)}\right) +\frac{\sqrt{L}}{m}\cdot O\left(d\ln\left(\frac{md}{\kappa}\right)\right) .  \label{eq:lemma-design}
\end{align}	 
\end{lemma}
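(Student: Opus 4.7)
The plan is to decouple the two sources of randomness, namely (i) the $m$ samples $\{Z_u\}_{u=1}^m$ used to train the policy $\pi$ and (ii) the $n$ samples $\{\tilde{Z}_u\}_{u=1}^n$ used to build $\Lambda$, by conditioning on the former. Fix the output $\pi$ of $\mathtt{ExplorationPolicy}$ and let
\[
M \;:=\; \mathbb{E}_{X\sim D}\,\mathbb{E}_{\bz\sim \pi(X)}\!\left[\bz\bz^\top\right] .
\]
I would first argue that $\pi$ is nearly distributionally G-optimal in the sense that, with probability $1-\delta$ over the $m$ training samples, $\mathbb{E}_{X\sim D}[\min\{\max_{\bx\in X}\bx^\top(M+\tfrac{\kappa}{m}\mathbf{I})^{-1}\bx,\,L^{-1}\}]$ is $O(d\ln(md/\kappa))$. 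This should be the defining correctness property of $\mathtt{ExplorationPolicy}$: its inner loop targets the truncated G-criterion on the empirical distribution of $\{Z_u\}_{u=1}^m$ (hence producing a policy that attains an $O(d\ln(md/\kappa))$ bound for the empirical distribution, by the equivalence theorem and a covering/regret-type argument), and then the truncation at level $L$ lets us pass from the empirical to the population distribution via a standard Hoeffding/uniform convergence bound. This empirical-to-population step costs an additive $O(\sqrt{\ln(1/\delta)/m})$ on a quantity truncated at $1/L$, producing the $\frac{\sqrt{L}}{m}\cdot O(d\ln(md/\kappa))$ term once we take square roots — this is where $L=1/(200\ln(Td/\delta))$ plays its role.

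Next, conditioning on $\pi$ and on the above G-optimality event, I would apply the dynamic matrix concentration inequality (Lemma~\ref{lemma:dynamic-martingale-concentration}) in its lower-tail form \eqref{eq:con2} to $X_u := \by_u \by_u^\top$ with $Y_u = \mathbb{E}[X_u \mid \mathcal{F}_u^+] = M$ (since $\pi$ is already fixed and $\tilde Z_u$ is i.i.d.). The critical choice is $W_u$: I would take
\[
W_u \;=\; \frac{1}{L}\Bigl(\tfrac{u}{m}\kappa\,\mathbf{I} \;+\; u\,M\Bigr),
\]
which is PD, increasing in $u$, and upper-bounds $X_u$ because the policy $\pi$, evaluated on contexts surviving an elimination step, only selects arms $\by$ with $\by^\top(M+\tfrac{\kappa}{m}\mathbf{I})^{-1}\by \leq 1/L$ with probability close to one (the "bad" exceptions are absorbed into the truncation). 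Taking $\epsilon=\tfrac{1}{2}$ and rearranging yields, with probability $1-\delta$,
\[
\Lambda \,+\, \tfrac{n}{m}\kappa\,\mathbf{I} \;\succcurlyeq\; c\,n\bigl(M + \tfrac{\kappa}{m}\mathbf{I}\bigr) \;-\; \tfrac{C\ln((n+1)d/\delta)}{L}\,\bigl(M+\tfrac{\kappa}{m}\mathbf{I}\bigr),
\]
which (after absorbing the subtractive error using the fact that $L^{-1}\ln((n+1)d/\delta) \ll n$ in the regime of interest) gives $\bigl(\Lambda+\tfrac{n}{m}\kappa\mathbf{I}\bigr)^{-1} \preccurlyeq \frac{C'}{n}(M+\tfrac{\kappa}{m}\mathbf{I})^{-1}$. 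Combining this with the G-optimality bound above, using Jensen/Cauchy-Schwarz to convert the inner $\max$ square-root into something controlled in expectation, and adding the truncation at $\sqrt L$ to cover the rare event on which the concentration fails, produces the advertised two-term bound with total failure probability at most $3\delta$.

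The main obstacle is the joint calibration of the three parameters $(L,\kappa,\epsilon)$ so that the matrix concentration bound actually gives a clean multiplicative comparison between $\Lambda+\tfrac{n}{m}\kappa \mathbf{I}$ and $n(M+\tfrac{\kappa}{m}\mathbf{I})$ with dimension-free logarithmic overhead. This is precisely where Lemma~\ref{lemma:dynamic-martingale-concentration} (as opposed to Proposition~\ref{prop:tropp}) is indispensable: the natural upper bound $W_u$ on the rank-one matrices $\by_u\by_u^\top$ depends on $M$, which is itself random through $\pi$, so any static upper bound would either be too loose (costing a $\mathrm{poly}(d)$ factor, as the authors remark) or fail to hold deterministically. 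A secondary technical hurdle is the empirical-to-population step for the G-criterion: the obvious uniform concentration over all policies is too crude, and I expect the correct argument to exploit the specific structure of how $\mathtt{ExplorationPolicy}$ constructs $\pi$ (likely via a union bound over a finite candidate set produced by the procedure, giving the $\ln(md/\kappa)$ factor).
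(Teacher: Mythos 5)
Your high-level architecture (condition on the training samples, establish a distributional G-optimality property of the learned policy, then show $\Lambda$ concentrates around $n$ times the policy's covariance) matches the paper's, but two of your key steps are misplaced or would fail as written. First, your opening claim --- that the population covariance $M$ of the output policy satisfies $\mathbb{E}_{X\sim D}[\min\{\max_{\bx\in X}\bx^{\top}(M+\tfrac{\kappa}{m}\mathbf{I})^{-1}\bx,\cdot\}]=O(d\ln(md/\kappa))$ --- is exactly where the difficulty lives, and ``equivalence theorem plus covering plus Hoeffding'' does not deliver it. The paper splits this into (a) Lemma~\ref{claim:emp}: the \emph{empirical} information matrix $U_m$ satisfies $m\,\mathbb{E}_{X\sim D}[\min\{\max_{\bx\in X}\bx^{\top}U_m^{-1}\bx,L\}]\leq O(d\ln(md/\kappa))$, proved by the elliptical potential lemma on the clipped training sequence together with the \emph{scalar} martingale bound (Corollary~\ref{coro1}) --- no covering or union bound over policies; and (b) Lemma~\ref{lemma:emp2}: the clipped population covariance $V$ of $\pi$ dominates $\tfrac{1}{6m}U_m-\tfrac{1}{3m}\kappa\mathbf{I}$. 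Step (b) is the one that genuinely requires the dynamic inequality Lemma~\ref{lemma:dynamic-martingale-concentration}, because inside the training loop each rank-one update is bounded only by $L W_{\eta_u}$, an adapted, growing (doubling-trick) matrix that cannot be conditioned away. Your proposal omits this step entirely and buries it inside the unproven G-optimality claim.

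Second, where you do invoke Lemma~\ref{lemma:dynamic-martingale-concentration} --- on the execution samples $\by_u\by_u^{\top}$ --- it is neither necessary nor correctly set up. After conditioning on the training data, $M$ and $U_m$ are fixed, so a fixed-upper-bound inequality (Proposition~\ref{prop:tropp}\,/\,Corollary~\ref{coro1}) suffices; the paper indeed uses Corollary~\ref{coro1} with the fixed bound $W=LU_m$ there. Your hypothesis $X_u\preccurlyeq W_u$ fails without explicit clipping: nothing prevents $\by_u^{\top}(M+\tfrac{\kappa}{m}\mathbf{I})^{-1}\by_u>1/L$, and a concentration inequality's almost-sure domination hypothesis cannot be ``absorbed into the truncation'' after the fact --- the paper instead introduces $\tilde{\Lambda}\preccurlyeq\Lambda$ whose summands carry clipping weights $\min\{\sqrt{L/(\by_u^{\top}U_m^{-1}\by_u)},1\}$ and are deterministically bounded by $LU_m$. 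Finally, your choice $W_u=\tfrac{u}{L}(M+\tfrac{\kappa}{m}\mathbf{I})$ makes the additive error term in \eqref{eq:con2} proportional to $W_n\propto n$, i.e.\ of the same order in $n$ as the main term $(1-\epsilon)nM$ but larger by a factor of roughly $\ln((n+1)d/\delta)/L$, so the resulting bound is vacuous; the per-sample upper bound must not scale with $u$. These are fixable, but as written the proof does not go through.
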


%\begin{remark}\label{remark1}
%We note that the recent concurrent work \citep{zanette2021design} proposed two algorithms ({\sc Planner}  and {\sc Sampler}) to learn an $\epsilon$-optimal design policy given a group of contexts, which is a similar task to our $\mathtt{ExplorationPolicy}$. In Section~\ref{sec:comparison-zanette}, we compare our Lemma~\ref{lemma:design} with their results in details to show the superiority of our lemma.
%\end{remark}

For the $k$-th batch ($k \geq 3$), we invoke Lemma~\ref{lemma:design} with $m = T_{k-2}/2$, $\{Z_u\}_{u=1}^m = \{X_t^{(k-1)}\}_{t=\mathcal{T}_{k-3}+T_{k-2}/2+1}^{\mathcal{T}_{k-2}}$, $n = T_{k-1}/2$, $\{\tilde{Z}_u\}_{u=1}^n = \{X_t^{(k-1)}\}_{t=\mathcal{T}_{k-2}+1}^{\mathcal{T}_{k-2}+T_{k-1}/2}$, and $D = D_{k-1}$, we have that with probability $(1-3 \delta)$, it holds that
\begin{align}
&\E_{X\sim D_{k-1}}\left[ \min\left\{ \sqrt{\max_{\bx\in X}\bx^{\top}\left(\sum_{t=\mathcal{T}_{k-2}+1}^{\mathcal{T}_{k-2} + T_{k-1}/2} \boldsymbol{y}_t \boldsymbol{y}_t^\top +\frac{T_{k-1}}{T_{k-2}}\kappa \mathbf{I}\right)^{-1}   \bx} , \sqrt{L} \right\}         \right] \nonumber\\
& \qquad\qquad\qquad \leq \sqrt{\frac{1}{T_{k-1}}}\cdot O\left(\sqrt{d\ln\left(\frac{T_{k-2}d}{\kappa}\right)}\right) +\frac{\sqrt{L}}{T_{k-2}}\cdot O\left(d\ln\left(\frac{T_{k-2}d}{\kappa}\right)\right) . \label{eq:ra-10}
\end{align}
Note that $\frac{T_{k-1}}{T_{k-2}} \kappa \leq \lambda$ and therefore $(\sum_{t=\mathcal{T}_{k-2}+1}^{\mathcal{T}_{k-2} + T_{k-1}/2}  \boldsymbol{y}_t \boldsymbol{y}_t^\top +\frac{T_{k-1}}{T_{k-2}}\kappa \mathbf{I})^{-1} \succcurlyeq \Lambda_{k-1}^{-1}$. Therefore \eqref{eq:ra-10} implies that with probability $(1-3\delta)$,
\begin{align}
&\E_{X\sim D_{k-1}}\left[ \min\left\{ \sqrt{\max_{\bx\in X}\bx^{\top}\Lambda_{k-1}^{-1}   \bx} , \sqrt{L} \right\}         \right] \leq \sqrt{\frac{1}{T_{k-1}}}\cdot O\left(\sqrt{d\ln\left(Td\right)}\right) +\frac{\sqrt{L}}{T_{k-2}}\cdot O\left(d\ln\left(Td\right)\right) . 
\end{align}
In other words, for each time step $t$ during the $k$-th batch, we have that
\begin{align}
\E\left[\min\left\{ \sqrt{\max_{\bx\in X_t^{(k-1)}}\bx^{\top}\Lambda_{k-1}^{-1}   \bx} , \sqrt{L} \right\}    \right] \leq \sqrt{\frac{1}{T_{k-1}}}\cdot O\left(\sqrt{d\ln\left(Td\right)}\right) +\frac{\sqrt{L}}{T_{k-2}}\cdot O\left(d\ln\left(Td\right)\right)  + 3\delta .  \label{eq:ra-12}
\end{align}

\subsubsection{Putting All Together}
Combining \eqref{eq:ra-1}, Lemma~\ref{lemma:r1} and Lemma~\ref{lemma:b2}, the total regret is bounded by 
\begin{align}
R_{T}&\leq 2T_1+2T_2\cdot \min\left\{\alpha \sqrt{\frac{2\min\{K,d\}d\ln\left(\frac{T}{\lambda\delta}\right)}{T_1}} ,1\right\}\nonumber\\
&\qquad \qquad \qquad \qquad  \qquad \qquad + \sum_{k=3}^M T_k  \mathbb{E}\left[ \min\left\{  4\alpha\max_{\bx\in X_t^{(k-1)} }\sqrt{\bx^{\top}\Lambda_{k-1}^{-1}\bx}, 2\right\} \right] + 2MT^2\delta \nonumber \\
&\leq  2T_1+2T_2\cdot \min\left\{\alpha \sqrt{\frac{2\min\{K,d\}d\ln\left(\frac{T}{\lambda\delta}\right)}{T_1}} ,1\right\} \nonumber \\
&\qquad \qquad  \qquad \qquad  \qquad \qquad + \sum_{k=3}^M 4\alpha T_k  \mathbb{E}\left[ \min\left\{  \max_{\bx\in X_t^{(k-1)} }\sqrt{\bx^{\top}\Lambda_{k-1}^{-1}\bx}, \sqrt{L}\right\} \right] + 2MT^2\delta \label{eq:ra-8.5}\\
&\leq 2T_1+2T_2\cdot \min\left\{\alpha \sqrt{\frac{2\min\{K,d\}d\ln\left(\frac{T}{\lambda\delta}\right)}{T_1}} ,1\right\} \nonumber \\
&\qquad \qquad\qquad \qquad\qquad \qquad + \sum_{k=3}^M 4\alpha T_k \mathbb{E}\left[ \min\left\{  \max_{\bx\in X_t^{(k-1)} }\sqrt{\bx^{\top}\Lambda_{k-1}^{-1}\bx}, \sqrt{L}\right\} \right] + 2MT^2\delta , \label{eq:ra-9}
\end{align}
Here \eqref{eq:ra-8.5} is because of $2\leq 4\alpha \sqrt{L}$.

Let $\tilde{d} :=d\ln(TdK/\delta)\cdot \ln\left(\frac{T}{\lambda\delta}\right)$.
Combining \eqref{eq:ra-9}  and \eqref{eq:ra-12} (for $k \geq 3$), we have that
\begin{align}
&R_T  \leq 2 T_1 +2T_2\cdot \min\left\{ \alpha\sqrt{\frac{2\min\{K,d\}d\ln\left(\frac{T}{\lambda\delta}\right)}{T_1}} ,1\right\}   \nonumber
\\ & \qquad \qquad  \qquad + 4\alpha \sum_{k=3}^{M} \left({\frac{T_k}{\sqrt{T_{k-1}}}}\cdot O\left(\sqrt{d\ln\left(Td\right)}\right) +\frac{T_k \sqrt{L}}{T_{k-2}}\cdot O\left(d\ln\left(Td\right)\right) \right) + 3 T \delta + 2MT^2 \delta \nonumber \\
&  \quad \leq 2T_1 + 2T_2\cdot \min\left\{ \alpha\sqrt{\frac{2\min\{K,d\}d\ln\left(\frac{T}{\lambda\delta}\right)}{T_1}} ,1\right\}  + O(\sqrt{\ln^2 (Td/\delta)+\ln(K)\ln(Td/\delta) }) \times \sum_{k=3}^{M} \frac{T_k \sqrt{d}}{\sqrt{T_{k-1}}} \nonumber
\\ & \qquad  \qquad \qquad       + O\left(\sqrt{\ln(K)\ln^2(Td)+\ln^3(Td)} \right)\times \sum_{k=3}^M\frac{T_k d}{T_{k-2}}  + O(MT^2 \delta) \nonumber
\\ & \quad \leq 2T_1 +  2T_2\cdot \min\left\{ \sqrt{\frac{2\min\{K,d\}\tilde{d}}{T_1}} ,1\right\}  + O(\ln(Td)) \cdot\left( \sum_{k=3}^{M} \frac{T_k\sqrt{\tilde{d}} }{\sqrt{T_{k-1}}}+ \sum_{k=3}^M\frac{T_k\tilde{d} }{T_{k-2}} \right) + O(MT^2 \delta).\label{eq:c30}
\end{align}

When $T$ is small (i.e., $d\leq T<\tilde{d}$), Theorem~\ref{thm:main} trivially holds because the regret is at most $O(\tilde{d})$, which is further bounded by 
\[ O\left(\mathrm{poly}\ln(Td)\min\left\{ T^{\frac{1}{2-2^{-M+2}}}(d\ln(K))^{\frac{1-2^{-M+2}}{2-2^{-M+2}}},T^{\frac{1}{2-2^{-M+1}}}(d\ln(K))^{\frac{1-2^{-M+1}}{2-2^{-M+1}}}\min\{K,d\}^{\frac{2^{-M+1}}{2-2^{-M+1}}}\right\}   \right).\]

Suppose $T\geq \tilde{d}$. Let $h =\min\{K,d\}$.
We discuss the following two cases to design the batch schedule.

\paragraph{Case \uppercase\expandafter{\romannumeral1}: $\tilde{d} \leq T\leq \tilde{d}h^{2-2^{-M+2}}$.}
In this case , we define $\gamma :=  T^{\frac{1}{2-2^{-M+2}}}\tilde{d}^{\frac{1-2^{-M+2}}{2-2^{-M+2}}}\geq \tilde{d}$. 

We let 

\begin{align}T_1=\gamma, \quad T_2  =\gamma, \quad T_k = \gamma \frac{\sqrt{T_{k-1}}}{\sqrt{\tilde{d}}}, \forall 3\leq k \leq M.
\label{eq:defineT}\end{align}
Then for $3\leq k\leq M$, by the iteration rule we have that $T_k = \gamma^{2-2^{-k+2}}\tilde{d}^{-1+2^{-k+2}}$. It is easy to verify that $\sum_{k=1}^{M}T_k \geq T_{M}=T$. 

Now we verify that the regret for each batch is bounded by $O(\gamma)$.
Firstly we have that $T_1,T_2\leq \gamma$. For $k=3$, we have that $\frac{T_3\sqrt{d}}{\sqrt{T_{2}}}= \gamma$ and $\frac{T_3 \tilde{d}}{T_1}=\gamma^{\frac{1}{2}}\tilde{d}^{\frac{1}{2}}\leq \gamma$.
For $4\leq k\leq M$, noting that $\tilde{d}\leq \gamma$, we have that
\begin{align}
\frac{T_k\sqrt{d}}{\sqrt{T_{k-1}}}= \gamma ,\quad \quad \quad \frac{T_kd}{T_{k-2}}= \gamma^{2^{-k}-2^{-k+2}}\tilde{d}^{1-2^{-k}+2^{-k+2}}  \leq \gamma.\label{eq:defineT_1}
\end{align}

Therefore, the total regret in this case is bounded by 
\begin{align}
O\left(T^{\frac{1}{2-2^{-M+2}}}\tilde{d}^{\frac{1-2^{-M+2}}{2-2^{-M+2}}}\log\log(T)+MT^2\delta\right).\label{eq:bound1}
\end{align}

\iffalse
Note that $T= \gamma^{2-2^{-M+2}}\tilde{d}^{-1+2^{-M+2}}= $. 
\begin{align}
 &   \frac{T_{M}\sqrt{d}}{\sqrt{T_{M-1}}}=O () \leq O(\gamma) \nonumber
 \\ & 
\end{align}
\fi

\paragraph{Case \uppercase\expandafter{\romannumeral2}:  $T>\tilde{d}h^{2-2^{-M+2}}$.} In this case, we define $\gamma:= T^{\frac{1}{2-2^{-M+1}}}\tilde{d}^{\frac{1-2^{-M+1}}{2-2^{-M+1}}}h^{\frac{2^{-M+1}}{2-2^{-M+1}}}\geq \tilde{d} $. Let 
\begin{align}
T_1=\gamma,\quad T_2= \gamma \frac{\sqrt{T_1}}{\sqrt{\tilde{d}h}},\quad T_k = \gamma \frac{\sqrt{T_{k-1}}}{\sqrt{\tilde{d}}}, \forall 3\leq k\leq M.\label{eq:definetcase2}
\end{align}By the iteration rule, we have that $T_k = \gamma^{2-2^{-k+1}}\tilde{d}^{-1+2^{-k+1}}h^{-2^{-k+1}}$. In particular,  
$\sum_{k=1}^{M}T_k \geq T_M =\gamma^{2-2^{-M+1}}\tilde{d}^{-1+2^{-M+1}}h^{-2^{-M+1}}=T $.

By definition, we have that $T_1\leq \gamma$ and $T_2 \frac{\sqrt{\tilde{d}h}}{\sqrt{T_1}}\leq \gamma$. For $k=3$, we have that
\begin{align}
     \frac{T_3\sqrt{\tilde{d}}}{\sqrt{T_{2}}}= \gamma,\quad \quad \quad    \frac{T_3\tilde{d}}{T_1}=\gamma^{\frac{3}{4}}\tilde{d}^{\frac{1}{4}}h^{-\frac{1}{4}}\leq \gamma.\nonumber
\end{align}

For $4\leq k \leq M$, we have that
\begin{align}
 \frac{T_k\sqrt{\tilde{d}}}{\sqrt{T_{k-1}}}= \gamma,\quad \quad \quad    \frac{T_k\tilde{d}}{T_{k-2}}=\gamma^{2^{-k+3}-2^{-k+1}}\tilde{d}^{1+2^{-k+1}-2^{-k+3}}h^{2^{-k+3}-2^{-k+1}}\leq \tilde{d}\left(\frac{\gamma h}{\tilde{d}}\right)^{\frac{3}{8}}.
\end{align}
So it suffices to verify $\tilde{d}h^{\frac{3}{5}}\leq \gamma$. In fact we have that
\begin{align}
    \gamma^{2-2^{-M+1}}=T\tilde{d}^{1-2^{-M+1}}h^{2^{-M+1}}\geq \tilde{d}^{2-2^{-M+1}}h^{2-2^{-M+2}+2^{-M+1}} \geq \tilde{d}^{2-2^{-M+1}}h^{\frac{3}{5}(2-2^{-M+1})},
\end{align}
which implies that $\tilde{d}h^{\frac{3}{5}}\leq \gamma$.

Therefore, the total regret in this case is bounded by 
\begin{align}
O\left(T^{\frac{1}{2-2^{-M+1}}}\tilde{d}^{\frac{1-2^{-M+1}}{2-2^{-M+1}}}h^{\frac{2^{-M+1}}{2-2^{-M+1}}}\log\log(T)+MT^2\delta\right) \label{eq:bound2}.
\end{align}

We now finish the discuss about the two cases and combine the two regret upper bounds \eqref{eq:bound1} and \eqref{eq:bound2}. Noting that $T>\tilde{d}h^{2-2^{-M+2}}$ implies that
\begin{align}
  T^{\frac{1}{2-2^{-M+2}}}\tilde{d}^{\frac{1-2^{-M+2}}{2-2^{-M+2}}}> T^{\frac{1}{2-2^{-M+1}}}\tilde{d}^{\frac{1-2^{-M+1}}{2-2^{-M+1}}}h^{\frac{2^{-M+1}}{2-2^{-M+1}}},
\end{align}
we have that
\begin{align}
   R_T & \leq O\left(\min\left\{ T^{\frac{1}{2-2^{-M+2}}}\tilde{d}^{\frac{1-2^{-M+2}}{2-2^{-M+2}}},T^{\frac{1}{2-2^{-M+1}}}\tilde{d}^{\frac{1-2^{-M+1}}{2-2^{-M+1}}}h^{\frac{2^{-M+1}}{2-2^{-M+1}}}\right\}\cdot \log\log(T)+MT^2\delta  \right).\nonumber
\end{align}

Setting $\delta = 1/T^3$, we obtain that
\begin{align}
R_T & \leq O\Bigg(\mathrm{poly}\ln(Td) \min\Big\{ T^{\frac{1}{2-2^{-M+2}}}(d\ln(K))^{\frac{1-2^{-M+2}}{2-2^{-M+2}}}, \nonumber 
\\ & \qquad \qquad \qquad \qquad \qquad \qquad\qquad\quad  T^{\frac{1}{2-2^{-M+1}}}(d\ln(K))^{\frac{1-2^{-M+1}}{2-2^{-M+1}}}\min\{K,d\}^{\frac{2^{-M+1}}{2-2^{-M+1}}}\Big\} \Bigg).\nonumber
\end{align}
Theorem~\ref{thm:main} is proven. \qed

\section{Learning the Exploration Policy} \label{sec:exppolicy}

In this section, we formally describe the $\mathtt{ExplorationPolicy}$ procedure by Algorithm~\ref{alg:exppolicy}. Suppose there is an unknown distribution $D$ over the context sets. Given a set of $m$ independent samples $\{Z_i\}_{i=1}^m$ drawn from $D$, the goal of Algorithm~\ref{alg:exppolicy} is to learn an exploration policy $\pi$ so that if one uses $\pi$ to collect $n$ more data points (including context vectors and observed rewards) and estimate the linear model $\btheta$, the expected size of the largest confidence interval among all actions in a random context set (as characterized by the LHS of \eqref{eq:lemma-design} in Lemma~\ref{lemma:design}) will be small.

We now briefly explain our Algorithm~\ref{alg:exppolicy}. Given a group of context vectors $\{ Z_i\}_{i=1}^m$, the algorithm simulates the reward-free linear bandit algorithms. In each time step, the algorithm first clip the context vectors according the current information matrix $W$, and then chooses the arm with clipped maximal variance. The information matrix is updated with doubling trick, which helps reduce both the number of updates and the complexity of the output policy.

\begin{algorithm}
\caption{$\mathtt{ExplorationPolicy}$}\label{alg:exppolicy}
\begin{algorithmic}[1]
\STATE{\textbf{Input:}  $\{Z_i\}_{i=1}^m$;}
\STATE{\textbf{Initialization:} $\kappa = \frac{1}{T^2}$ $U_0\leftarrow \kappa \mathbf{I}$; $\eta \leftarrow 1$, $\tau_{\eta}\leftarrow \emptyset, W_{\eta}\leftarrow U_0$;}
\FOR{$i=1,2,\ldots,m$}
    \STATE{$\tau_\eta \leftarrow \tau_\eta \cup \{i\}$;}
    \STATE{Choose $\bz_{i} \in Z_i$ to maximize $\bz_i^{\top}W_{\eta}^{-1}\bz_i$;}
    \STATE{\label{line:alg-exppolicy-6} $\tilde{\bz}_i \leftarrow \min\left\{ \sqrt{\frac{L}{\bz_i^{\top}W_{\eta}^{-1}\bz_i}}, 1\right\}  \bz_i$; $U_i\leftarrow U_{i-1}+\tilde{\bz}_i \tilde{\bz}_i^\top$;}
    \IF{$\det(U_i)>2\det(W_\eta)$}
    \STATE{$\eta\leftarrow \eta+1,\tau_{\eta}\leftarrow \emptyset, W_\eta \leftarrow U_i$;}
    \ENDIF
\ENDFOR
\STATE{Let $\pi$ be the policy such that 
\begin{align}
    \pi(X) = \arg\max_{\bx \in X}\{\bx^\top W_j^{-1} \bx\} \qquad \text{with probability~} \frac{|\tau_j|}{m} \text{~for~}j \in \{1, 2, \dots, \eta\};
\end{align}
}
\STATE{\textbf{return:} $\pi$;}
\end{algorithmic}
\end{algorithm}

In the rest of this section, we will prove Lemma~\ref{lemma:design} on the guarantee of $\mathtt{ExplorationPolicy}$. For the readers' convenience, we re-state the lemma as follows.

\medskip
\noindent {\bf Lemma~\ref{lemma:design} (restated).} {\it
Let  $\{Z_u\}_{u=1}^m$ be $m$ {\it i.i.d.}~stochastic context sets following a distribution $D$. Let $\pi$ be the output by running $\mathtt{ExplorationPolicy}$ with the input $\{Z_u\}_{u=1}^m$. Let $\{\tilde{Z}_{u} \}_{u=1}^n$ be another group of {\it i.i.d.}~stochastic context sets following the distribution $D$ (which is also independent from $\{ Z_u\}_{u=1}^m$). Let $\boldsymbol{y}_u$ be independently sampled from $\pi(\tilde{Z}_u)$ for each $u \in \{1, 2, \dots, n\}$, and let $\Lambda = \sum_{u=1}^n \boldsymbol{y}_u \boldsymbol{y}_u^\top$. With probability $(1-3\delta)$, we have that
\begin{align*}
\E_{X\sim D}\left[ \min\left\{ \sqrt{\max_{\bx\in X}\bx^{\top}(\Lambda+\frac{n}{m}\kappa \mathbf{I})^{-1}   \bx} , \sqrt{L} \right\}         \right]  \leq \sqrt{\frac{1}{n}}\cdot O\left(\sqrt{d\ln\left(\frac{md}{\kappa}\right)}\right) +\frac{\sqrt{L}}{m}\cdot O\left(d\ln\left(\frac{md}{\kappa}\right)\right) . \\
\eqref{eq:lemma-design}
\end{align*}	 
}

%We further explain why Proposition~\ref{pro:zanette} can not solve our problem.  In our problem, $M$ represents the size of history samples of contexts (i.e., $\sum_{i=1}^{k-1}T_i$), which is smaller than the size of current batch $N$ (i.e., $T_k$). Therefore, $d/\sqrt{M}$ is far larger than $\sqrt{d/N}$. So it is necessary to improve the order of $M$ in the bound of expected width of the confidence interval.

\subsection{Analysis: Proof of Lemma~\ref{lemma:design}} \label{sec:proof-lemma-design}
As stated in Section~\ref{sec:tec}, 
the proof of Lemma~\ref{lemma:design} utilizes similar ideas in the proof of Theorem 5 in \citep{ruan2020linear}. The major difference is that their information matrix starts with $\Omega(1)\cdot \mathbf{I}$ when executing the output policy, while our information matrix could start with $\kappa \mathbf{I}$ with $\kappa = \frac{1}{T^2}$. As a result, it is harder for us to recover the information matrix $U_{m}$.

Let $D$ be the distribution defined in the statement of Lemma~\ref{lemma:design}, we first prove the following lemma.

\begin{lemma}\label{claim:emp}
With probability $(1-\delta)$, it holds that
\begin{align}
m \E_{X\sim D} \left[   \min \{   \max_{\bx \in X}\bx^{\top}U_{m}^{-1}\bx   ,L   \}  \right]\leq O(d\ln(md/\kappa)).\label{eq:emp1}
\end{align}
\end{lemma}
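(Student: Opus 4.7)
The plan is to first establish a deterministic bound on the empirical sum $\sum_{i=1}^m \min\{\max_{\bx \in Z_i}\bx^\top U_m^{-1}\bx, L\}$ by exploiting the algorithm's doubling structure, and then to convert this sum into the claimed expectation bound via a martingale concentration argument. The non-trivial point is that each $Z_i$ contributes to $U_m$, so a direct i.i.d.~concentration is unavailable.

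For the empirical bound, the doubling criterion ensures that within any epoch $\eta$, for every $i \in \tau_\eta$ one has $\det(U_{i-1}) \leq 2\det(W_\eta)$ while $W_\eta \preccurlyeq U_{i-1}$; an eigenvalue argument on $W_\eta^{-1/2} U_{i-1} W_\eta^{-1/2}$ (all eigenvalues are $\geq 1$ and their product is $\leq 2$, so each is $\leq 2$) yields $U_{i-1} \preccurlyeq 2W_\eta$, i.e.~$W_\eta^{-1} \preccurlyeq 2 U_{i-1}^{-1}$. Since $U_m \succcurlyeq W_\eta$ and $\bz_i$ maximizes $\bx^\top W_\eta^{-1}\bx$ over $\bx \in Z_i$, we obtain $\max_{\bx \in Z_i}\bx^\top U_m^{-1}\bx \leq \bz_i^\top W_\eta^{-1}\bz_i$; the clipping construction then gives $\min\{\bz_i^\top W_\eta^{-1}\bz_i, L\} = \tilde{\bz}_i^\top W_\eta^{-1}\tilde{\bz}_i$. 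Combining,
\[
\min\left\{\max_{\bx \in Z_i}\bx^\top U_m^{-1}\bx,\, L\right\} \;\leq\; \tilde{\bz}_i^\top W_\eta^{-1}\tilde{\bz}_i \;\leq\; 2\,\tilde{\bz}_i^\top U_{i-1}^{-1}\tilde{\bz}_i.
\]
Summing over $i$ and applying the Elliptical Potential Lemma (Lemma~\ref{lemma:epl}; applicable since $\tilde{\bz}_i^\top U_{i-1}^{-1}\tilde{\bz}_i \leq L \leq 1$), the right-hand side is bounded by $2\ln(\det U_m / \det U_0) \leq O(d\ln(md/\kappa))$ after a trace/determinant bound on $U_m$.

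To pass from this empirical sum to the expectation, I would introduce $\mu_\eta := \E_{Z\sim D}[\min\{\max_{\bx \in Z}\bx^\top W_\eta^{-1}\bx, L\}]$. Since $\{W_\eta\}$ is monotone increasing, $\mu_\eta$ is decreasing in $\eta$; letting $\eta^\star$ denote the final epoch, $U_m \succcurlyeq W_{\eta^\star}$ implies $\E_{X\sim D}[\min\{\max_\bx \bx^\top U_m^{-1}\bx, L\}] \leq \mu_{\eta^\star}$, and since the minimum is dominated by any positively weighted mean, $\E_{X\sim D}[\min\{\max_\bx \bx^\top U_m^{-1}\bx, L\}] \leq \tfrac{1}{m}\sum_{i=1}^m \mu_{\eta(i)}$, where $\eta(i)$ is the epoch containing step $i$. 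Under the filtration $\mathcal{G}_i := \sigma(Z_1,\dots,Z_i)$, $W_{\eta(i)}$ is $\mathcal{G}_{i-1}$-measurable and $Z_i$ is i.i.d.~from $D$ independent of $\mathcal{G}_{i-1}$, so $Y_i := \mu_{\eta(i)} - \min\{\max_{\bx \in Z_i}\bx^\top W_{\eta(i)}^{-1}\bx, L\}$ is a martingale difference bounded by $L$ with conditional variance at most $L\mu_{\eta(i)}$. Freedman's inequality gives $\sum_i Y_i \leq O(\sqrt{L\,S\,\ln(1/\delta)} + L\ln(1/\delta))$ with $S := \sum_i \mu_{\eta(i)}$; combining with the empirical bound above and solving the resulting quadratic in $\sqrt{S}$ yields $S \leq O(d\ln(md/\kappa))$, whence $m\cdot\E_{X\sim D}[\min\{\max_\bx\bx^\top U_m^{-1}\bx, L\}] \leq S$ as claimed.

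The main obstacle is this empirical-to-expectation step: since $U_m$ itself is a function of all $Z_i$, the integrand $\min\{\max_\bx \bx^\top U_m^{-1}\bx, L\}$ evaluated at the $Z_i$'s is not a sum of i.i.d.~random variables. The key trick to circumvent this is to route through the predictable surrogates $\mu_{\eta(i)}$, each of which depends on the data only via the $\mathcal{G}_{i-1}$-measurable $W_{\eta(i)}$; this supplies exactly the martingale structure needed for Freedman's inequality, and the monotonicity $\mu_{\eta^\star} \leq \tfrac{1}{m}\sum_i \mu_{\eta(i)}$ provides the bridge from the surrogate back to the object of interest.
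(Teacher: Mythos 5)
Your proposal is correct and follows essentially the same route as the paper: the same determinant/eigenvalue argument giving $U_{i-1}\preccurlyeq 2W_{\eta_i}$, the same reduction $m\,\E_X[\min\{\max_\bx \bx^\top U_m^{-1}\bx,L\}]\leq\sum_i \E_X[\min\{\max_\bx \bx^\top W_{\eta_i}^{-1}\bx,L\}]$ via $U_m\succcurlyeq W_{\eta_i}$, the same clipping-plus-Elliptical-Potential bound on the empirical sum, and the same predictable-surrogate martingale step. The only cosmetic difference is that you invoke Freedman's inequality and solve a quadratic in $\sqrt{S}$, whereas the paper directly applies its Corollary~\ref{coro1} with $\epsilon=1$ (the scalar multiplicative form of Lemma~\ref{lemma:dynamic-martingale-concentration}), which packages the same Freedman-type bound in one step.
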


\begin{proof}
Let $\eta_i$ denote the index $\eta$ such that $i\in \tau_{\eta}$. Note that $i\in \tau_{\eta}$ implies that $\det(U_{i-1})\leq 2\det(W_{\eta})$, which further implies that $\det((W_{\eta}^{-\frac{1}{2}})^{\top}U_{i-1}(W_{\eta})^{\frac{1}{2}} )\leq 2$. Because $U_{i-1}\succcurlyeq W_{\eta}$, we have that $(W_{\eta}^{-\frac{1}{2}})^{\top}U_{i-1}(W_{\eta})^{\frac{1}{2}}\succcurlyeq \mathbf{I}$. Therefore, the maximal eigenvalue of $(W_{\eta}^{-\frac{1}{2}})^{\top}U_{i-1}(W_{\eta})^{\frac{1}{2}}\succcurlyeq \mathbf{I}$ is at most $2$, where it follows that $U_{i-1}\leq 2W_{\eta}$.

Since $U_m \succcurlyeq W_{\eta}$ for all $\eta\geq 1$, we have that
\begin{align}
m \E_{X\sim D} \left[   \min \{   \max_{\bx \in X}\bx^{\top}U_{m}^{-1}\bx   ,L   \}  \right]  \leq \sum_{i=1}^m  \E_{X\sim D} \left[   \min \{   \max_{\bx \in X}\bx^{\top}W_{\eta_i}^{-1}\bx   ,L   \}  \right]. \label{eq:emp1.5}
\end{align}
Invoking Corollary~\ref{coro1} with $\epsilon = 1$ and noting that $\E[\min\{\tilde{\bz}_i^\top W_{\eta_i}^{-1} \tilde{\bz}_i, L\}] = \E_{X\sim D} \left[   \min \{   \max_{\bx \in X}\bx^{\top}W_{\eta_i}^{-1}\bx   ,L   \}  \right]$ when conditioned on the first $(i - 1)$ iterations in Algorithm~\ref{alg:exppolicy}, we have with probability $(1-\delta)$, it holds that
\begin{align}
\sum_{i=1}^m  \E_{X\sim D} \left[   \min \{   \max_{\bx \in X}\bx^{\top}W_{\eta_i}^{-1}\bx   ,L   \}  \right]    \leq 2\sum_{i=1}^m \min\{  \bz_{i}^{\top}W_{\eta_i}^{-1} \bz_{i}  ,L  \} + 20 L \ln(1/\delta) .  \label{eq:emp2}
\end{align}
By the definition of $\tilde{\bz}_i$, we further have that
\begin{align}
\sum_{i=1}^m \min\{  \bz_{i}^{\top}W_{\eta_i}^{-1} \bz_{i}  ,L  \} = \sum_{i=1}^m   \tilde{\bz}_{i}^{\top}W_{\eta_i}^{-1} \tilde{\bz}_{i}  \leq 2\sum_{i=1}^m   \tilde{\bz}_{i}^{\top}U_{i-1}^{-1} \tilde{\bz}_{i}\leq O(d \ln (md/\kappa)) , \label{eq:emp3}
\end{align}
where the second last inequality is by the fact that  $U_{i-1}\preccurlyeq 2W_{\eta_i}$, and the last inequality is by a direct application of the Elliptical Potential Lemma (Lemma~\ref{lemma:epl}, stated and proved in Appendix~\ref{app:pf-epl}) and the fact that $0<L<1$.
Combining \eqref{eq:emp1.5}, \eqref{eq:emp2},  \eqref{eq:emp3} and the definition of $L$, we prove the lemma.
\end{proof}

 To proceed, we have the lemma below.
\begin{lemma}\label{lemma:emp2} Define $V:= \mathbb{E}_{X\sim D, i\sim \pi(X) }\left[ \min\left\{ \frac{L}{\bx_i^{\top}U_m^{-1}\bx_i },1 \right\} \bx_{i}\bx_{i}^{\top} \right]$.
	With probability $1-\delta$, 
	\begin{align}
V\succcurlyeq \frac{1}{6m}U_m -\frac{1}{3m}\kappa\mathbf{I}.\label{eq:emp6}
	\end{align}
	
\end{lemma}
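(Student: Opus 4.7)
\textbf{Proof plan for Lemma~\ref{lemma:emp2}.} The plan is to apply the dynamic matrix concentration inequality (Lemma~\ref{lemma:dynamic-martingale-concentration}) to the sequence $X_i := \tilde{\bz}_i\tilde{\bz}_i^{\top}$ for $i=1,\dots,m$, with the dynamic upper bound $\hat{W}_i := L\cdot U_{i-1}$, and then to relate the resulting sum of conditional means to $mV$.

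First I would verify the hypotheses of Lemma~\ref{lemma:dynamic-martingale-concentration}. The $\hat{W}_i$ are PD (because $U_{i-1}\succcurlyeq U_0=\kappa\mathbf{I}$) and nondecreasing in $i$. By the doubling rule of Algorithm~\ref{alg:exppolicy}, $W_{\eta_i}$ equals $U_{i'}$ for some $i'<i$ (or the initial $U_0$ when $\eta_i=1$), so $W_{\eta_i}\preccurlyeq U_{i-1}$. Combined with the clipping rule, which forces $\tilde{\bz}_i^{\top}W_{\eta_i}^{-1}\tilde{\bz}_i\leq L$, and the rank-one inequality $vv^{\top}\preccurlyeq(v^{\top}A^{-1}v)\,A$, this yields $X_i\preccurlyeq LW_{\eta_i}\preccurlyeq\hat{W}_i$, as required.

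Next I would invoke the upper-tail inequality~\eqref{eq:con1} with $\epsilon=1/2$. Since $\sum_i X_i=U_m-\kappa\mathbf{I}$, with probability $\geq 1-\delta$,
\begin{equation*}
U_m-\kappa\mathbf{I}\preccurlyeq \tfrac{3}{2}\sum_{i=1}^m Y_i+26\,L\ln\!\big((m+1)d/\delta\big)\,U_{m-1}.
\end{equation*}
The choice $L=1/(200\ln(Td/\delta))$ and $m\leq T$ make the error coefficient at most $26/200<0.13$; bounding $U_{m-1}\preccurlyeq U_m$ and rearranging gives $\sum_i Y_i\succcurlyeq 0.58\,U_m-\tfrac{2}{3}\kappa\mathbf{I}$.

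The last step is to prove $mV\succcurlyeq\sum_i Y_i$ by a pointwise coefficient comparison. Conditioning on $\mathcal{F}_i^+$ (which fixes $W_{\eta_i}$) and unwinding the definition of $\pi$, both sides decompose as $\sum_{j=1}^{\eta}|\tau_j|\,\mathbb{E}_{Z\sim D}[\,c_j(Z)\,\bz_j(Z)\bz_j(Z)^{\top}\,]$, where $\bz_j(Z):=\arg\max_{\bx\in Z}\bx^{\top}W_j^{-1}\bx$ and the scalar coefficient is $\min\{L/\bz_j(Z)^{\top}W_j^{-1}\bz_j(Z),1\}$ in $\sum_i Y_i$ versus $\min\{\sqrt{L/\bz_j(Z)^{\top}U_m^{-1}\bz_j(Z)},1\}$ in $mV$. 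Since $W_j\preccurlyeq U_m$, a short case split on whether $\bz_j(Z)^{\top}U_m^{-1}\bz_j(Z)\leq L$ shows the $V$-side coefficient always dominates, so $mV\succcurlyeq \sum_i Y_i$. Dividing by $m$ yields $V\succcurlyeq (0.58/m)U_m-(2/(3m))\kappa\mathbf{I}$, which is stronger than the claimed $V\succcurlyeq (1/(6m))U_m-(1/(3m))\kappa\mathbf{I}$ after absorbing the residual constants via $U_m\succcurlyeq\kappa\mathbf{I}$.

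The main obstacle I anticipate is picking the right dynamic upper bound and filtration so that Lemma~\ref{lemma:dynamic-martingale-concentration} applies with a coefficient sharp enough to leave $\sum Y_i$ a constant fraction of $U_m$; this hinges on the clipping plus doubling rules of Algorithm~\ref{alg:exppolicy}, which were designed precisely so that $\tilde{\bz}_i\tilde{\bz}_i^{\top}\preccurlyeq LU_{i-1}$. Beyond that, only a brief pointwise case analysis and a constant chase remain.
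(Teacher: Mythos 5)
Your proposal is correct and follows essentially the same route as the paper's proof: both apply the dynamic matrix concentration inequality (Lemma~\ref{lemma:dynamic-martingale-concentration}) to $X_i=\tilde{\bz}_i\tilde{\bz}_i^{\top}$ with upper bounds of the form $L$ times the running information matrix, and both reduce $mV\succcurlyeq\sum_i Y_i$ to the pointwise comparison $\min\{\sqrt{L/(\bz^{\top}U_m^{-1}\bz)},1\}\geq\min\{L/(\bz^{\top}W_j^{-1}\bz),1\}$ enabled by $W_j\preccurlyeq U_m$. The only differences are cosmetic (you take $\epsilon=1/2$ where the paper takes $\epsilon=2$, and you order the two steps the other way), yielding slightly different but equally adequate constants.
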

\begin{proof}

Recall that $\mathcal{F}_{m'} = \sigma(X_1,X_2,\ldots,X_{m'-1})$. 
By the definition of $\pi$, and noting that $U_m \geq W_{\eta_u}$ implies that $\frac{L}{\bx_i^{\top}U_m^{-1}\bx_i}\geq \frac{L}{ \bx_i^{\top}W_{\eta_u}^{-1}\bx_i}$ for $1\leq u \leq m$, we have that
\begin{align}
mV& \succcurlyeq \sum_{u=1}^m \mathbb{E}\left[ \bz_{u,i_u}\bz_{u,i_u}^{\top}|\mathcal{F}_{u} \right] .\nonumber
\end{align}
Noting that  $\bz_{u,i_u}^{\top}W_{\eta_u}^{-1}\bz_{u,i_u}\leq L$ implies $\bz_{u,i_u}\bz_{u,i_u}^{\top}\leq L W_{\eta_u}\leq LU_m$, by Lemma~\ref{lemma:dynamic-martingale-concentration} with $Z_m = LU_m$ and $\epsilon = 2$, with probability $(1-\delta)$, we have that
\begin{align}
mV  \succcurlyeq \mathbb{E}\left[ \bz_{u,i_u}\bz_{u,i_u}^{\top}|\mathcal{F}_{u} \right]  & \succcurlyeq\frac{1}{3}\sum_{u=1}^{m}\bz_{u,i_u}\bz_{u,i_u}^{\top} -\frac{100L\ln(Td/\delta)}{3} U_m \nonumber
\\&= \frac{1}{3}\sum_{u=1}^{m}\bz_{u,i_u}\bz_{u,i_u}^{\top} - \frac{1}{6}U_{m} = \frac{1}{6}U_m - \frac{1}{3}\kappa\mathbf{I},\nonumber
\end{align}
and the conclusion follows by dividing $m$ on both sides of the inequality.
\end{proof}	
	
Recall that $\Lambda = \sum_{u=1}^n \by_{u}\by_{u}^{\top}$. Let 
\begin{align}\tilde{\Lambda}:=\sum_{u=1}^n \min\left\{ \frac{L}{\by_{u}^{\top}U_{m}^{-1}\by_{u}  } ,1 \right\} \by_{u}\by_{u}^{\top}.\nonumber\end{align} 

By definition we have that $\Lambda\succcurlyeq \tilde{\Lambda}$.

Noting that for any $1\leq u \leq n$,
\begin{align}
    \mathbb{E}\left[\min\left\{ \frac{L}{\by_{u}^{\top}U_m^{-1} \by_{u} } ,1 \right\}  \by_{u}\by_{u}^{\top}  \right] =\mathbb{E}_{X\sim D,i\sim \pi(X)}\left[  \min\left\{ \frac{L}{\bx_i^{\top}U_{m}^{-1}\bx_i },1  \right\}\bx_i \bx_i^{\top} \right]=V\nonumber
\end{align}
	and $\min\left\{ \frac{L}{\by_{u}^{\top}U_m^{-1} \by_{u} } ,1 \right\}  \by_{u}\by_{u}^{\top} \preccurlyeq LU_m$, by Corollary~\ref{coro1} with $W=LU_m$ and $\epsilon = \frac{2}{3}$, with probability $1-\delta$, 
	\begin{align}
	    \tilde{\Lambda}\succcurlyeq  \frac{n}{3}V -\frac{68L}{3}U_m.\nonumber
	\end{align}
	By Lemma~\ref{lemma:emp2}, we further have that
	\begin{align}
	    \Lambda &\succcurlyeq \tilde{\Lambda}  \succcurlyeq \frac{n}{3}V -\frac{68L}{3}U_m  \succcurlyeq \frac{n}{3}V-\frac{1}{6}U_m  \succcurlyeq \frac{n}{3m}\left(\frac{1}{6}U_m -\frac{1}{3}\kappa \mathbf{I} \right) -\frac{1}{6}U_m  = \frac{n}{36m}U_m -\frac{n}{9m}\kappa \mathbf{I}.\nonumber
	\end{align}
Therefore, $\Lambda +\frac{n}{m}\kappa \mathbf{I}\geq \frac{n}{36m} U_m $.

In the case $n\geq m$, we have that
\begin{align}
  &    \mathbb{E}_{X\sim D}\left[ \min\left\{ \sqrt{\max_{i}\bx_i^{\top}\left(\Lambda+\frac{n}{m}\kappa \mathbf{I}\right)^{-1}   \bx_i} , \sqrt{L} \right\}         \right] \nonumber
  \\ & \leq \sqrt{\frac{36m}{n}}\mathbb{E}_{X\sim D}\left[ \min\left\{ \sqrt{\max_{i}\bx_i^{\top}U_m^{-1}   \bx_i} , \sqrt{Ln/m} \right\}   \right] \nonumber
  \\ & \leq \sqrt{\frac{36m}{n}} \mathbb{E}_{X\sim D}\left[ \min\left\{ \sqrt{\max_{i}\bx_i^{\top} U_m^{-1}   \bx_i} , \sqrt{L} \right\}   \right]+ \sqrt{\frac{36m}{n}}\cdot \sqrt{\frac{Ln}{m}}\cdot \mathrm{Pr}_{X\sim D}\left[ \max_{i}\bx_i^{\top}U_m^{-1}   \bx_i \geq L \right]\nonumber
  \\ & \leq \sqrt{\frac{36m}{n}}\sqrt{\mathbb{E}_{X\sim D}\left[\min\left\{\max_{i}\bx_i^{\top}U_m^{-1}   \bx_i , L\right\}   \right]  } +\frac{\sqrt{L}}{m}\cdot O\left(d\ln\left(\frac{md}{\kappa}\right)\right)\nonumber
  \\ & \leq \sqrt{\frac{1}{n}}\cdot O\left(\sqrt{d\ln\left(\frac{md}{\kappa}\right)}\right) + \frac{\sqrt{L}}{m}\cdot O\left(d\ln\left(\frac{md}{\kappa}\right)\right).\nonumber
\end{align}
Here the second last inequality and last inequality are by Lemma~\ref{claim:emp} and the fact that 
\begin{align}
    \mathrm{Pr}_{X\sim D}\left[ \max_{i}\bx_i^{\top} U_m^{-1}   \bx_i \geq L \right] \leq \mathbb{E}_{X\sim D} \left[   \min\{ \max_{i}x^{\top}U_m^{-1}   x   ,L  \}    \right] \leq \frac{1}{m}\cdot O\left(d\ln\left(\frac{md}{\kappa}\right)\right).\nonumber
\end{align}

In the case $n<m$, with similar arguments we have that
\begin{align}
    &  \mathbb{E}_{X\sim D}\left[ \min\left\{ \sqrt{\max_{i}\bx_i^{\top}\left(\Lambda+\frac{n}{m}\kappa \mathbf{I}\right)^{-1}   \bx_i} , \sqrt{L} \right\}         \right] \nonumber
  \\ & \leq \sqrt{\frac{36m}{n}}\mathbb{E}_{X\sim D}\left[ \min\left\{ \sqrt{\max_{i}\bx_i^{\top}U_m^{-1}   \bx_i} , \sqrt{Ln/m} \right\}   \right] \nonumber
  \\ & \leq  \sqrt{\frac{36m}{n}}\mathbb{E}_{X\sim D}\left[ \min\left\{ \sqrt{\max_{i}\bx_i^{\top}U_m^{-1}   \bx_i} , \sqrt{L} \right\}   \right]\nonumber
  \\ & \leq \sqrt{\frac{36m}{n}}\sqrt{\mathbb{E}_{X\sim D}\left[\min\left\{\max_{i}\bx_i^{\top}U_m^{-1}   \bx_i , L\right\}   \right]  } \nonumber
  \\ & \leq \sqrt{\frac{1}{n}}\cdot O\left(\sqrt{d\ln\left(\frac{md}{\kappa}\right)}\right)\nonumber
  \\ &\leq \sqrt{\frac{1}{n}}\cdot O\left(\sqrt{d\ln\left(\frac{md}{\kappa}\right)}\right) + \frac{\sqrt{L}}{m}\cdot O\left(d\ln\left(\frac{md}{\kappa}\right)\right).\nonumber
\end{align}
The proof is completed.

\section{Proof of the Matrix Concentration Inequality with Dynamic Upper Bounds (Lemma~\ref{lemma:dynamic-martingale-concentration})}\label{sec:pfdy}

In this section, we present the proof of our new matrix concentration inequality with dynamic upper bounds. For convenience, we first restate the inequality (Lemma~\ref{lemma:dynamic-martingale-concentration}) as follows.

\medskip
\noindent {\bf Lemma~\ref{lemma:dynamic-martingale-concentration} (restated).} {\it
Consider a sequence of stochastic PSD matrices $W_1, X_1, W_2, X_2, \dots, W_n, X_n \in \mathbb{R}^{d\times d}$. Let $\mathcal{F}_k = \sigma(W_1, X_1, W_2, X_2, \dots, W_{k-1}, X_{k-1})$ and $\mathcal{F}_k^+ = \sigma(W_1, X_1, W_2, X_2, \dots, W_{k-1}, X_{k-1}, W_k)$ be the natural filtration and $Y_k = \mathbb{E}[X_k |\mathcal{F}_k^+]$ for each $k \in \{1, 2, \dots, n\}$. Suppose $W_k$ is PD and increasing in $k$ (with respect to the semidefinite order) and $X_k \preccurlyeq  W_k$ for each $k$. For every $\delta > 0$ and $\epsilon\in (0,1)$, we have that
\begin{align*}
 \qquad \quad \quad \Pr\left[\sum_{k=1}^n X_k \preccurlyeq (1+\epsilon)\sum_{k=1}^{n} Y_k + \frac{4(\epsilon^2+2\epsilon+2)}{\epsilon}\ln((n+1)d/\delta) W_n\right] \geq 1 - \delta;  \qquad & & \eqref{eq:con1}
\\
\Pr\left[\sum_{k=1}^n X_k \succcurlyeq (1-\epsilon)\sum_{k=1}^{n} Y_k - \frac{4(\epsilon^2+2\epsilon+2)}{\epsilon}\ln((n+1)d/\delta) W_n\right] \geq 1 - \delta.  \qquad    & & \eqref{eq:con2}
\end{align*}
}

Before we start the proof, we introduce some basic properties of PSD matrices as below, whose proof is deferred to Appendix~\ref{sec:pffact}.
\begin{fact} \label{fact:matrix-loenwer-order} 
For any two PD matrices $A$ and $B$,  $A\preccurlyeq B$ is equivalent to each of the following inequalities, 
\begin{align}
    B^{-1} & \preccurlyeq A^{-1} , \label{eq:matrix-loenwer-order-1} \\
   A^{1/2}B^{-1}A^{1/2} & \preccurlyeq \mathbf{I} ,  \label{eq:matrix-loenwer-order-2} \\
    B^{-1/2} AB^{-1/2} & \preccurlyeq\mathbf{I}.  \label{eq:matrix-loenwer-order-3}
\end{align}
\end{fact}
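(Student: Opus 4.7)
The proposal is to derive all three equivalences from the single observation that congruence by an invertible matrix preserves positive semidefiniteness, together with the fact that the two expressions $A^{1/2} B^{-1} A^{1/2}$ and $B^{-1/2} A B^{-1/2}$ have identical spectra.

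First I would establish the equivalence $A \preccurlyeq B \iff B^{-1/2} A B^{-1/2} \preccurlyeq \mathbf{I}$, i.e.\ \eqref{eq:matrix-loenwer-order-3}. Since $B$ is PD, $B^{-1/2}$ is well-defined and invertible, so for any symmetric matrix $M$ we have $M \succcurlyeq 0$ iff $B^{-1/2} M B^{-1/2} \succcurlyeq 0$ (the ``only if'' direction is immediate from $v^\top B^{-1/2} M B^{-1/2} v = (B^{-1/2}v)^\top M (B^{-1/2} v)$, and the ``if'' direction follows by applying the same congruence with $B^{1/2}$). Taking $M = B - A$ yields $A \preccurlyeq B \iff \mathbf{I} - B^{-1/2} A B^{-1/2} \succcurlyeq 0$, which is \eqref{eq:matrix-loenwer-order-3}.

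Next I would handle \eqref{eq:matrix-loenwer-order-2} by reducing it to \eqref{eq:matrix-loenwer-order-3}. The key observation is that $A^{1/2} B^{-1} A^{1/2}$ and $B^{-1/2} A B^{-1/2}$ are similar: both are conjugate to $B^{-1}A$ (via $A^{-1/2}$ and $B^{1/2}$ respectively), hence they share the same eigenvalues. Since each is symmetric PSD, the Löwner inequality $M \preccurlyeq \mathbf{I}$ is equivalent to all eigenvalues of $M$ being at most $1$. Therefore $A^{1/2} B^{-1} A^{1/2} \preccurlyeq \mathbf{I} \iff B^{-1/2} A B^{-1/2} \preccurlyeq \mathbf{I}$, which combined with the previous step gives \eqref{eq:matrix-loenwer-order-2}.

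Finally, for \eqref{eq:matrix-loenwer-order-1}, I would apply the just-established equivalence \eqref{eq:matrix-loenwer-order-3} to the pair $(B^{-1}, A^{-1})$ (both are PD since $A, B$ are PD): $B^{-1} \preccurlyeq A^{-1} \iff A^{1/2} B^{-1} A^{1/2} \preccurlyeq \mathbf{I}$, which by \eqref{eq:matrix-loenwer-order-2} is in turn equivalent to $A \preccurlyeq B$.

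There is no real obstacle here; the only subtlety worth being careful about is the similarity argument in the second step, where one must note that although $A^{1/2} B^{-1} A^{1/2}$ and $B^{-1/2} A B^{-1/2}$ are not congruent by an orthogonal matrix, they are conjugate and both symmetric PSD, so coincidence of spectra suffices to translate the inequality $\cdot \preccurlyeq \mathbf{I}$ between them.
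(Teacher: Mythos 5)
Your proof is correct, but it is organized differently from the paper's. The paper first proves the cycle $A \preccurlyeq B \Rightarrow \eqref{eq:matrix-loenwer-order-2} \Rightarrow \eqref{eq:matrix-loenwer-order-1} \Rightarrow A \preccurlyeq B$: it shows $A^{-1/2} B A^{-1/2} \succcurlyeq \mathbf{I}$ by congruence, then inverts this relation by the explicit quadratic-form computation $x^\top M^{-1} x = (M^{-1/2}x)^\top (M^{-1/2}x) \leq (M^{-1/2}x)^\top M (M^{-1/2}x) = x^\top x$ to get \eqref{eq:matrix-loenwer-order-2}, passes to \eqref{eq:matrix-loenwer-order-1} by another congruence, and closes the loop by symmetry; \eqref{eq:matrix-loenwer-order-3} is then handled separately by a direct congruence, exactly as in your first step. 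You instead start from \eqref{eq:matrix-loenwer-order-3}, transfer it to \eqref{eq:matrix-loenwer-order-2} via the observation that $A^{1/2}B^{-1}A^{1/2}$ and $B^{-1/2}AB^{-1/2}$ are similar symmetric PSD matrices (both conjugate to $B^{-1}A$), so the spectral condition $\preccurlyeq \mathbf{I}$ holds for one iff it holds for the other, and then obtain \eqref{eq:matrix-loenwer-order-1} for free by substituting the pair $(B^{-1}, A^{-1})$ into the equivalences already proven. The trade-off: your spectral/similarity step replaces the paper's hand computation showing that $M \succcurlyeq \mathbf{I}$ implies $M^{-1} \preccurlyeq \mathbf{I}$, and makes \eqref{eq:matrix-loenwer-order-1} a one-line corollary rather than a link in an implication cycle; the paper's version is marginally more elementary in that it never invokes eigenvalues, working purely with quadratic forms. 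Your one flagged subtlety is handled correctly: similarity (not congruence) is all that is needed, because for symmetric matrices $\preccurlyeq \mathbf{I}$ is equivalent to $\lambda_{\max} \leq 1$, which is a similarity invariant.
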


We now start to prove Lemma~\ref{lemma:dynamic-martingale-concentration}, while the two helpful technical lemmas (Lemma~\ref{lemma:accc4} and Lemma~\ref{lemma:ac6}), are deferred to Section~\ref{sec:pfaa4} and Section~\ref{sec:pfac5}.
\begin{proof}[Proof of Lemma~\ref{lemma:dynamic-martingale-concentration}]
We first prove Equation~\eqref{eq:con1}.
For each $k \in \{0, 1, 2, \dots, n\}$, let $Z_k = \frac{4(\epsilon^2+2\epsilon+2)}{\epsilon}W_k$ and
\[
E_{k} := \mathrm{Tr}\left( \exp\left( Z_k^{-1/2}\left(\sum_{i=1}^k (X_i-(1+\epsilon)Y_i) \right)Z_k^{-1/2} \right) \right)    .
\]
By Lemma~\ref{lemma:accc4}, we have that $\mathbb{E}[E_n]\leq (n+1)d$. Therefore, by Markov inequality,
\begin{align*}
&\quad \Pr\left[\lambda_{\max}\left( Z_n^{-1/2}\left(\sum_{i=1}^n (X_i-(1+\epsilon)Y_i) \right)Z_n^{-1/2} \right)\geq C \right]\\
& \leq \Pr\left[\mathrm{Tr}\left(\exp\left( Z_n^{-1/2}\left(\sum_{i=1}^n (X_i-(1+\epsilon)Y_i) \right)Z_n^{-1/2} \right)\right)\geq e^{C} \right] 
\leq \frac{\mathbb{E}[E_n]}{e^{C}} \leq (n+1) de^{-C},
\end{align*}
which (by Fact~\ref{fact:matrix-loenwer-order}) means that
\begin{align}
\Pr\left[\sum_{i=1}^n (X_i-(1+\epsilon)Y_i) \preccurlyeq  CZ_n \right] \geq 1- (n+1)de^{-C}.\nonumber
\end{align}
Choosing $C = \ln((n+1)d/\delta)$ and recalling that $Z_n = \frac{4(\epsilon^2+2\epsilon+2)}{\epsilon}W_n$, we prove Equation~\eqref{eq:con1}.

We then prove Equation~\eqref{eq:con2}. First we define 
\begin{align}
    E_k' : = \mathrm{Tr}\left(  \exp\left( Z_k^{-1/2}\left(\sum_{i=1}^k ((1-\epsilon)Y_i-X_i) \right)Z_k^{-1/2} \right)\right) .\nonumber
\end{align}
Similarly, by Lemma~\ref{lemma:ac6}, we have that $\mathbb{E}[E_n']\leq (n+1)d$ and
\begin{align}
    &\quad \Pr\left[\lambda_{\max}\left( Z_n^{-1/2}\left(\sum_{i=1}^n ((1-\epsilon)Y_i-X_i) \right)Z_n^{-1/2} \right)\geq C \right]\\ \nonumber
& \leq \Pr\left[\mathrm{Tr}\left(\exp\left( Z_n^{-1/2}\left(\sum_{i=1}^n ((1-\epsilon)Y_i-X_i) \right)Z_n^{-1/2} \right)\right)\geq e^{C} \right] 
\leq \frac{\mathbb{E}[E'_n]}{e^{C}} \leq (n+1) de^{-C},\nonumber
\end{align}
which means that
\begin{align}
     \mathrm{Pr}\left[ \sum_{i=1}^n \left((1-\epsilon)Y_i -X_i \right) \preccurlyeq CZ_n \right]\geq 1-(n+1)de^{-C}. \nonumber
\end{align}
Choosing $C=\ln((n+1)d/\delta)$ we finish the proof.
\end{proof}

\begin{corollary}\label{coro1}
Given a sequence of stochastic random variables $X_1,X_2,\ldots, X_n$ such that $0\leq X_i \leq W$ for any $1\leq i \leq n$ with probability $1$. Let $\mathcal{F}_k = \sigma(X_1,X_2,\dots,X_{k-1})$ and $Y_k = \mathbb{E}[X_k|\mathcal{F}_k]$
For every $\delta>0$ and $\epsilon>0$, we have that
\begin{align}
    &\Pr\left[\sum_{k=1}^n X_k \leq (1+\epsilon)\sum_{k=1}^{n} Y_k + \frac{4(\epsilon^2+2\epsilon+2)}{\epsilon}\ln((n+1)/\delta) W \right] \geq 1 - \delta; \nonumber
 \\ & \Pr\left[\sum_{k=1}^n X_k \geq (1-\epsilon)\sum_{k=1}^{n} Y_k - \frac{4(\epsilon^2+2\epsilon+2)}{\epsilon}\ln((n+1)/\delta) W\right] \geq 1 - \delta. \nonumber
\end{align}
\end{corollary}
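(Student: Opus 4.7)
My plan is to obtain Corollary~\ref{coro1} as an immediate scalar specialization of Lemma~\ref{lemma:dynamic-martingale-concentration}. A nonnegative real number is exactly a $1\times 1$ PSD matrix, so I will view each $X_k$ and the upper bound $W$ as elements of $\mathbb{R}^{1\times 1}$, and take the dimension parameter to be $d:=1$.

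Next, I would choose the dynamic upper bounds to be constant, $W_k := W$ for every $k$. This trivially satisfies the monotonicity hypothesis of Lemma~\ref{lemma:dynamic-martingale-concentration} (one has $W_k \preccurlyeq W_{k+1}$ with equality), and it is PD provided $W>0$; the degenerate case $W=0$ forces every $X_k=0$ almost surely, in which case both inequalities are vacuous. The matrix inequality $X_k \preccurlyeq W_k$ reduces to the scalar bound $X_k\le W$ assumed in the corollary. Crucially, because the sequence $\{W_k\}$ is deterministic, the augmented filtration $\mathcal{F}_k^+ = \sigma(W_1,X_1,\ldots,W_{k-1},X_{k-1},W_k)$ coincides with $\mathcal{F}_k = \sigma(X_1,\ldots,X_{k-1})$, so that $Y_k = \mathbb{E}[X_k \mid \mathcal{F}_k^+] = \mathbb{E}[X_k \mid \mathcal{F}_k]$, matching the definition in the corollary.

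Plugging this setup into \eqref{eq:con1} and \eqref{eq:con2} with $d=1$ and $W_n=W$, and noting that semidefinite inequalities between $1\times 1$ matrices are just scalar inequalities, yields exactly the two bounds of Corollary~\ref{coro1}, with the log factor $\ln((n+1)d/\delta)$ collapsing to $\ln((n+1)/\delta)$. For the range $\epsilon\in(0,1)$ this is a verbatim application; for $\epsilon\ge 1$ the lower bound is trivial since $(1-\epsilon)\sum Y_k\le 0$, the additive correction is negative, and $\sum X_k\ge 0$, while the upper bound follows either by invoking the lemma with a suitable $\epsilon'\in(0,1)$ and using $Y_k\ge 0$ to strengthen $(1+\epsilon')\sum Y_k$ into $(1+\epsilon)\sum Y_k$, or by observing that the proof of Lemma~\ref{lemma:dynamic-martingale-concentration} itself extends to all $\epsilon>0$.

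Since the corollary is a direct specialization, I do not anticipate any real obstacle beyond the bookkeeping above; in particular, no new probabilistic machinery, no Lieb-type trace argument, and no $\epsilon$-net reduction is needed, because Lemma~\ref{lemma:dynamic-martingale-concentration} has already absorbed that work in the matrix setting.
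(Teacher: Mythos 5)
Your proposal is correct and follows essentially the same route as the paper, whose entire proof is the one-line specialization $d=1$, $W_k=W$ applied to Lemma~\ref{lemma:dynamic-martingale-concentration}. Your extra care about the degenerate case $W=0$ and the range $\epsilon\geq 1$ (which the corollary's statement allows but the lemma's does not, and which the paper silently glosses over despite later invoking the corollary with $\epsilon=1$) is a welcome tightening of the argument rather than a deviation from it.
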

\begin{proof}
Letting $d=1$ and $W_k = W$ for $1\leq k \leq n$, by Lemma~\ref{lemma:dynamic-martingale-concentration} we finish the proof.
\end{proof}

\subsection{Statement and Proof of Lemma~\ref{lemma:accc4}}\label{sec:pfaa4}
\begin{lemma}\label{lemma:accc4}
For each $k \in \{1, 2, \dots, n\}$, we have that 
	\begin{align*}
	\mathbb{E}\left[ E_{k}|\mathcal{F}_k^+\right] \leq \mathbb{E}\left[E_{k-1}|\mathcal{F}_k^+\right] + d.
	\end{align*}
\end{lemma}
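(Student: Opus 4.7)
The plan is to bound $\mathbb{E}[E_k\mid\mathcal{F}_k^+]$ by $E_{k-1}+d$ in two stages: a Lieb/Jensen step that peels off the randomness of $X_k$ at no net cost thanks to the $-(1+\epsilon)Y_k$ counterterm, followed by a change of metric from $Z_k$ back to $Z_{k-1}$ at an additive cost of $d$. Set $S_{k-1}:=\sum_{i=1}^{k-1}(X_i-(1+\epsilon)Y_i)$ and decompose
$$E_k=\mathrm{Tr}\exp(H+\tilde X),\quad H:=Z_k^{-1/2}\bigl(S_{k-1}-(1+\epsilon)Y_k\bigr)Z_k^{-1/2},\quad \tilde X:=Z_k^{-1/2}X_kZ_k^{-1/2},$$
so that $H$ is $\mathcal{F}_k^+$-measurable (recall $Y_k=\mathbb{E}[X_k\mid\mathcal{F}_k^+]$ and $Z_k$ depends on $W_k\in\mathcal{F}_k^+$) while $\tilde X$ carries the only remaining randomness. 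Note also $E_{k-1}\in\mathcal{F}_k\subseteq\mathcal{F}_k^+$, so $\mathbb{E}[E_{k-1}\mid\mathcal{F}_k^+]=E_{k-1}$, matching the form of the claim.

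Applying Lieb's concavity theorem to the map $M\mapsto\mathrm{Tr}\exp(H+\log M)$ on positive definite $M$, together with Jensen's inequality conditional on $\mathcal{F}_k^+$,
$$\mathbb{E}[E_k\mid\mathcal{F}_k^+]\leq\mathrm{Tr}\exp\bigl(H+\log\mathbb{E}[e^{\tilde X}\mid\mathcal{F}_k^+]\bigr).$$
Because $X_k\preccurlyeq W_k$, I have $0\preccurlyeq\tilde X\preccurlyeq\tfrac{\epsilon}{4(\epsilon^2+2\epsilon+2)}\mathbf{I}\preccurlyeq\tfrac{\epsilon}{8}\mathbf{I}$ for $\epsilon\in(0,1)$. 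For any PSD matrix $A$ with $A\preccurlyeq t\mathbf{I}$ and $t\leq\tfrac12$, the scalar estimate $e^\lambda\leq 1+\lambda+\lambda^2$ on $[0,\tfrac12]$ gives $e^A\preccurlyeq\mathbf{I}+A+A^2\preccurlyeq\mathbf{I}+(1+t)A$ via $A^2\preccurlyeq tA$. Taking conditional expectations and using the operator inequality $\log(\mathbf{I}+B)\preccurlyeq B$ for PSD $B$,
$$\log\mathbb{E}[e^{\tilde X}\mid\mathcal{F}_k^+]\preccurlyeq(1+\tfrac{\epsilon}{8})Z_k^{-1/2}Y_kZ_k^{-1/2}\preccurlyeq(1+\epsilon)Z_k^{-1/2}Y_kZ_k^{-1/2}.$$
Adding $H$ and invoking monotonicity of $\mathrm{Tr}\exp$ under the semidefinite order (a consequence of Weyl's monotonicity of eigenvalues) then yields $\mathbb{E}[E_k\mid\mathcal{F}_k^+]\leq\mathrm{Tr}\exp(Z_k^{-1/2}S_{k-1}Z_k^{-1/2})$.

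It remains to swap the $Z_k$-normalization for the $Z_{k-1}$-normalization. Since $W_{k-1}\preccurlyeq W_k$ forces $Z_{k-1}\preccurlyeq Z_k$, the matrix $U:=Z_{k-1}^{1/2}Z_k^{-1/2}$ satisfies $U^\top U=Z_k^{-1/2}Z_{k-1}Z_k^{-1/2}\preccurlyeq\mathbf{I}$, and a direct computation gives $Z_k^{-1/2}S_{k-1}Z_k^{-1/2}=U^\top A_{k-1}U$ where $A_{k-1}:=Z_{k-1}^{-1/2}S_{k-1}Z_{k-1}^{-1/2}$. I then invoke the key observation flagged in the introduction,
$$\mathrm{Tr}\exp(U^\top AU)\leq\mathrm{Tr}\exp(A)+d\quad\text{whenever }U^\top U\preccurlyeq\mathbf{I},$$
to conclude $\mathbb{E}[E_k\mid\mathcal{F}_k^+]\leq\mathrm{Tr}\exp(A_{k-1})+d=E_{k-1}+d$. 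The main obstacle is establishing this observation, for which my plan is a unitary dilation: extend $U$ to the $2d\times 2d$ orthogonal matrix $\tilde U=\begin{pmatrix}U & \sqrt{\mathbf{I}-UU^\top}\\ \sqrt{\mathbf{I}-U^\top U} & -U^\top\end{pmatrix}$ (orthogonality follows from the intertwining identity $U^\top\sqrt{\mathbf{I}-UU^\top}=\sqrt{\mathbf{I}-U^\top U}\,U^\top$), and observe that $\tilde U^\top\mathrm{diag}(A,0)\tilde U$ has top-left block $U^\top AU$ and the same spectrum as $\mathrm{diag}(A,0)$, namely $\{\lambda_i(A)\}_{i=1}^d\cup\{0\}^d$. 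By Cauchy interlacing the $d$ eigenvalues of $U^\top AU$ are dominated by the top $d$ eigenvalues of the dilation, hence $\mathrm{Tr}\exp(U^\top AU)\leq\mathrm{Tr}\exp(\tilde U^\top\mathrm{diag}(A,0)\tilde U)=\mathrm{Tr}\exp(A)+d$.
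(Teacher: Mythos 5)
Your proposal is correct, and the overall architecture coincides with the paper's: a Lieb--Jensen step to absorb the $k$-th increment, followed by the change of normalization from $Z_k$ to $Z_{k-1}$ at additive cost $d$. The interesting differences are in how the two supporting sub-steps are executed. For the moment-generating-function step, the paper keeps $X_k-(1+\epsilon)Y_k$ together as the random summand and shows via an explicit Taylor expansion (Lemma~\ref{lemma:acccc5}) that its matrix MGF is $\preccurlyeq\mathbf{I}$, so the logarithm term can simply be dropped; you instead move the $\mathcal{F}_k^+$-measurable counterterm $-(1+\epsilon)Y_k$ into $H$ and bound $\log\mathbb{E}[e^{\tilde X}\mid\mathcal{F}_k^+]\preccurlyeq(1+\epsilon)Z_k^{-1/2}Y_kZ_k^{-1/2}$ directly from $e^A\preccurlyeq\mathbf{I}+(1+t)A$ and operator monotonicity of $\log$, which is shorter and avoids handling the cross terms $(U_k-(1+\epsilon)V_k)^2$; both cancellations land at the same matrix $Z_k^{-1/2}S_{k-1}Z_k^{-1/2}$. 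For the key inequality $\mathrm{Tr}\exp(U^\top AU)\leq\mathrm{Tr}\exp(A)+d$ (the paper's Lemma~\ref{lemma:accc3}), the paper uses a QR decomposition of $U^\top$ together with a telescoping decomposition $A\preccurlyeq\sum_j\tilde\lambda_j\mathbf{I}_j$, whereas your unitary dilation plus Cauchy interlacing argument is a clean alternative: the eigenvalues of $U^\top AU$ are dominated by the top $d$ eigenvalues of the $2d\times 2d$ conjugate of $\mathrm{diag}(A,0)$, whose exponential trace is exactly $\mathrm{Tr}\exp(A)+d$. Your dilation route arguably makes it more transparent why the additive $d$ (rather than a multiplicative loss) appears, while the paper's MGF lemma is reused verbatim for the lower-tail bound (Lemma~\ref{lemma:ac6}); your version adapts to that case just as easily by bounding $\log\mathbb{E}[e^{-\tilde X}\mid\mathcal{F}_k^+]\preccurlyeq-(1-\epsilon)Z_k^{-1/2}Y_kZ_k^{-1/2}$.
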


\begin{proof}
Firstly, we introduce a deep theorem from Lieb (Theorem 6, \citep{lieb1973convex}), which provides theoretical basis for a series of concentration inequalities on self-adjoint matrices.
\begin{theorem}\label{thm:lieb} Fix a $d$-dimensional symmetric matrix $H$. The function $f(A):= \mathrm{Tr}(\exp(\log(A) +H) )$ is concave on the $d$-dimensional positive definite cone.
\end{theorem}

Based on Theorem~\ref{thm:lieb}, \citep{tropp2012user} derived the corollary below.
\begin{corollary}\label{coro:tropp}
Fix a self-adjoint matrix $H$. Let $X$ be stochastic symmetric matrix
\begin{align}
    \mathbb{E}\left[\mathrm{Tr}\left(\exp(X+H) \right) \right]\leq  \mathrm{Tr}\left(   \mathbb{E}[e^{X}]+H\right).
\end{align}
\end{corollary}

Given Corollary~\ref{coro:tropp} we continue the analysis as below.
Note that $\mathbb{E}\left[E_{k-1}|\mathcal{F}_k^+\right]$ is a deterministic value. Throughout this proof, we will condition on $\mathcal{F}_k^+$. We calculate that
		\begin{align}
		&\quad \mathbb{E}\left[ E_{k}\right] \nonumber\\
		& = 	\mathbb{E}\left[ \mathrm{Tr}\left(\exp\left( Z_k^{-1/2}\left(\sum_{i=1}^k (X_i-(1+\epsilon)Y_i) \right)Z_k^{-1/2} \right) \right)  \right]  \nonumber
		\\ &  = \mathbb{E}\left[\mathrm{Tr}\left(\exp \left( Z_k^{-1/2}\left(\sum_{i=1}^{k-1} (X_i-(1+\epsilon)Y_i) \right)Z_k^{-1/2} + Z_k^{-1/2}(X_k-(1+\epsilon)Y_k)Z_k^{-1/2} \right)  \right) \right]\nonumber
		\\ & \leq \mathrm{Tr}\left(\exp\left( Z_k^{-1/2}\left(\sum_{i=1}^{k-1} (X_i-(1+\epsilon)Y_i) \right)Z_k^{-1/2} + \ln\left(\mathbb{E}\left[\exp\left(Z_k^{-1/2}(X_k-(1+\epsilon)Y_k)Z_k^{-1/2}\right) \right]\right) \right) \right) \label{eq:s0.5}
		\\ & \leq \mathrm{Tr}\left(\exp\left( Z_k^{-1/2}\left(\sum_{i=1}^{k-1} (X_i-(1+\epsilon)Y_i) \right)Z_k^{-1/2}  \right) \right)  \label{eq:s1}
		\\ & =\mathrm{Tr}\left(\exp\left( Z_k^{-1/2}Z_{k-1}^{1/2}Z_{k-1}^{-1/2}\left(\sum_{i=1}^{k-1} (X_i-(1+\epsilon)Y_i) \right)Z_{k-1}^{1/2}Z_{k-1}^{-1/2}Z_k^{-1/2} \right) \right)  \nonumber
		\\ & \leq  \mathrm{Tr}\left(\exp\left( Z_{k-1}^{-1/2}\left(\sum_{i=1}^k (X_i-(1+\epsilon)Y_i) \right)Z_{k-1}^{-1/2} \right) \right)+d\label{eq:s2}
		\\ & = E_{k-1}+d.\nonumber
	\end{align}
	
Here \eqref{eq:s0.5} is by Corollary~\ref{coro:tropp}, 
\eqref{eq:s1} is by Lemma~\ref{lemma:acccc5} (stated and proved in Section~\ref{sec:pfac5}) and the monotonicity of trace exponential with respect to the semidefinite order (see \citep{petz1994survey}, \S 2.2), and \eqref{eq:s2} is by Lemma~\ref{lemma:accc3} (stated and proved in Section~\ref{sec:pfaa3}, letting $U = Z_{k-1}^{1/2} Z_k^{-1/2}$ and $A = Z_{k-1}^{-1/2}\left(\sum_{i=1}^k (X_i-(1+\epsilon)Y_i) \right)Z_{k-1}^{-1/2}$, and one can verify that $U^\top U = Z_{k-1}^{1/2} Z_k^{-1} Z_{k-1}^{1/2} \preccurlyeq \mathbf{I}$ by Fact~\ref{fact:matrix-loenwer-order}).
\end{proof}

Similarly, we may establish the following lemma, whose proof is deferred to Appendix~\ref{sec:proof-lemma-ac6}.
\begin{lemma}\label{lemma:ac6}
For each $k \in \{1, 2, \dots, n\}$, we have that 
	\begin{align*}
	\mathbb{E}\left[ E'_{k}|\mathcal{F}_k^+\right] \leq \mathbb{E}\left[E'_{k-1}|\mathcal{F}_k^+\right] + d.
	\end{align*}
\end{lemma}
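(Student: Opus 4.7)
The proof plan is to mirror the argument in Lemma~\ref{lemma:accc4} almost verbatim, with the summand $X_i - (1+\epsilon)Y_i$ replaced by its sign-flipped counterpart $(1-\epsilon)Y_i - X_i$ throughout. Conditioning on $\mathcal{F}_k^+$ (so that $Y_k$ becomes deterministic), I would first split off the last summand inside the trace-exp defining $E'_k$, and then apply Corollary~\ref{coro:tropp} (Tropp's Lieb-based subadditivity of matrix cumulants) to the random last term. This reduces the bound on $\mathbb{E}[E'_k \mid \mathcal{F}_k^+]$ to a deterministic trace in which the random piece is replaced by the log-MGF $\ln \mathbb{E}[\exp(Z_k^{-1/2}((1-\epsilon)Y_k - X_k) Z_k^{-1/2}) \mid \mathcal{F}_k^+]$.

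The heart of the proof is a lower-tail analog of Lemma~\ref{lemma:acccc5}: show that the above log-MGF is $\preccurlyeq 0$ in Loewner order. Writing $\tilde X = Z_k^{-1/2} X_k Z_k^{-1/2}$ and $\tilde Y = \mathbb{E}[\tilde X \mid \mathcal{F}_k^+]$, the constant $\tfrac{4(\epsilon^2+2\epsilon+2)}{\epsilon}$ in $Z_k$ is chosen precisely so that $\tilde X \preccurlyeq r \mathbf{I}$ with $r := \tfrac{\epsilon}{4(\epsilon^2+2\epsilon+2)}$. The globally valid scalar inequality $e^{-t} \leq 1 - t + t^2/2$ for $t \geq 0$ lifts spectrally to $e^{-\tilde X} \preccurlyeq \mathbf{I} - \tilde X + \tilde X^2/2$, and combining $\tilde X^2 \preccurlyeq r\tilde X$ with conditional expectation yields $\mathbb{E}[e^{-\tilde X} \mid \mathcal{F}_k^+] \preccurlyeq \mathbf{I} - (1 - r/2)\tilde Y$. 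Matching this against the surplus $(1-\epsilon)\tilde Y$ via a short calculation calibrated to the constant above, and invoking operator monotonicity of $\log$, gives the desired MGF bound. This direction is in fact slightly cleaner than Lemma~\ref{lemma:acccc5}, since the scalar inequality $e^{-t} \leq 1 - t + t^2/2$ holds for all $t \geq 0$, whereas the upper-tail case requires restricting to a bounded window.

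With the MGF bound in place, monotonicity of trace-exp with respect to the Loewner order gives $\mathbb{E}[E'_k \mid \mathcal{F}_k^+] \leq \mathrm{Tr}\bigl(\exp\bigl(Z_k^{-1/2}\sum_{i=1}^{k-1}((1-\epsilon)Y_i - X_i)Z_k^{-1/2}\bigr)\bigr)$. I would then rewrite the argument of the exponential as $U^\top A U$ with $U = Z_{k-1}^{1/2} Z_k^{-1/2}$ and $A$ the corresponding $Z_{k-1}$-normalized partial sum. Since $W_k \succcurlyeq W_{k-1}$ forces $Z_k \succcurlyeq Z_{k-1}$, Fact~\ref{fact:matrix-loenwer-order} gives $U^\top U = Z_{k-1}^{1/2} Z_k^{-1} Z_{k-1}^{1/2} \preccurlyeq \mathbf{I}$, so Lemma~\ref{lemma:accc3} applies and contributes the additive $+d$ when stripping $U$ away, producing the recursion $\mathbb{E}[E'_k \mid \mathcal{F}_k^+] \leq E'_{k-1} + d$.

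The main obstacle I anticipate is the non-commutativity between $\tilde X$ and the deterministic $\tilde Y$ inside the matrix exponential, which blocks any naive factorization of the form $e^{(1-\epsilon)\tilde Y}\mathbb{E}[e^{-\tilde X}]$. The resolution is to work purely at the Loewner-order level after taking conditional expectations, mirroring the technique in Lemma~\ref{lemma:acccc5} so that only operator monotonicity of $\log$ and the spectral lift of the scalar inequality are required; the non-commuting exponential is never split directly. All remaining steps are mechanical consequences of Corollary~\ref{coro:tropp}, Lemma~\ref{lemma:accc3}, and Fact~\ref{fact:matrix-loenwer-order}, which are already established earlier in the paper.
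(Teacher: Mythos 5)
Your proposal follows the same skeleton as the paper's proof: condition on $\mathcal{F}_k^+$, peel off the last summand via Corollary~\ref{coro:tropp}, show the resulting log-MGF is $\preccurlyeq 0$, and then pass from $Z_k$ to $Z_{k-1}$ using Lemma~\ref{lemma:accc3} and Fact~\ref{fact:matrix-loenwer-order} to pick up the additive $d$. The difference is in the MGF step, and here your writeup is internally inconsistent in a way that matters. In your first paragraph the random piece handed to Corollary~\ref{coro:tropp} is the \emph{whole} summand $Z_k^{-1/2}((1-\epsilon)Y_k - X_k)Z_k^{-1/2} = (1-\epsilon)\tilde Y - \tilde X$, so what you must show is $\mathbb{E}[\exp((1-\epsilon)\tilde Y - \tilde X)\mid\mathcal{F}_k^+] \preccurlyeq \mathbf{I}$. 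But your second paragraph only bounds $\mathbb{E}[e^{-\tilde X}\mid\mathcal{F}_k^+]$ and then proposes to ``match against the surplus $(1-\epsilon)\tilde Y$.'' A bound on $\mathbb{E}[e^{-\tilde X}]$ does not yield a bound on $\mathbb{E}[e^{(1-\epsilon)\tilde Y - \tilde X}]$: the matrix exponential of a sum does not factor, it is not operator monotone, and Golden--Thompson only gives trace-level (not Loewner-order) control, which is not what the trace-exp monotonicity step needs.

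The gap is fixable by restructuring the application of Corollary~\ref{coro:tropp}: since $Y_k$ is $\mathcal{F}_k^+$-measurable, absorb $(1-\epsilon)\tilde Y$ into the fixed matrix $H$ together with the partial sum through $k-1$, and take only $-\tilde X$ as the random piece. Then your chain $\mathbb{E}[e^{-\tilde X}\mid\mathcal{F}_k^+] \preccurlyeq \mathbf{I} - (1-r/2)\tilde Y$ (from $e^{-t}\leq 1-t+t^2/2$ on the spectrum of $\tilde X \succcurlyeq 0$ and $\tilde X^2 \preccurlyeq r\tilde X$), followed by operator monotonicity of $\log$ and $\ln(1-s)\leq -s$, gives $\ln\mathbb{E}[e^{-\tilde X}\mid\mathcal{F}_k^+] \preccurlyeq -(1-r/2)\tilde Y \preccurlyeq -(1-\epsilon)\tilde Y$ because $r/2 = \tfrac{\epsilon}{8(\epsilon^2+2\epsilon+2)} \leq \epsilon$; this exactly cancels the surplus sitting in $H$, and the rest of your argument goes through. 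With that repair your route is correct and arguably cleaner than the paper's: the paper keeps the whole summand as the random piece and Taylor-expands $\exp((1-\epsilon)V_k - U_k)$ directly, which forces the bookkeeping $-\mathbf{I}\preccurlyeq (1-\epsilon)V_k - U_k \preccurlyeq \mathbf{I}$ and $((1-\epsilon)V_k-U_k)^2 \preccurlyeq 2U_k^2 + 2(1-\epsilon)^2 V_k^2$, whereas your version only ever expands a single PSD matrix. As written, though, the step connecting the bound on $\mathbb{E}[e^{-\tilde X}]$ to the log-MGF of the full summand does not exist.
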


\subsection{Statement and Proof of Lemma~\ref{lemma:acccc5} }\label{sec:pfac5}
\begin{lemma}\label{lemma:acccc5}
For each $k \in \{1, 2, \dots, n\}$, we have that 
\[
\mathbb{E}\left[\exp\left(Z_k^{-1/2}(X_k-(1+\epsilon)Y_k)Z_k^{-1/2} \right) \Big|\mathcal{F}_k^+\right] \preccurlyeq \mathbf{I}.
\]
\end{lemma}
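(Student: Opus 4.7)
The goal is to show $\mathbb{E}[\exp(M) \mid \mathcal{F}_k^+] \preccurlyeq \mathbf{I}$ for $M := Z_k^{-1/2}(X_k - (1+\epsilon) Y_k) Z_k^{-1/2}$. I condition throughout on $\mathcal{F}_k^+$ and introduce $A := Z_k^{-1/2} X_k Z_k^{-1/2}$ (random) and $B := Z_k^{-1/2} Y_k Z_k^{-1/2}$ ($\mathcal{F}_k^+$-measurable), so that $M = A - (1+\epsilon) B$ and $\mathbb{E}[A \mid \mathcal{F}_k^+] = B$. Since $Z_k = c_\epsilon W_k$ with $c_\epsilon := 4(\epsilon^2 + 2\epsilon + 2)/\epsilon$, the hypothesis $0 \preccurlyeq X_k \preccurlyeq W_k$ forces $0 \preccurlyeq A \preccurlyeq \alpha \mathbf{I}$ for $\alpha := 1/c_\epsilon$; taking conditional expectation gives $0 \preccurlyeq B \preccurlyeq \alpha \mathbf{I}$ as well, and in particular $M \preccurlyeq A \preccurlyeq \alpha \mathbf{I}$.

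\textbf{Pointwise bound lifted by functional calculus.} Let $g(x) := (e^x - 1 - x)/x^2$, extended continuously by $g(0) = 1/2$. A short calculation shows $g$ is nondecreasing on $\mathbb{R}$: writing $g'(x) = h(x)/x^3$ with $h(x) := (x-2) e^x + x + 2$, one checks $h''(x) = x e^x$, $h(0) = h'(0) = 0$, so $h$ and $x^3$ share signs. Consequently, for every $x \leq \alpha$,
\[
  e^x \leq 1 + x + g(\alpha) x^2 .
\]
Since the spectrum of $M$ lies in $(-\infty, \alpha]$, applying this scalar inequality pointwise on the spectrum yields the operator inequality
\[
  \exp(M) \preccurlyeq \mathbf{I} + M + g(\alpha) M^2 .
\]

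\textbf{Taking conditional expectation.} Because $\mathbb{E}[\cdot \mid \mathcal{F}_k^+]$ preserves the Loewner order,
$\mathbb{E}[\exp(M) \mid \mathcal{F}_k^+] \preccurlyeq \mathbf{I} + \mathbb{E}[M \mid \mathcal{F}_k^+] + g(\alpha) \mathbb{E}[M^2 \mid \mathcal{F}_k^+]$. Direct computation gives $\mathbb{E}[M \mid \mathcal{F}_k^+] = B - (1+\epsilon) B = -\epsilon B$. Expanding $M^2 = A^2 - (1+\epsilon)(AB + BA) + (1+\epsilon)^2 B^2$ and using that $B$ is $\mathcal{F}_k^+$-measurable (so $\mathbb{E}[AB \mid \mathcal{F}_k^+] = \mathbb{E}[BA \mid \mathcal{F}_k^+] = B^2$), one obtains $\mathbb{E}[M^2 \mid \mathcal{F}_k^+] = \mathbb{E}[A^2 \mid \mathcal{F}_k^+] + (\epsilon^2 - 1) B^2$. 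I bound $\mathbb{E}[A^2 \mid \mathcal{F}_k^+] \preccurlyeq \alpha B$ via the PSD inequality $A^2 \preccurlyeq \alpha A$ (which follows from $0 \preccurlyeq A \preccurlyeq \alpha \mathbf{I}$) and discard the remainder $(\epsilon^2 - 1) B^2 \preccurlyeq 0$, using $\epsilon < 1$. Combining,
\[
  \mathbb{E}[\exp(M) \mid \mathcal{F}_k^+] \preccurlyeq \mathbf{I} + \bigl( g(\alpha) \alpha - \epsilon \bigr) B .
\]

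\textbf{Closing and main obstacle.} Since $B \succeq 0$, it suffices to verify the scalar inequality $g(\alpha) \alpha \leq \epsilon$. The chosen constant $c_\epsilon = 4(\epsilon^2 + 2\epsilon + 2)/\epsilon$ forces $\alpha \leq \epsilon/8 \leq 1/8$, so by monotonicity $g(\alpha) \leq g(1) = e - 2 < 1$, whence $g(\alpha) \alpha \leq \alpha \leq \epsilon/8 < \epsilon$ with ample slack. The conceptual obstacle I anticipated was that $A$ and $B$ do not commute in general, so the naive factorization $\exp(A - (1+\epsilon) B) = \exp(A) \cdot \exp(-(1+\epsilon) B)$ is unavailable and a standard matrix MGF bound on $\mathbb{E}[\exp(A)]$ alone does not close the argument. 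The quadratic surrogate $\mathbf{I} + M + g(\alpha) M^2$ bypasses this difficulty completely: it is a legitimate operator inequality for any Hermitian $M$ whose spectrum is bounded above by $\alpha$, and every subsequent manipulation relies only on the PSD-order relations $A^2 \preccurlyeq \alpha A$, $\mathbb{E}[A \mid \mathcal{F}_k^+] = B$, and $\epsilon < 1$, none of which require $A$ and $B$ to commute.
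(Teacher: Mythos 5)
Your proof is correct and follows essentially the same route as the paper's: both replace $\exp(M)$ by a quadratic surrogate $\mathbf{I}+M+cM^2$ valid on the spectrum of $M$, take conditional expectations, and use $A^2\preccurlyeq \alpha A$ together with $\mathbb{E}[A\mid\mathcal{F}_k^+]=B$ to make the linear term absorb the quadratic one. The only differences are quantitative: the paper truncates the Taylor series crudely with constant $c=2$ and splits $(A-(1+\epsilon)B)^2\preccurlyeq 2A^2+2(1+\epsilon)^2B^2$, whereas you use the sharper Bennett-type constant $g(\alpha)$ and compute $\mathbb{E}[M^2\mid\mathcal{F}_k^+]$ exactly, discarding $(\epsilon^2-1)B^2\preccurlyeq 0$ (which is where you, unlike the paper, use $\epsilon<1$ — harmless here since the lemma is stated for $\epsilon\in(0,1)$).
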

\begin{proof}
Throughout this proof, we will condition on $\mathcal{F}_k^+$. Let $U_k := Z_k^{-1/2}X_k Z_{k}^{-1/2}$ and $V_k :=  Z_k^{-1/2}Y_k Z_{k}^{-1/2}$. By the assumption in Lemma~\ref{lemma:dynamic-martingale-concentration} and our definition for $Z_k$, we have that  $0\preccurlyeq U_k \preccurlyeq  \frac{\epsilon}{4(\epsilon^2+2\epsilon+2)}\mathbf{I}$ and $\mathbb{E}[U_k ]=V_k$. Therefore, $0\preccurlyeq V_k \preccurlyeq  \frac{\epsilon}{4(\epsilon^2+2\epsilon+2)}\mathbf{I}$. We now compute that
\begin{align}
& \quad  \mathbb{E}\left[\exp\left(Z_k^{-1/2}(X_k-(1+\epsilon)Y_k)Z_k^{-1/2} \right)\right] \nonumber\\ 
& =  \mathbb{E}\left[\exp\left(U_k-(1+\epsilon)V_k\right)\right] \nonumber\\ 
&  = \mathbb{E}\left[ \mathbf{I} +(U_k-(1+\epsilon)V_k) +\frac{1}{2}(U_k-(1+\epsilon)V_k)^2 + \sum_{i\geq 3}\frac{1}{i!} (U_k-(1+\epsilon)V_k)^i \right] \label{eq:qwe1}\\ 
& \leq \mathbb{E}\left[ \mathbf{I} +(U_k-(1+\epsilon)V_k) +\frac{1}{2}(U_k-(1+\epsilon)V_k)^2 + \sum_{i\geq 3}\frac{1}{i!} (U_k-(1+\epsilon)V_k)^2 \right] \label{eq:qwe2}\\ 
& \preccurlyeq \mathbb{E}\left[\mathbf{I} +(U_k-(1+\epsilon)V_k) +2(U_k-(1+\epsilon)V_k)^2 \right] \nonumber\\ 
& \preccurlyeq \mathbb{E}\left[\mathbf{I} +(U_k-(1+\epsilon)V_k) +4U_k^2 + 4(1+\epsilon)^2V_k^2 \right] \label{eq:qwe3}\\ 
& \preccurlyeq \mathbb{E}\left[\mathbf{I} +(U_k-(1+\epsilon)V_k) +\frac{4\epsilon}{4(\epsilon^2+2\epsilon+2)}U_k + \frac{4(1+\epsilon)^2\epsilon}{4(\epsilon^2+2\epsilon+2)}V_k \right]\label{eq:qwe4}\\ 
&  = \mathbf{I}.\nonumber
\end{align}
Here, \eqref{eq:qwe1} is by Taylor series expansion, \eqref{eq:qwe2} is by the fact that $-\mathbf{I}\preccurlyeq  (U_k-(1+\epsilon)V_k)\preccurlyeq \mathbf{I}$ and $ X^k -X^2 = X(X^{k-2}-\mathbf{I})X\preccurlyeq 0$ for $k\geq 2$ and $-\mathbf{I}\preccurlyeq X\preccurlyeq \mathbf{I}$. \eqref{eq:qwe3} is because $4U_k^2 +4(1+\epsilon)^2V_k^2 - 2(U_k-(1+\epsilon)V_k)^2 = 2(U_k+(1+\epsilon)V_k)^2\succcurlyeq 0$, and \eqref{eq:qwe4} is by the fact that $\frac{4(\epsilon^2+2\epsilon+2)}{\epsilon}U_k^2-U_k  = U_k^{1/2}\left(\frac{4(\epsilon^2+2\epsilon+2)}{\epsilon}U_k-\mathbf{I}\right)U_k^{1/2}\preccurlyeq 0$ and $\frac{4(\epsilon^2+2\epsilon+2)}{\epsilon}V_k^2-V_k  = V_k^{1/2}\left(\frac{4(\epsilon^2+2\epsilon+2)}{\epsilon}V_k-\mathbf{I}\right)V_k^{1/2}\preccurlyeq 0$.
\end{proof}

\subsection{Statement and Proof of Lemma~\ref{lemma:accc3}}\label{sec:pfaa3}

\begin{lemma} \label{lemma:accc3}
Let $A$ be a real symmetric matrix. For any $U$ such that $U^\top U \preccurlyeq \mathbf{I}$, it holds that
\begin{align}
\mathrm{Tr}(\exp(U^{\top} A U ))  \leq \mathrm{Tr}(\exp(A))+d.
\end{align} 
\end{lemma}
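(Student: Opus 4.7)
The plan is to lift the problem to a $2d \times 2d$ symmetric matrix where the standard Cauchy interlacing inequality applies, with the additive $+d$ emerging from $\mathrm{Tr}(\exp(0_{d\times d})) = d$.

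First, since $U^\top U \preccurlyeq \mathbf{I}$, the matrix $\mathbf{I} - U^\top U$ is PSD and we may set $V := (\mathbf{I} - U^\top U)^{1/2}$. Define the block matrices
\begin{align*}
\tilde{U} := \begin{pmatrix} U \\ V \end{pmatrix} \in \mathbb{R}^{2d\times d}, \qquad \tilde{A} := \begin{pmatrix} A & 0 \\ 0 & 0 \end{pmatrix} \in \mathbb{R}^{2d\times 2d}.
\end{align*}
Then $\tilde{U}^\top \tilde{U} = U^\top U + V^\top V = \mathbf{I}_d$, so $\tilde{U}$ has orthonormal columns, and a direct computation gives $\tilde{U}^\top \tilde{A}\, \tilde{U} = U^\top A U$.

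Next I would complete $\tilde{U}$ to an orthogonal matrix $\tilde{V} = [\tilde{U}, \tilde{U}_\perp]$ in $\mathbb{R}^{2d\times 2d}$ and set $C := \tilde{V}^\top \tilde{A}\, \tilde{V}$. Since $C$ and $\tilde{A}$ are orthogonally similar, they share the same spectrum, and the top-left $d\times d$ principal submatrix of $C$ is exactly $\tilde{U}^\top \tilde{A}\,\tilde{U} = U^\top A U$. The key numerical observation is that
\begin{align*}
\mathrm{Tr}(\exp(C)) = \mathrm{Tr}(\exp(\tilde{A})) = \mathrm{Tr}(\exp(A)) + \mathrm{Tr}(\exp(0_{d\times d})) = \mathrm{Tr}(\exp(A)) + d,
\end{align*}
which is where the additive $d$ comes from.

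Finally I would invoke Cauchy interlacing: if $\mu_1 \geq \mu_2 \geq \cdots \geq \mu_{2d}$ are the eigenvalues of $C$ and $\nu_1 \geq \cdots \geq \nu_d$ are the eigenvalues of its top-left block $U^\top A U$, then $\mu_i \geq \nu_i \geq \mu_{i+d}$ for each $i \in \{1,\dots,d\}$. Since $x \mapsto e^x$ is monotone,
\begin{align*}
\mathrm{Tr}(\exp(U^\top A U)) = \sum_{i=1}^d e^{\nu_i} \leq \sum_{i=1}^d e^{\mu_i} \leq \sum_{i=1}^{2d} e^{\mu_i} = \mathrm{Tr}(\exp(C)) = \mathrm{Tr}(\exp(A)) + d,
\end{align*}
which is the claim.

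The proof is short once the right lifting is identified; the main conceptual obstacle is spotting that the contraction condition $U^\top U \preccurlyeq \mathbf{I}$ is precisely what lets one complete $U$ to an isometry $\tilde{U}$ by appending $\sqrt{\mathbf{I} - U^\top U}$, after which the additive error $d$ is forced by the $d$ zero eigenvalues of the padded matrix $\tilde{A}$. No deep trace inequalities (such as Lieb's theorem) or operator-convexity results are needed here; Cauchy interlacing suffices.
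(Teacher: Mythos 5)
Your proof is correct, and it takes a genuinely different route from the paper's. The paper works directly with $U$: it diagonalizes $A$, takes a QR decomposition $U^\top = Q^\top L^\top$, proves the comparison $U^\top \mathbf{I}_j U \preccurlyeq Q^\top \mathbf{I}_j Q$ for the "staircase" projectors $\mathbf{I}_j$, writes the positive part of $A$ as a nonnegative combination of these projectors, and then invokes the monotonicity of $X \mapsto \mathrm{Tr}(\exp(X))$ with respect to the semidefinite order; the additive $d$ arises there from replacing each negative eigenvalue of $A$ by $0$. You instead dilate the contraction $U$ to an isometry $\tilde{U} = \bigl(\begin{smallmatrix} U \\ (\mathbf{I}-U^\top U)^{1/2}\end{smallmatrix}\bigr)$, pad $A$ with a $d\times d$ zero block, and apply Cauchy interlacing to the principal submatrix $U^\top A U$ of the orthogonally conjugated matrix $C$; the additive $d$ is then exactly $\mathrm{Tr}(\exp(0_{d\times d}))$. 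All the steps check out: $\tilde{U}^\top\tilde{U}=\mathbf{I}$, the top-left block of $C$ is indeed $U^\top A U$, the interlacing bound $\nu_i \leq \mu_i$ combined with positivity of $e^{x}$ gives the claim, and the implicit assumption that $U$ is square $d\times d$ matches the paper's use of the lemma (with $U = Z_{k-1}^{1/2}Z_k^{-1/2}$). Your argument is arguably preferable: it uses only elementary spectral facts (dilation of a contraction plus Cauchy interlacing), avoids both the QR-based projector comparison and the appeal to monotonicity of the trace exponential, and makes the origin of the additive $d$ completely transparent. The paper's approach has the minor advantage of staying in dimension $d$ throughout, but buys nothing essential here.
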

\begin{proof}
We can assume without loss of generality that $A =\mathrm{diag}(\lambda_1,\lambda_2,\dots ,\lambda_d)$ and $\lambda_d\leq \lambda_{d-1}\leq \dots \leq \lambda_{\ell - 1} < 0 \leq \lambda_{\ell}\leq \dots \leq \lambda_1$.

Let us write $U^\top = Q^\top L^\top$ to be its QR decomposition where $Q^\top$ is orthogonal and $L^\top$ is an upper triangular matrix (and therefore $L$ is a lower triangular matrix). We claim that \begin{align}\label{eq:lem-accc3-1}
U^\top \mathbf{I}_j U \preccurlyeq Q^\top \mathbf{I}_j Q,
\end{align}
where we define $\mathbf{I}_j :=  \mathrm{diag}(0, \dots, 0, 1, \dots 1)$ to be a rank $j$ matrix. Note that $U^\top \mathbf{I}_j U  = Q^\top L^\top \mathbf{I}_j L Q$. Therefore, to prove the claim, it suffices to show that $ L^\top \mathbf{I}_j L \preccurlyeq \mathbf{I}_j$. Since $U^\top U \preccurlyeq \mathbf{I}$, we also have that $L^\top L = Q U^\top U Q^\top \preccurlyeq \mathbf{I}$. Note that for any vector $x$, it holds that $x^\top (L \mathbf{I}_j)^\top (L \mathbf{I}_j) x = (\mathbf{I}_j x)^\top L^\top L (\mathbf{I}_j x) \leq x^\top \mathbf{I}_j x$. Therefore we conclude that $L^\top \mathbf{I}_j L = (L \mathbf{I}_j)^\top (L \mathbf{I}_j) \preccurlyeq \mathbf{I}_j$, where the first equality holds because  $L$ is a lower triangular matrix.

Let $\tilde{\lambda}_i := \lambda_i - \lambda_{i - 1} \geq 0$ for each $i \in \{1, 2, \dots, \ell - 1\}$, $\tilde{\lambda}_\ell := \lambda_\ell \geq 0$, and $\tilde{\lambda}_j := 0$ for each $j \in \{\ell + 1, \ell + 2, \dots, d\}$. By \eqref{eq:lem-accc3-1}, we then have that
\[
U^\top A U  \preccurlyeq \sum_{j=1}^d \tilde{\lambda}_j U^\top \mathbf{I}_j U \preccurlyeq  \sum_{j=1}^d \tilde{\lambda}_j Q^\top \mathbf{I}_j Q.
\]

By the monotonicity of trace exponential with respect to the semidefinite order (see, e.g. \citep{petz1994survey} \S 2.2), we have that
\begin{align*}
\mathrm{Tr}(\exp(U^\top A U)) &\leq \mathrm{Tr}\left(\exp\left( \sum_{j=1}^d \tilde{\lambda}_j Q^\top \mathbf{I}_j Q\right)\right)\\
&= \sum_{j=1}^{d} \exp\left( \sum_{i=j}^{d} \tilde{\lambda}_i\right) \leq \sum_{j=1}^d \exp(\lambda_i) + d = \mathrm{Tr}(\exp(A)) + d .
\end{align*}

\end{proof}

\section{Regret Lower Bound for Context-blind Batch Learning}\label{sec:low}
In this section, we prove Lemma~\ref{lemma:lbdynamic}, which implies the regret lower bound theorem (Theorem~\ref{thm:lb}).

 \begin{lemma}\label{lemma:lbdynamic} 
For any algorithm $\mathcal{G}$ with batch complexity $M$, assuming $T\geq d\log_2(K)$, the minimax regret is at least 
\begin{align}
 \Omega\left( \frac{1}{\mathrm{poly}\ln(Td)}\min\left\{
 \begin{aligned} & T^{\frac{1}{2-2^{-M+1}}}(d\log_2(K))^{\frac{1-2^{-M+1}}{2-2^{-M+1}}} \left(\frac{\min\{K,d\}}{\log_2(K)}\right)^{\frac{2^{-M+1}}{2-2^{-M+1}}}  \\ & T^{\frac{1}{2-2^{-M+2}}} (d\log_2(K))^{\frac{1-2^{-M+2}}{2-2^{-M+2}}}  \end{aligned}\right\} \right). \nonumber 
\end{align}
\end{lemma}

\begin{proof}

 Define $\bar{d}=d/M$ and  $\tilde{h} = \frac{1}{2}\min\{K, \bar{d} \}$. Define $\check{d} = \frac{\bar{d}\log_2(K)}{M\ln(dM)}\leq d\log_2(K)\leq d$ and $\check{h} = \frac{\tilde{h}}{\log_2(K)}$. 
When $\check{d}\leq T\leq \check{d} \check{h}^{2-2^{-M+2}}$, define 
 $\gamma := T^{\frac{1}{2-2^{-M+2}}}\check{d}^{\frac{1-2^{-M+2}}{d^{2-2^{-M+2}}}} $. When $T> \check{h}^{2-2^{-M+2}} $, define $\gamma:= T^{\frac{1}{2-2^{-M+1}}}\check{d}^{\frac{1-2^{-M+1}}{2-2^{-M+1}}}\check{h}^{\frac{2^{-M+1}}{2-2^{-M+1}}}\geq \check{d} $. Given $\gamma$, we further define 
 $T_1 =\gamma $, $T_2 =\max\{\gamma, \gamma\cdot \sqrt{\frac{T_1\ln(dM)}{\bar{d}h}}\} $ and $
     T_k =\gamma \cdot \sqrt{\frac{MT_{k-1}\ln(dM)}{\bar{d}\log_2(K)}}$ for any $ 3\leq k \leq M$.
   Let $\epsilon_1 = \frac{1}{100}$, $\epsilon_2 = \min\left\{ \frac{1}{100}\sqrt{\frac{\bar{d}^2}{T_1\ln(dM)}} , 1\right\}$ and $\epsilon_k= \frac{1}{100} \sqrt{\frac{\bar{d}\log_2(K)}{MT_{k-1}\ln(dM)}}$ for $k = 3,4,\ldots, M$.

Fix the algorithm $\mathcal{G}$. Below we assume the randomness of $\mathcal{G}$ is considered in the expectation operator $\mathbb{E}[\cdot]$ and probability operator $\mathrm{Pr}[\cdot]$.  Let $\mathrm{Regret}_{\theta,D}(T)$ be the expected regret under $\mathcal{G}$ with hidden parameter as $\theta$ and context distribution as $D$. We aim to design $D$ and $\theta$ such that $\mathrm{Regret}_{\theta, D}(T)$ is large enough.

 Without loss of generality, we assume $d/M$ is an integer. In the construction below, we divide $\mathbb{R}^{d}$ into $M$ subspaces, where for each subspace we construct a hard case.
We let $\mathbb{R}^d = \otimes_{i=1}^{M}\mathcal{U}^i$, where $U^i$ is the subspace spanned by $\{\mathbf{e}_{(i-1)d/M +j} \}_{j=1}^{d/M}$. 
In each round the noise is set to be Gaussian with variance $1$. 
The hidden parameter $\theta=\{\theta^i\}_{i=1}^M$ is chosen from the space $\mathcal{Y} = \otimes_{i=1}^{M}\mathcal{Y}^i$. The context distribution $D$ is given by the average of $\{D^i\}_{i=1}^M$. That is, in the $t$-th  round, the environment first sample $u_t\sim \mathrm{Uniform}(\{1,2,\ldots,M\})$, and then sample $X$ according to  $D_{u_t}$.  
In words, we divide the original problem into $M$ independent sub-problems. For the  $k$-th sub-problem is $d/M$, the dimension is $d/M$, the parameter space is $\mathcal{Y}^k$ and the context distribution is $D_k$.  

We now define $\mathcal{Y}^k$ and $D_k$.
 Let $\bar{d}=d/M$. 
We have two cases: Case \uppercase\expandafter{\romannumeral1}: $k=2$;  and Case \uppercase\expandafter{\romannumeral2}: $k =1 $ or  $3\leq k \leq M$.
Below we respectively define the parameter space  $\mathcal{Y}^k$ for $\theta^k$ and the context distribution  $D_{k}$.

%On the other hand, by Lemma~\ref{lemma:con1}, with probability $1-\frac{1}{2M}$,
%\begin{align}
 %   \sum_{t=1}^{T_1}\mathbb{I}[u_t = 1]\geq \frac{T_1}{3M} - \ln(2M)\geq \frac{T_1}{6M},\nonumber
%\end{align}
%which implies that $\mathrm{Pr}_{D^{1*},\theta^*}[\mathcal{H}_1]\geq 1-\frac{1}{2M}$.

\paragraph{Case \uppercase\expandafter{\romannumeral1}: $k= 2$.}
Define $\mathcal{Y}^2 = \left\{-\epsilon_2,0,\epsilon_2\right\}^{\bar{d}}$. Recall that $\tilde{h} = \frac{1}{2}\min\{K,\bar{d}\}$. For $\mathcal{V}\subset [2\tilde{h}]$ with $|\mathcal{V}|=\tilde{h}$, we define the context distribution $D_2(\mathcal{V})$ by letting $\mathrm{Pr}_{X\sim D_2(\mathcal{V})}[X = \{\mathbf{e}_j\}_{j\in \mathcal{V}} \cup \{\mathbf{e}_i\}]= \frac{1}{\bar{d}-\tilde{h}}$ for any $i \in [\bar{d}]/\mathcal{V}$. In words, the sub-problem for the second batch is a contextual bandit problem with $\tilde{h}+1$ arms. Among the $h+1$ arms, there are $\tilde{h}$ arms which appear in each round, and the left arms appears with equal probability as $\frac{1}{\bar{d}-\tilde{h}}$. 

In the case $K \gg \tilde{h}+1$, we simply repeat the first arm for $K -(\tilde{h}+1) $ times to construct the $K$-armed linear bandit problem. 
Without loss of generality, we still use $D_2(\mathcal{V})$ to denote this context. 
For fixed $\mathcal{V}$ and $\xi\in \{-1,1\}^{\bar{d}}$, we further define $\theta^2(\mathcal{V},\xi)$ by setting $\theta^2_i(\mathcal{V},\xi) =0$ for $i \in \mathcal{V}$ and $\theta^2_i(\mathcal{V},\xi) = \xi_i \epsilon_2$ for $i\in [\bar{d}]/\mathcal{V}$.

Since the sub-problem is a contextual bandit problem, we could view the context $X$ as a subset of $[\bar{d}]$.
 Given a permutation $\sigma\in S_{\bar{d}}$ and  a context $X\subset [\bar{d}]$, we define the context vector $\sigma(X) = \{\sigma(i)\}_{i\in X}$.  With a slight abuse of notations, we use $D_2(\sigma)$ as the shorthand for $D_2(\sigma([1,2,\ldots,\tilde{h}] ))$.

Let the policy $\pi_{1}$ be the policy for the first batch. 
Clearly, $\pi_{1}$ is independent of the context distribution. Let the distribution of $\pi_{1}$ be $\Pi_{1}$. We then claim the following lemma.
\begin{lemma}\label{lemma:m1}
    \begin{align}
       \min_{\sigma\in S_{\bar{d}}} \mathbb{E}_{\pi_1\sim \Pi_{1}}\mathbb{E}_{X\sim D(\sigma)}\left[\mathbb{I}\left[\pi_1(X)\notin \{\sigma(1),\sigma(2),\ldots,\sigma(\tilde{h})\} \right] \right] \leq \frac{1}{\tilde{h}+1}. \nonumber
    \end{align}
\end{lemma}
\begin{proof}
Let $\mathcal{A}$ be the set of subsets of $[\bar{d}]$ with size $\tilde{h}+1$. Let $Z_i = \{1,2,\ldots,\tilde{h}, \tilde{h}+i\}$ for $1\leq i \leq \bar{d}-\tilde{h}$.
Note that for any $1\leq i\leq \bar{d}-\tilde{h}$ and $\sigma\in S_{\bar{d}}$, $\pi(\sigma(Z_i))\notin \{\sigma(1),\sigma(2),\ldots,\sigma(\tilde{h}) \}$ implies that  $\pi(X)=\sigma(\tilde{h}+i)$. Then we have that
\begin{align}
& \sum_{\sigma\in S_{\bar{d}}}\frac{1}{\bar{d}!}\mathbb{E}_{\pi_1\sim \Pi_{1}}\mathbb{E}_{X\sim D_2(\sigma)}\left[\mathbb{I}\left[\pi_1(X)\notin \{\sigma(1),\sigma(2),\ldots,\sigma(\tilde{h})\} \right] \right] \nonumber
\\ & =\frac{1}{ \bar{d}!(\bar{d}-\tilde{h})} \sum_{X\in \mathcal{A}} \sum_{\sigma\in S_{\bar{d}}}\sum_{i=1}^{d_1} \mathbb{I}[\sigma^{-1}(X)=Z_i ] \mathbb{E}_{\pi_{1}\sim \Pi_{1}}\left[\mathbb{I}\left[\pi_1(X)= \sigma(\tilde{h}+i)\right]\right] \nonumber
\\ & = \frac{1}{ \bar{d}!(\bar{d}-\tilde{h})} \sum_{X\in \mathcal{A}} \mathbb{E}_{\pi_{1}\sim \Pi_{1}}\left[\sum_{i=1}^{\bar{d}-\tilde{h}} \sum_{\sigma\in S_{\bar{d}}} \mathbb{I}[\sigma^{-1}(X)=Z_i, \pi_1(X)= \sigma(h+i) ]\right] \nonumber
\\ &  = \frac{1}{ \bar{d}!(\bar{d}-\tilde{h})} \sum_{X\in \mathcal{A}} \mathbb{E}_{\pi_{1}\sim \Pi_{1}}\left[\sum_{i=1}^{d_1} \sum_{\sigma\in S_{\bar{d}},x\in X} \mathbb{I}[\sigma^{-1}(X)=Z_i, \pi_1(X)= x, x = \sigma(\tilde{h}+i) ]\right] \nonumber
\\ & =\frac{1}{\bar{d}!(\bar{d}-\tilde{h})} \sum_{X\in \mathcal{A},x\in X}\mathrm{Pr}_{\pi_1\sim \Pi_1}\left[\pi_1(X)=x\right]\cdot \sum_{\sigma\in S_{\bar{d}}}\sum_{i=1}^{\bar{d}-\tilde{h}} \mathbb{I}[\sigma^{-1}(X)=Z_i, x= \sigma(\tilde{h}+i)]\nonumber
\\ & = \frac{\tilde{h}!(\bar{d}-\tilde{h}-1)!}{\bar{d}!} \sum_{X\in \mathcal{A},x\in X}\mathrm{Pr}_{\pi_1\sim \Pi_1}\left[\pi_1(X)=x\right]\nonumber
\\ &= \frac{\tilde{h}!(\bar{d}-\tilde{h}-1)!}{\bar{d}!}\sum_{X\in \mathcal{A}} 1\nonumber
\\ &= \frac{\tilde{h}!(\bar{d}-\tilde{h}-1)!}{\bar{d}!}|\mathcal{A}|\nonumber
\\ &  = \frac{1}{\tilde{h}+1}.\label{eq:m1}
\end{align}
Then the conclusion follows easily.
\end{proof}

Without loss of generality, we suppose that the identical permutation satisfies the condition in Lemma~\ref{lemma:m1}, i.e.,
\begin{align}
   \mathbb{E}_{\pi_1\sim \Pi_1} \mathbb{E}_{X\sim D_2([\tilde{h}])}\mathbb{I}\left[ \pi_1(X)\notin \{1,2,\ldots,\tilde{h}\} \right] \leq \frac{1}{\tilde{h}+1}.\label{eq:m2}
\end{align}

Define $J = \{i \geq \tilde{h}+1 | \mathbb{E}_{\pi_1\sim \Pi_1} \mathbb{E}_{X\sim D}\mathbb{I}[\pi(X)=i]\leq \frac{2}{(\tilde{h}+1)(\bar{d}-\tilde{h})} \}$, then by \eqref{eq:m2} we have that $|J|\geq \frac{\bar{d}-\tilde{h}}{2}$.  Without loss of generality, we assume $J = \{\tilde{h}+1,\tilde{h}+2,\ldots,\tilde{h}+\ell \}$ where $\ell\geq \frac{\bar{d}-\tilde{h}}{2}$ is the size of $J$.
Then the context distribution is fixed as $D_2=D_2([\tilde{h}])$.

\paragraph{Case \uppercase\expandafter{\romannumeral2}: $k=1$  or $3\leq k \leq M$.}  We consider to construct the hard case for $K$-armed linear contextual bandit problem.  We assume $a= \min\{\log_2(K),\bar{d}\}$ and $b = \bar{d}/a$ are both integers. 
Define $\mathcal{Y}^k = \epsilon_k \cdot  \{-1,1\}^{\bar{d}}$. The context distribution $D^k$ is defined as the uniform distribution over  $X^{k,i} = \frac{1}{a} \cdot  \{ \bx: \bx_i = 0, \forall i\notin [(i-1)a+1,i  a ], \bx_{i}\in \{-1,1\}, \forall i \in  [(i-1)a+1,i a ]     \}$ for $1\leq i \leq b$.

Now we start to analyze the minimax lower bound over all $\theta \in \mathcal{Y}$ and the context distribution $D$ described above. 
Let $\{t_{i}\}_{i=1}^M$ be the time schedule by running the algorithm $\mathcal{G}$.  Then there exists $1\leq i \leq M$ such that $\sum_{j\leq i-1}t_j \leq \sum_{j\leq i-1}T_j <\sum_{j\leq i}T_{j}\leq\sum_{j \leq i} t_{j}$, where we define $T_0 = t_0 = 0$.
Denote $\mathcal{E}_i$ be the event where $i$ is the smallest number such that $\sum_{j\leq i}t_i \leq \sum_{j\leq i}T_i <\sum_{j\leq i+1}T_{i+1}\leq\sum_{j \leq i+1} t_{i+1}$. Then it holds that $\sum_{i=1}^M \mathbb{I}[\mathcal{E}_i]=1$.

To proceed, we have the lemma as below.
\begin{lemma}\label{lemma:lbnew} Let $\pi_k$ be the policy for the $k$-th batch and $\pi_{(t)}$ be the policy for the $t$-th step. 
   Fix $j \in [\tilde{h}+1 , \tilde{h}+\ell]$. For $\theta \in \mathcal{Y}$, we define 
   \begin{align}
       R_j^k(\theta) = \left\{  \begin{aligned}& \frac{1}{\bar{d}-\tilde{h}} \sum_{t=T_1+1}^{T_1+T_2} \mathbb{E}_{\theta,\bx\sim \pi_{(t)}([\tilde{h}]\cap \{\tilde{h}+j\})}\left[ \mathbb{I}[\bx = j, \theta_j^2<0]+ \mathbb{I}[\bx\neq j, \theta_j^2>0]  \right] \cdot\epsilon_2,      \quad  k =2  ;\\ & \frac{1}{\bar{d}}\sum_{t=\mathcal{T}_{k-1}+1}^{\mathcal{T}_k} \mathbb{E}_{\theta, \bx\sim \pi_{(t)}(X^{k,\left\lceil j/a  \right\rceil})}\left[ \mathbb{I}[\bx_j \theta^k_j <0] \right]\epsilon_k,  \qquad\qquad  \quad       \quad  k=1 \text{ or } 3\leq k \leq M.\end{aligned} \right.
   \end{align}
   and $R_j(\theta) = \sum_{k=1}^M R_j^k(\theta)$.
   Then for any $\tilde{h}+1 \leq j \leq \tilde{h}+\ell$, we have that
    \begin{align}
        \frac{1}{|\mathcal{Y}|}\sum_{\theta\in \mathcal{Y} }R_j(\theta)\geq \frac{1}{8(M+1)}\min\left\{\min_{k\in [M],k\neq 2} \frac{T_k \epsilon_k}{\bar{d}}, \frac{T_2\epsilon_2}{8\bar{d}}\right\} .\nonumber
    \end{align}
\end{lemma}
 
With Lemma~\ref{lemma:lbnew} in hand, noting that $\ell\geq \frac{\bar{d}-\tilde{h}}{4}\geq \frac{\bar{d}}{8}$, we  obtain that
\begin{align}
     \frac{1}{|\mathcal{Y}|}\sum_{j=h+1}^{h+\ell}\sum_{\theta\in \mathcal{Y} }R_j(\theta)\geq \frac{1}{32(M+1)}\min\left\{\min_{k\in [M],k\neq 2} T_k \epsilon_k, \frac{T_2\epsilon_2}{8}\right\} .\nonumber
\end{align}

Noting that the expected regret under parameter $\theta$ is at least $\mathrm{Regret}_{\theta,D}(T) \geq \sum_{j=\tilde{h}+1}^{\tilde{h}+l}R_j(\theta)$, we learn that
\begin{align}
    \frac{1}{|\mathcal{Y}|}\sum_{\theta \in \mathcal{Y}} \mathrm{Regret}_{\theta,D}(T)  & \geq  \frac{1}{|\mathcal{Y}|}\sum_{j=\tilde{h}+1}^{\tilde{h}+\ell}\sum_{\theta\in \mathcal{Y} }R_j(\theta) \geq  \Omega\left( \gamma/M\right)
\end{align}

Then there exists some $\theta^*\in \mathcal{Y}$ such that $\mathrm{Regret}_{\theta^*,D}(T)\geq \Omega(\gamma/M)$. The proof is completed by definition of $\gamma$.

\end{proof}

It remains to prove Lemma~\ref{lemma:lbnew}.
\begin{proof}[Proof of Lemma~\ref{lemma:lbnew}]
Recall the definition of $\mathcal{E}_k$ for $1\leq k \leq M$. We further define $\mathcal{H}_k$ as below.
\begin{itemize}
\item $\mathcal{H}_1 = \bar{\Omega}$, where $\bar{\Omega}$ is the entire probability space;
\item  $\mathcal{H}_2= \{ n_j\leq \max\{\frac{6T_1}{(h+1)(\bar{d}-h)M},6\ln(dM)  \}, \forall  h+1\leq j \leq h+l         \}$, where $n_j$ denotes the number of times the $j$-th arm is taken in the first batch;
\item $\mathcal{H}_k = \{ n^k_i \leq \max\{\frac{6MT_{k-1}}{b} ,6\ln(dM)\}, \forall 1\leq i\leq b         \}$ for $3\leq k \leq M$, where $n^k_i = \sum_{t=1}^{\mathcal{T}_{k-1}}\mathbb{I}[X_t = X^{k,i}]$, i.e., the number of times when the context is $X^{k,i}$.
\end{itemize}

Since the $\pi^1$ is independent of $\theta$, then $\{\mathcal{H}_k\}_{k=1}^M$ is also independent of $\theta$. Using Lemma~\ref{lemma:con}, it is easy to show that $\mathrm{Pr}_{\theta}[\mathcal{H}_k] \geq 1-\frac{1}{10M}$ for any $1\leq k \leq M$ and any $\theta \in \mathcal{Y}$.  

Let $p_j^k(\theta) = \mathbb{E}_{\theta, \bx\sim \pi_k(X^{k,\left\lceil j/a  \right\rceil})}\left[ \mathbb{I}[\bx_j \theta^k_j <0] |\mathcal{E}_k \cap \mathcal{H}_k\right]$ for $k = 1$ or $3\leq k \leq M$. For $k = 2$, we define $p_j^k(\theta) =\mathbb{E}_{\theta,\bx\sim \pi_2([\tilde{h}]\cap \{\tilde{h}+j\})}\left[ \mathbb{I}[\bx = j, \theta_j^2<0]+ \mathbb{I}[\bx\neq j, \theta_j^2>0]  |\mathcal{E}_2\cap \mathcal{H}_2\right]$. 

Recall that $\mathcal{F}_{t}$ denotes the event field over the first $t$ steps. 
For fixed $\theta\in \mathcal{Y}$, we denote $\theta^k_j$ be the vector in $\mathcal{Y}$ by reflecting the $j$-th dimension of $\theta^k$. Using Pinsker's inequality (Lemma~\ref{lemma:pinsker}), and noting that $\mathcal{E}_k\cap \mathcal{H}_k$ are measurable with respect to $\mathcal{F}_{\mathcal{T}_{k-1}}$. 
\begin{align}
    p_j^k(\theta) + p_j^k(\theta^k_j)\geq  1-  \sqrt{\frac{1}{2} D_{\mathrm{KL}}\left(\mathrm{Pr}^{\mathcal{T}_{k-1}}_{\theta}[\cdot |\mathcal{E}_k\cap \mathcal{H}_k],\mathrm{Pr}^{\mathcal{T}_{k-1}}_{\theta^k_j}[\mathcal{E}_k\cap \mathcal{H}_k] \right)} \nonumber
\end{align}
for $k = 1$ or $3\leq k \leq M$, where $\mathrm{Pr}^{t}[\cdot]$ denotes probability distribution over the first $t$ steps. By definition of $\mathcal{E}_k$ and $\mathcal{H}_k$, we have that
\begin{align}
    D_{\mathrm{KL}}\left(\mathrm{Pr}^{\mathcal{T}_{k-1}}_{\theta}[\cdot |\mathcal{E}_k\cap \mathcal{H}_k],\mathrm{Pr}^{\mathcal{T}_{k-1}}_{\theta^k_j}[\mathcal{E}_k\cap \mathcal{H}_k] \right)\leq \max\left\{\frac{6MT_{k-1}}{b}, 6\ln(dM)\right\}\frac{4\epsilon_k^2}{a^2}\leq \frac{1}{8}. \nonumber
\end{align}
As a result, we have that 
\begin{align}
p_j^k(\theta) + p_j^k(\theta^k_j)\geq \frac{3}{4}.\label{eq:pp0}
\end{align}

Using similar arguments, and noting that $n_j \leq \max\{\frac{6T_1}{(\tilde{h}+1)(\bar{d}-\tilde{h})M},6\ln(dM)  \}$ conditioned on $\mathcal{H}_2$,   we have that
\begin{align}
    p_j^2(\theta) + p_j^2(\theta_j^k) \geq 1- \sqrt{\frac{1}{2} D_{\mathrm{KL}}(\mathrm{Pr}^{T_1}_{D_{2}, \theta) }[\cdot |\mathcal{H}_2\cap \mathcal{E}_2 ] ,  \mathrm{Pr}^{T_1}_{D_{2}, \theta^2_j }[\cdot|\mathcal{H}_2\cap \mathcal{E}_2 ] )}\geq \frac{3}{4}.\label{eq:pp1}
\end{align}

Let $\zeta_k = \frac{1}{\bar{d}}T_k\epsilon_k$ for $k=1$ or $3\leq k \leq M$ and $\zeta_2 = \frac{1}{8\bar{b}}T_2\epsilon_2$.
 By the definition of $R_j(\theta)$, and noting that $\ell\geq \frac{\bar{d}}{8}$, we have that 
 \begin{align}
    &  R_j(\theta)\geq  \sum_{k=1}^M \mathrm{Pr}_{\theta}[\mathcal{E}_k\cap\mathcal{H}_k]p_j^k(\theta) \zeta_k \nonumber
    \\ & R_j(\theta^k_j)= \sum_{k=1}^M \mathrm{Pr}_{\theta^k_j}[\mathcal{E}_k\cap\mathcal{H}_k]p_j^k(\theta_j^k)\zeta_k.\nonumber
 \end{align}
using Pinsker's inequality (Lemma~\ref{lemma:pinsker}), for any $1\leq k \leq M$, it holds that
\begin{align}
\left|    \mathrm{Pr}_{\theta}[\mathcal{E}_k\cap \mathcal{H}_k] -\mathrm{Pr}_{\theta_k^j}[\mathcal{E}_k\cap \mathcal{H}_k]\right| \leq  \sqrt{\frac{1}{2} D_{\mathrm{KL}}\left(\mathrm{Pr}^{\mathcal{T}_{k-1}}_{\theta}[\cdot |\mathcal{E}_k\cap \mathcal{H}_k],\mathrm{Pr}^{\mathcal{T}_{k-1}}_{\theta^k_j}[\mathcal{E}_k\cap \mathcal{H}_k] \right)} \leq \frac{1}{2M}.\nonumber \nonumber
\end{align}

Therefore, we have that
 \begin{align}
    R_j(\theta)+ \sum_{k=1}^M R_j(\theta_j^k)\geq \sum_{k=1}^M \max\left\{\frac{3}{4}\mathrm{Pr}_{\theta}[\mathcal{E}_k \cap \mathcal{H}_k]  - \frac{1}{2M},0\right\} \zeta_k \geq  \frac{1}{8}\min_{k}\zeta_k.
 \end{align}
 Taking sum over $\mathcal{Y}$, we have that 
 \begin{align}
     \frac{1}{|\mathcal{Y}|}\sum_{\theta \in \mathcal{Y}}\sum_{k =1 }^M \mathrm{Pr}_{\theta}[\mathcal{E}_k\cap \mathcal{H}_k]p_j^k(\theta) T_k \epsilon_k \geq \frac{1}{8(M+1)}\min_{k}\zeta_k.\nonumber  \end{align}
 The proof is completed.

\end{proof}

\section{Extension to the Context-Aware Case}\label{sec:context-aware}

\subsection{The Batch Learning Algorithm}
In the context-aware case, we can observe the context  before determining the  policy. For each $1\leq k \leq M$ we can learn $\pi^k$ using $\{X_t\}_{t = \mathcal{T}_{k-1}+1}^{\mathcal{T}_{k}}$. In particular, by observing the context of the first batch, we can learn a design policy which is better than $\pi^{\mathtt{G}}$. As a consequence, the final regret upper bound would be smaller than the context-blind case.

The algorithm is presented in Algorithm~\ref{alg:context-aware}. At the start of the $k$-th batch, we observe the context $\{X_t\}_{t=\mathcal{T}_{k-1}+1}^{\mathcal{T}_k}$ and then play elimination by previous information to get the eliminated context $\{X_t^{(k)}\}_{t=\mathcal{T}_{k-1}+1}^{\mathcal{T}_k}$ . After that, we run $\mathtt{ExplorationPolicy}$ with input as $\{X_t^{(k)}\}_{t=\mathcal{T}_{k-1}+1}^{\mathcal{T}_k}$ to search the near-optimal design policy. Since the $\{X_t\}_{t=\mathcal{T}_{k-1}+1}^{\mathcal{T}_k}$ is independent of all the histories before the $k$-th batch, there is no need to break the samples in the $k$-th batch to two parts to keep the independence.

%The algorithm is similar to Algorithm~\ref{alg:main}. 

\begin{algorithm}
\caption{{\tt Context-Aware Batch Learning}} \label{alg:context-aware}
\begin{algorithmic}[1]
\STATE{\textbf{Initialize:} $\lambda\leftarrow 10/T$;  $\Lambda_0 \leftarrow \lambda \mathbf{I};$ $\hat{\btheta}_0\leftarrow \boldsymbol{0}$;}

\FOR{$k=1,2,\ldots,M$}
\STATE{Observe the context $\{X_t\}_{t=\mathcal{T}_{k-1}+1}^{\mathcal{T_k}}$;}
\FOR{$t= \mathcal{T}_{k-1}+1, \mathcal{T}_{k-1}+2, \ldots , \mathcal{T}_k$}
\STATE{$X_t^{(k)}\leftarrow \mathcal{E}(X_t,\{\Lambda_{i},\hat{\btheta}_{i}\}_{i=1}^{k-1} )$; }
\ENDFOR
\STATE{$\pi_{k}\leftarrow \mathtt{ExplorationPolicy}( \{X_t^{(k)}   )\}_{t=\mathcal{T}_{k-1}+1}^{\mathcal{T}_k} )$; }
\FOR{$t= \mathcal{T}_{k-1}+1, \mathcal{T}_{k-1}+2, \ldots , \mathcal{T}_k$}
\STATE{Play the arm with the feature vector $\boldsymbol{y}_t \sim \pi_{k}(X^{(k)}_t)$ and receive the reward $r_t$;}
\ENDFOR
\STATE{$\Lambda_{k}\leftarrow \lambda \mathbf{I} +  \sum_{t=\mathcal{T}_{k-1}+1}^{\mathcal{T}_{k-1}+T_k/2} \boldsymbol{y}_t \boldsymbol{y}_t^{\top}$; $\hat{\btheta}_k \leftarrow \Lambda_k^{-1}  \sum_{t=\mathcal{T}_{k-1}+1}^{\mathcal{T}_{k-1}+T_k/2} r_t \boldsymbol{y}_t$; }
\ENDFOR
\end{algorithmic}
\end{algorithm}

\medskip

\paragraph{Regret Analysis.} Following similar arguments in Section~\ref{sec:proofthm1}, we have that:
\begin{theorem}\label{thm:context_aware}
Let $h = \min\{d,K\}$. For any $T\geq d$ and $M\geq 1$, Algorithm~\ref{alg:context-aware} may use at most $M$ batches and its regret is bounded  by 
\begin{align}
  R_T\leq  O\left(\mathrm{poly}\ln(Td)  T^{\frac{1}{2-2^{-(M-1)}}} \cdot (d\log_2(K))^{\frac{1-2^{-(M-1)}}{2-2^{-(M-1)}}}       \right) .
\end{align}
\end{theorem}

\begin{proof}
We follow the notations in  Section~\ref{sec:proofthm1}. 
Invoking Lemma~\ref{lemma:design} with $m=n = T_{k-1}$ for $2\leq k \leq M$, we have that
\begin{align}
\E\left[\min\left\{ \sqrt{\max_{\bx\in X_t^{(k-1)}}\bx^{\top}\Lambda_{k-1}^{-1}   \bx} , \sqrt{L} \right\}    \right] \leq \sqrt{\frac{1}{T_{k-1}}}\cdot O\left(\sqrt{d\ln\left(Td\right)}\right) +\frac{\sqrt{L}}{T_{k-1}}\cdot O\left(d\ln\left(Td\right)\right)  + 3\delta .\nonumber
\end{align}
As a result, the expected regret of the $k$-th batch is bounded by 
\begin{align}
   \sqrt{\ln(TKd/\delta)}\cdot  O\left(           \sqrt{\frac{T_k^2}{T_{k-1}}}\cdot O\left(\sqrt{d\ln\left(Td\right)}\right) +\frac{\sqrt{L}T_k}{T_{k-1}}\cdot O\left(d\ln\left(Td\right)\right)          +T_k\delta\right).\nonumber
\end{align}
Besides, the regret in the first batch is simply bounded by $O(T_1)$. Let $$\gamma = T^{\frac{1}{2-2^{-(M-1)}}} \cdot (d\ln(KTd)\ln(Td))^{\frac{1-2^{-(M-1)}}{2-2^{-(M-1)}}} .$$ By setting that $\delta = \frac{1}{T^3}$, $T_1 = \gamma$ and 
\begin{align}
T_k  =\sqrt{T_{k-1}}\cdot \frac{\gamma}{\sqrt{d\ln(KTd)\ln(Td)}} \label{eq:defineTnew} 
\end{align}
for $2\leq k \leq M$, we finish the proof.

\end{proof}

\subsection{The Lower Bound} 
We present a nearly matching lower bound as below.
\begin{theorem}\label{thm:lbnew}
Fix any $K\geq 2$, $T\geq d$, and any batch number $M \geq 1$. For any learning algorithm with batch complexity $M$, there exists a linear contextual bandit problem instance with dimension $d$ and $K$ arms, such that the expected regret $R_{T}$ is at least
\begin{align}
\Omega\left(      \frac{1}{\mathrm{poly}\ln(Td)} T^{\frac{1}{2-2^{-(M-1)}}} \cdot (d\ln(K))^{\frac{1-2^{-(M-1)}}{2-2^{-(M-1)}}}       \right)    . \nonumber
\end{align}
\end{theorem}
\begin{proof}[Proof sketch.]
This theorem could be proved using the same arguments as in proof of Theorem~\ref{thm:lb}. In particular, we only use the \textbf{Case \uppercase\expandafter{\romannumeral2}} construction. Below we present the value $\{T_k\}_{k=1}^M$ and $\{\epsilon_k\}_{k=1}^M$, and the detailed proof is omitted.

Define $\gamma =T^{\frac{1}{2-2^{-(M-1)}}} \cdot \left(\frac{d\log_2(K)}{M\ln(dM)}\right)^{\frac{1-2^{-(M-1)}}{2-2^{-(M-1)}}}   $. Let $T_1 = \gamma$ and $T_k = \gamma\cdot\sqrt{\frac{T_{k-1}M\ln(dM)}{d\log_2(K)}}$ for $2\leq k \leq M$. 
Let $\epsilon_1 =\frac{1}{100} $ and $\epsilon_k = \sqrt{\frac{d\log_2(K)}{M T_{k-1}\ln(dM) }}$ for $2\leq k \leq M$.  Following the analysis in Section~\ref{sec:low}, that the minimax regret is at least $\Omega(\gamma/M)$, and the conclusion follows.
\end{proof}

\section{Conclusion}\label{sec:conclusion}
In this paper, we study the batch linear contextual linear bandit problem with stochastic context. When the number of batches is limited by $M$, for any $T$ and $d$, we achieve matching upper and lower bounds for the regret (up to logarithmic factors) in both context-blind and context-aware settings. We adopt the reward-free LinUCB proposed in \citep{ruan2020linear} to achieve our learning goal. In the algorithm design and analysis, we highlight two key techniques: the scaled-and-clipped update rule and the matrix concentration inequality with dynamic upper bounds. We believe these techniques could help design and analyze batch algorithms for other online learning and decision-making problems with linear reward structures (e.g., the linear Markov Decision Processes).

\section*{Acknowledgement}
The authors would like to thank Joel A.~Tropp for the helpful discussions on matrix concentration inequalities.

\bibliographystyle{plainnat}
\bibliography{ref}

\newpage
\appendix

\section{Technical Lemmas}

%For distributed optimal design, we expand Lemma~\ref{lemma_od} as below.
%\begin{lemma}
%Let $X$ be a bounded stochastic subset of $\mathbb{R}^d$ with distribution $D$. Then there exists a distribution function $\mathcal{K}(\cdot)$ such that $\mathcal{K}(X)$ is supported on $X$ and for any $\epsilon>0$
%%\begin{align}
 %   \mathbb{E}_{X\sim D}\left[\max_{\bx\in X}\bx^{\top} \left(\epsilon\mathbf{I} + \mathbb{E}_{X'\sim D,\by\sim \mathcal{K}(X')}\by\by^{\top} \right)^{-1} \bx  \right]\leq d.\label{eq_dod}
%\end{align}
%\end{lemma}
%\begin{definition}\label{def_local_od}
%	The local optimal design policy is a mapping  $\pi^{\mathtt{G}}(X):X\rightarrow \mathcal{K}_{X}$ which maps a subset $X\subset \mathbb{R}^d$ to a distribution $\mathcal{K}_{X}$ supported on $X$, such that for any $\epsilon>0$, it holds that
%	\begin{align}
%	\max_{x\in X}x^{\top} \left( \epsilon \mathbf{I} +\mathbb{E}_{y\sim \mathcal{K}_{X}}yy^{\top} \right)^{-1}x \leq 2d.\label{eq_od_relax}
%	\end{align}
%We highlight that the distribution $\mathcal{K}_{X}$ might be non-unique. It suffices to choose arbitrary one satisfying the condition above.
%\end{definition}

\begin{lemma}[Pinsker's Inequality] \label{lemma:pinsker} Let $D_{\mathrm{KL}}(P||Q)$ denote the  KL-divergence between $P$ and $Q$. Let $D_1(P,Q):=|P-Q|_{1}$ denote the $L_1$ distance between $P$ and $Q$. 
For any two distribution $P,Q$, we have that
\begin{align}
      \int |dP -dQ|  \leq  \min\{ \sqrt{\frac{1}{2}D_{\mathrm{KL}}(P\|Q)} , \sqrt{\frac{1}{2}D_{\mathrm{KL}}(Q\|P)} \},
\end{align}
where $D_{\mathrm{KL}}$ denotes the KL-divergence.
\end{lemma}

\begin{lemma}\label{lemma:con}
Let $X_1,X_2,\ldots$ be a sequence of random variables taking value in $[0,l]$. Define $\mathcal{F}_k =\sigma(X_1,X_2,\ldots,X_{k-1})$ and $Y_k = \mathbb{E}[X_k|\mathcal{F}_k]$ for $k\geq 1$. For any $\delta>0$, we have that
\begin{align}
& \mathbb{P}\left[ \exists n, \sum_{k=1}^n X_k \leq  3\sum_{k=1}^n Y_k+ l\ln(1/\delta)\right]\leq \delta\nonumber
\\  & \mathbb{P}\left[  \exists n,  \sum_{k=1}^n Y_k \geq 3\sum_{k=1}^n X_k + l\ln(1/\delta)  \right]    \leq \delta .\nonumber 
\end{align}
\end{lemma}

\begin{proof} Let $t\in [0,1/l]$ be fixed.
Consider to bound $Z_k:=\mathbb{E}[\exp(t\sum_{k'=1}^k(X_{k'}-3Y_{k'})  )]$. By definition, we have that
\begin{align}
    \mathbb{E}[Z_k|\mathcal{F}_k]  & =\exp(t\sum_{k'=1}^{k}(X_{k'}-3Y_{k'})) \mathbb{E}\left[t(X_{k}-3Y_{k})\right] \nonumber 
    \\ & \leq \exp(t\sum_{k'=1}^{k}(X_{k'}-3Y_{k'}))\exp(3Y_{k})\cdot \mathbb{E}[1+tX_k+2t^2X^2_{k}]\nonumber 
    \\ & \leq \exp(t\sum_{k'=1}^{k}(X_{k'}-3Y_{k'}))\exp(3Y_{k})\cdot \mathbb{E}[1+3tX_k]\nonumber 
    \\ & = \exp(t\sum_{k'=1}^{k}(X_{k'}-3Y_{k'}))\exp(3Y_{k})\cdot (1+3tY_k)\nonumber 
    \\ & \leq \exp(t\sum_{k'=1}^{k}(X_{k'}-3Y_{k'}))\nonumber
    \\ & =Z_{k-1},\nonumber 
\end{align}
where the second line is by the fact that $e^x\leq 1+x+2x^2$ for $x\in [0,1]$. 
Define $Z_0 = 1$
Then $\{Z_{k}\}_{k\geq 0}$ is a super-martingale with respect to $\{\mathcal{F}_{k}\}_{k\geq 1}$. Let $\tau$ be the smallest $n$ such that $\sum_{k=1}^n X_k - 3\sum_{k=1}^n Y_k >l\ln(1/\delta)$.
It is easy to verify that $Z_{\min\{\tau,n\}}\leq \exp(tl\ln(1/\delta)+tl)<\infty$. Choose $t=1/l$.
By the optimal stopping time theorem, we have that
\begin{align}
 & \mathbb{P}\left[ \exists n\leq N, \sum_{k=1}^n X_k \geq  3\sum_{k=1}^n Y_k + l\ln(1/\delta)\right]\nonumber 
 \\ & =\mathbb{P}\left[\tau \leq N\right]\nonumber 
 \\ & \leq \mathbb{P}\left[ Z_{\min\{\tau,N\}}\geq \exp(tl\ln(1/\delta))  \right]\nonumber 
 \\ & \leq \frac{\mathbb{E}[Z_{\min\{\tau,N\}}]}{\exp(tl\ln(1/\delta))}\nonumber 
 \\ & \leq \delta. \nonumber
\end{align}

Letting $N\to \infty$, we have that
\begin{align}
& \mathbb{P}\left[ \exists n, \sum_{k=1}^n X_k \leq  3\sum_{k=1}^n Y_k+ l\ln(1/\delta)\right]\leq \delta.\nonumber
\end{align}
Considering $W_k = \mathbb{E}[\exp(t\sum_{k'=1}^k (Y_k/3-X_k))]$, using similar arguments  and choosing $t=1/(3l)$, we have that
\begin{align}
 & \mathbb{P}\left[  \exists n,  \sum_{k=1}^n Y_k \geq 3\sum_{k=1}^n X_k + l\ln(1/\delta)  \right]    \leq \delta .\nonumber 
\end{align}
The proof is completed.
\end{proof}
\section{Omitted Lemmas and Proofs in Section~\ref{sec:alg}}
\subsection{Statement and Proof of Lemma~\ref{lemma:bandit_ci}}\label{sec:pfci}

The following lemma is a similar version of Lemma 31 in \citep{ruan2020linear} that analyzes the size of the confidence interval by the ridge regression. Note that since we only assume $\|\btheta\|_\infty \leq 1$ instead of the upper bound on the Euclidean norm, the calculation is slightly different.

\begin{lemma} \label{lemma:bandit_ci}

Given $\btheta,\bx_1,\bx_2,\ldots,\bx_n\in \mathbb{R}^d$ 
such that $\|\btheta\|_{\infty}\leq 1$ for all $i\in [n]$, let $r_i=\bx_i^{\top}\btheta+\epsilon_i$ where $\{\epsilon_i\}_{i=1}^n$ are independent sub-Gaussian random variable with variance proxy $1$. Let $\Lambda = \lambda \mathbf{I}+\sum_{i=1}^n \bx_i \bx_i^{\top}$ and $\hat{\btheta}= \Lambda^{-1}\sum_{i=1}^n r_i \bx_i$. For any $\bx\in \mathbb{R}^{d}$ and any $\gamma>0$, we have that
\begin{align}
\Pr\left[|\bx^{\top}(\btheta-\hat{\btheta})| > (\gamma + \sqrt{d\lambda})\sqrt{\bx\Lambda^{-1}\bx} \right]\leq 2e^{-\frac{\gamma^2}{2}}. \nonumber
\end{align}
\end{lemma}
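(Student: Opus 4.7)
The plan is to run the standard ridge-regression bias/variance decomposition, with the only nonstandard piece being how we bound the bias term under the $\ell_\infty$ constraint on $\btheta$ instead of the usual $\ell_2$ constraint. First I would compute the residual of $\hat\btheta$. Plugging $r_i = \bx_i^\top\btheta + \epsilon_i$ into the definition of $\hat\btheta = \Lambda^{-1}\sum_{i=1}^n r_i \bx_i$ and using $\Lambda = \lambda\mathbf{I} + \sum_i \bx_i\bx_i^\top$, one obtains the identity
\begin{align*}
\hat\btheta - \btheta \;=\; \Lambda^{-1}\!\sum_{i=1}^n \epsilon_i \bx_i \;-\; \lambda\,\Lambda^{-1}\btheta,
\end{align*}
so that $\bx^\top(\hat\btheta-\btheta)$ splits cleanly into a stochastic (noise) term and a deterministic (regularization bias) term.

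For the stochastic term I would invoke a standard sub-Gaussian tail bound on the linear combination $\sum_i (\bx^\top\Lambda^{-1}\bx_i)\epsilon_i$. Its variance proxy equals the sum of squared coefficients, which telescopes:
\begin{align*}
\sum_{i=1}^n (\bx^\top\Lambda^{-1}\bx_i)^2 = \bx^\top\Lambda^{-1}\Big(\sum_i \bx_i\bx_i^\top\Big)\Lambda^{-1}\bx \;\preccurlyeq\; \bx^\top\Lambda^{-1}\bx,
\end{align*}
using $\sum_i \bx_i\bx_i^\top \preccurlyeq \Lambda$. Hence with probability at least $1 - 2e^{-\gamma^2/2}$,
\begin{align*}
\bigl|\bx^\top\Lambda^{-1}\!\sum_i \epsilon_i\bx_i\bigr| \;\leq\; \gamma\sqrt{\bx^\top\Lambda^{-1}\bx}.
\end{align*}

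The main (and only slightly non-routine) obstacle is the bias term $\lambda\,\bx^\top\Lambda^{-1}\btheta$. By Cauchy--Schwarz in the $\Lambda^{-1}$ inner product,
\begin{align*}
|\lambda\,\bx^\top\Lambda^{-1}\btheta| \;\leq\; \lambda\sqrt{\bx^\top\Lambda^{-1}\bx}\cdot\sqrt{\btheta^\top\Lambda^{-1}\btheta}.
\end{align*}
Since $\Lambda \succcurlyeq \lambda\mathbf{I}$, we have $\btheta^\top\Lambda^{-1}\btheta \leq \|\btheta\|_2^2/\lambda$, and here is where the $\ell_\infty$ hypothesis enters: $\|\btheta\|_2^2 \leq d\|\btheta\|_\infty^2 \leq d$. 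Thus $\lambda\sqrt{\btheta^\top\Lambda^{-1}\btheta}\leq \sqrt{d\lambda}$, giving a deterministic bound of $\sqrt{d\lambda}\sqrt{\bx^\top\Lambda^{-1}\bx}$. Combining the two estimates via the triangle inequality yields the claimed bound. The whole argument is a short and standard calculation; the only thing to be careful about is the $\sqrt{d}$ factor, which replaces the usual $\|\btheta\|_2 \leq 1$ assumption and is exactly what produces the $\sqrt{d\lambda}$ in the statement.
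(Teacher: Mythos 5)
Your proposal is correct and follows essentially the same route as the paper: the same residual decomposition $\hat\btheta-\btheta=\Lambda^{-1}\sum_i\epsilon_i\bx_i-\lambda\Lambda^{-1}\btheta$, the same sub-Gaussian tail bound with variance proxy $\bx^\top\Lambda^{-1}\bx$, and the same use of $\|\btheta\|_2\le\sqrt{d}$ together with $\Lambda\succcurlyeq\lambda\mathbf{I}$ to get the $\sqrt{d\lambda}$ bias term. The only cosmetic difference is that you apply Cauchy--Schwarz in the $\Lambda^{-1}$-weighted inner product while the paper applies it in the Euclidean inner product and then bounds $\bx^\top\Lambda^{-2}\bx\le\bx^\top\Lambda^{-1}\bx/\lambda$; both give the identical estimate.
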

\begin{proof}
Direct computation gives that
\begin{align}
    |\bx^{\top}(\btheta - \hat{\btheta})| &= \left| \bx^{\top} \left( \Lambda^{-1}\sum_{i=1}^n\bx_i(\bx_i^{\top}\btheta+\epsilon_i)-\btheta \right) \right| \nonumber\\
    & = \left|\bx^{\top} \left( \Lambda^{-1}\sum_{i=1}^n \bx_i\epsilon_i + \Lambda^{-1}(\Lambda-\lambda \mathbf{I})\btheta -\btheta \right)  \right| \nonumber
    \\& = \left|\bx^{\top}\Lambda^{-1}\left( \sum_{i=1}^n \bx_i \epsilon_i - \lambda\btheta \right)\right| \nonumber
    \\ & \leq \lambda \left|\bx^{\top}\Lambda^{-1}\btheta \right|+\left| \sum_{i=1}^n \bx^{\top}\Lambda^{-1}\bx_i \epsilon_i \right|.\nonumber
\end{align}
For the first term, by Cauchy-Schwarz and noting that $\Lambda \succcurlyeq \lambda \mathbf{I}$, we have that
\begin{align}
    \lambda \left|\bx^{\top}\Lambda^{-1}\btheta \right|\leq \lambda\sqrt{d}\cdot \sqrt{\bx^{\top}\Lambda^{-2}\bx} \leq \sqrt{\lambda d \bx^{\top}\Lambda^{-1}\bx}.\label{eq:ci2} 
\end{align}
For the second term, by sub-Gaussian concentration inequalities, we have that
\begin{align}
    \mathrm{Pr}\left[  \left|\sum_{i=1}^n \bx^{\top}\Lambda^{-1}\bx_i \epsilon_i   \right| >\gamma \sqrt{\bx^{\top}\Lambda^{-1}\bx} \right] \leq 2e^{\frac{-\gamma^2}{2}}.\label{eq:ci3}
\end{align}
The conclusion follows by combining \eqref{eq:ci2} and \eqref{eq:ci3}.
\end{proof}

%Recall that $\pi$ is determined by $\{(W_j,\tau_j)\}_{j=1}^n$. Given a context $X$, we define $\bff(X)$ to be the stochastic vector such that with probability $p_{j}:=\frac{|\tau_j|}{\sum_{1\leq j'\leq n}|\tau_{j'}|}$, $\bff(X)$ maximize $\bx^{\top}W_j^{-1}\bx$. 
%That is, for each $1\leq j\leq n$, letting $i= \arg\max_{i'}\bx_{i'}^{\top}W_j^{-1}\bx_{i'}$, with probability $p_j$, 
%$\bff(X)=\bx_{i}$ if $\bx_i^{\top}W_{j}^{-1}\bx_i \leq L$ and $\bff(X)=\sqrt{\frac{L}{\bx_i^{\top}W_j^{-1}\bx_i}} \bx_i$ otherwise.

\subsection{Statement and Proof of the Elliptical Potential Lemma}\label{app:pf-epl}

\begin{lemma}\label{lemma:epl}
Let $\bx_{1},\bx_{2},\dots,\bx_{n}$ be a sequence of vectors in $\mathbb{R}^d$ such that $\|x_i\|_2\leq 1$. Let $\Lambda_0 = A$ be a positively definite matrix and $\Lambda_i = \Lambda_0 +\sum_{j=1}^i \bx_j\bx_j^{\top}$. If $\bx_i^{\top}\Lambda_{i-1}^{-1}\bx_i \leq 1$ for all $1\leq i \leq n$,
it then holds that
\begin{align}
    \sum_{i=1}^n \bx_i^{\top}\Lambda_{i-1}^{-1}\bx_i \leq 2\ln\left( \frac{\det(\Lambda_n)}{\det(\Lambda_0)}\right) .\nonumber
\end{align}
\end{lemma}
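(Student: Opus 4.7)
The plan is to invoke the standard matrix determinant identity for rank-one updates and then compare $x$ with $\ln(1+x)$ on $[0,1]$. Specifically, since $\Lambda_i = \Lambda_{i-1} + \bx_i \bx_i^\top$, I would first write
\[
\det(\Lambda_i) = \det(\Lambda_{i-1})\cdot \det\!\left(\mathbf{I} + \Lambda_{i-1}^{-1/2}\bx_i\bx_i^\top \Lambda_{i-1}^{-1/2}\right),
\]
and then observe that the second factor is the determinant of an identity-plus-rank-one matrix, whose only non-unit eigenvalue is $1 + \bx_i^\top \Lambda_{i-1}^{-1}\bx_i$. This gives the telescoping identity
\[
\ln\frac{\det(\Lambda_n)}{\det(\Lambda_0)} = \sum_{i=1}^n \ln\!\left(1 + \bx_i^\top \Lambda_{i-1}^{-1}\bx_i\right).
\]

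Next I would use the elementary inequality $\ln(1+x) \geq x/2$ for $x \in [0,1]$, which is verified by checking endpoints and noting that the derivative $1/(1+x) - 1/2$ is nonnegative precisely on $[0,1]$. Applying this with $x = \bx_i^\top \Lambda_{i-1}^{-1}\bx_i$, which by hypothesis lies in $[0,1]$, yields
\[
\sum_{i=1}^n \bx_i^\top \Lambda_{i-1}^{-1}\bx_i \;\leq\; 2\sum_{i=1}^n \ln\!\left(1 + \bx_i^\top \Lambda_{i-1}^{-1}\bx_i\right) \;=\; 2\ln\frac{\det(\Lambda_n)}{\det(\Lambda_0)},
\]
which is the claimed bound.

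There is no real obstacle here: both steps are textbook. The only thing to be careful about is ensuring the rank-one determinant identity is applied correctly (one could equivalently use the matrix determinant lemma $\det(A + uv^\top) = \det(A)(1 + v^\top A^{-1} u)$ directly with $u = v = \bx_i$), and that the hypothesis $\bx_i^\top \Lambda_{i-1}^{-1}\bx_i \leq 1$ is exactly what licenses the $\ln(1+x) \geq x/2$ comparison. The assumption $\|\bx_i\|_2 \leq 1$ in the lemma statement is not actually used in the bound itself; it is only needed implicitly to ensure the setup is meaningful (and together with a lower bound on $\lambda_{\min}(\Lambda_0)$ would automatically imply $\bx_i^\top \Lambda_{i-1}^{-1}\bx_i \leq 1$, which is the real working hypothesis).
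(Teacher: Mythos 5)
Your proposal is correct and follows essentially the same route as the paper's proof: the rank-one determinant identity $\det(\Lambda_i)=\det(\Lambda_{i-1})(1+\bx_i^{\top}\Lambda_{i-1}^{-1}\bx_i)$, telescoping, and the inequality $\ln(1+x)\geq x/2$ on $[0,1]$. Your added remarks on where each hypothesis is actually used are accurate but not needed for the argument.
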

\begin{proof}
Note that $\det(\Lambda_{i+1})=\det(\Lambda_i) (1+\bx_{i+1}^{\top}\Lambda_i^{-1}\bx_{i+1} )$. Since $\ln(1+x)\geq \frac{x}{2}$ when $0\leq x \leq 1$, we have that
\begin{align}
    \ln(\det(\Lambda_{i+1}))- \ln(\det(\Lambda_i)) \geq \frac{1}{2}\bx_{i+1}^{\top}\Lambda_i^{-1}\bx_{i+1},\nonumber
\end{align}
which implies that 
\[  \sum_{i=1}^n \bx_i^{\top}\Lambda_{i-1}^{-1}\bx_i \leq 2 \ln\left( \frac{\det(\Lambda_n)}{\det(\Lambda_0)}\right).   \]
\end{proof}

\section{Omitted Proofs in Section~\ref{sec:pfdy}} \label{app:app-pfdy}

\subsection{Proof of Fact~\ref{fact:matrix-loenwer-order}}\label{sec:pffact}

\begin{proof}

We first show that $A \preccurlyeq B$ implies \eqref{eq:matrix-loenwer-order-2}. By $A\preccurlyeq B$, we have that $B-A \succcurlyeq 0$. Therefore, $A^{-1/2}B  A^{-1/2} - \mathbf{I}  = A^{-1/2} (B - A) A^{-1/2}$ is PSD, which means that $A^{-1/2}B  A^{-1/2}\succcurlyeq \mathbf{I}$. We then have $A^{1/2}B^{-1}  A^{1/2}\preccurlyeq \mathbf{I}$. This is because if we let $M = A^{-1/2}B  A^{-1/2} \succcurlyeq \mathbf{I}$ for convenience, for every vector $x$, it holds that \begin{align*}
    x^\top A^{1/2}B^{-1}  A^{1/2} x =  x^\top M^{-1} x 
     = (M^{-1/2} x)^\top (M^{-1/2} x) 
    \leq (M^{-1/2} x)^\top M (M^{-1/2} x) = x^\top x .
\end{align*}
This proves \eqref{eq:matrix-loenwer-order-2}, assuming $A \succcurlyeq B$.
    
We then show that \eqref{eq:matrix-loenwer-order-2} implies \eqref{eq:matrix-loenwer-order-1}. Note that   $A^{1/2}B^{-1}A^{1/2}-\mathbf{I}\preccurlyeq 0 $ implies that
$B^{-1}-A^{-1} = A^{-1/2} (A^{1/2}B^{-1}A^{1/2} - \mathbf{I}) A^{-1/2} \preccurlyeq 0 $, which leads to $B^{-1}\preccurlyeq A^{-1}$.

By symmetry, we can also prove that \eqref{eq:matrix-loenwer-order-1} implies $A \preccurlyeq B$, and therefore establishing the equivalence condition for both \eqref{eq:matrix-loenwer-order-1} and \eqref{eq:matrix-loenwer-order-2}.
    
Finally,  $A-B \preccurlyeq 0$ is equivalent to that $\mathbf{I} - B^{-1/2} AB^{-1/2} = B^{-1/2} (B - A) B^{-1/2}$ is PSD, which is also equivalent to that $B^{-1/2} AB^{-1/2}\preccurlyeq\mathbf{I}$, proving the equivalence for \eqref{eq:matrix-loenwer-order-3}.
\end{proof}

\subsection{Proof of Lemma~\ref{lemma:ac6} } \label{sec:proof-lemma-ac6}

\begin{proof}
Following the arguments in the proof of Lemma~\ref{lemma:accc4}, it suffices to prove that 
\begin{align}
    \mathbb{E}\left[\exp\left(Z_k^{-1/2}((1-\epsilon)Y_k-X_K)Z_k^{-1/2} \right) \Big|\mathcal{F}_k^+\right] \preccurlyeq \mathbf{I} \nonumber
\end{align}
holds for each $1\leq k \leq n$. Recall the definition of $U_k$ and $V_k$ in Lemma~\ref{lemma:acccc5}. In the analysis below, we will conditioned on $\mathcal{F}_k^{+}$. By the same arguments in the proof of Lemma~\ref{lemma:acccc5}, and noting that $-\mathbf{I}\preccurlyeq (1-\epsilon)V_k-U_k\preccurlyeq \mathbf{I}$, we have that
\begin{align}
    & \quad  \mathbb{E}\left[\exp\left(Z_k^{-1/2}((1-\epsilon)Y_k-X_k)Z_k^{-1/2} \right)\right] \nonumber\\ 
& =  \mathbb{E}\left[\exp\left((1-\epsilon)V_k-U_k\right)\right] \nonumber\\ 
&  = \mathbb{E}\left[ I +((1-\epsilon)V_k-U_k) +\frac{1}{2}((1-\epsilon)V_k-U_k)^2 + \sum_{i\geq 3}\frac{1}{i!} ((1-\epsilon)V_k-U_k)^i \right] \nonumber\\ 
& \leq \mathbb{E}\left[ I +((1-\epsilon)V_k-U_k) +\frac{1}{2}((1-\epsilon)V_k-U_k)^2 + \sum_{i\geq 3}\frac{1}{i!} ((1-\epsilon)V_k-U_k)^2 \right] \nonumber \\ 
& \preccurlyeq \mathbb{E}\left[\mathbf{I} +((1-\epsilon)V_k-U_k) +2((1-\epsilon)V_k-U_k)^2 \right] \nonumber\\ 
& \preccurlyeq \mathbb{E}\left[\mathbf{I} +((1-\epsilon)V_k-U_k) +4U_k^2 + 4(1-\epsilon)^2V_k^2 \right] \nonumber\\ 
& \preccurlyeq \mathbb{E}\left[\mathbf{I} +((1-\epsilon)V_k-U_k) +\frac{4\epsilon}{4(\epsilon^2+2\epsilon+2)}U_k + \frac{4(1-\epsilon)^2\epsilon}{4(\epsilon^2+2\epsilon+2)}V_k \right]\nonumber\\ 
&  \preccurlyeq \mathbf{I}.\nonumber
\end{align}
The proof is completed.
\end{proof}

%\section{Omitted Proofs in Section~\ref{sec:low}}\label{app:lb}

%\subsection{Proof of Lemma~\ref{lemma:lb1}}\label{appClb1}

%In this subsection, we construct counter examples to prove Lemma~\ref{lemma:lb1}. 
%For the first and third case, we use the classical construction for the lower bound of multi-armed bandit problem, where there is only one best arm with a slightly better mean reward. For the second case, our construction utilizes the fact that the initial design policy can not work efficiently for all context distributions when there is no prior knowledge.

%\subsection{The First Case}
%Let $\mu_i$ be the mean reward for the $i$-th for $i\leq i \leq h$. For some fixed $1\leq j^*\leq $, we define $\mu_i = 0$ for $i\neq j^*$ and $\mu_{j^*}=1$. Then we let the reward distribution be $\mathrm{Ber}(\mu_i)$ for the $i$-th arm (i.e., the Bernoulli distribution with mean $\mu_i$). Note that the policy $\pi$ for the first batch is determined without any observation.  Let $j^* =\arg\min_{j}\pi(j| [h])$. Then the expectation of regret  is at least $\frac{h-1}{h}$ per round and the regret in the first batch is at least $\frac{h-1}{h}T_1\geq \frac{1}{2}T_1$.

%\zihan{The problem of $\log(K)$: Current lower bound seems to be tight. A better algorithm for the second batch is very possible.}

\end{document}